\documentclass{article}
\usepackage{abc2027_conference,times}
\usepackage{amsmath,amssymb,amsthm,bm}
\usepackage{graphicx,booktabs,algorithm,algorithmic}
\usepackage{hyperref}
\usepackage{url,comment}
\newtheorem{theorem}{Theorem}[section]
\newtheorem{lemma}[theorem]{Lemma}
\newtheorem{proposition}[theorem]{Proposition}
\newtheorem{corollary}[theorem]{Corollary}
\theoremstyle{remark}

\theoremstyle{plain}
\input{Definitions}
\newcommand{\xstar}{\xb^{\star}}
\newcommand{\xhatstar}{\widehat{\xb}^{\star}}
\newcommand{\dlam}{d_{\lambda}}
\newcommand{\Slam}{\Sigmab_{\lambda}}
\newcommand{\smin}{\sigma_{\min}}
\newcommand{\smax}{\sigma_{\max}}

\renewcommand{\Ehat}{\widehat{\Eb}}
\renewcommand{\Rhat}{\widehat{\Rb}}
\renewcommand{\II}{\mathbf{1}}
\hypersetup{hidelinks,pdftitle={Fresh Sampling for Ridge Regression: Convergence and Adaptive Probabilities},pdfauthor={}}
\title{The Advantages of Fresh Sketching for Ridge Regression}

\author{Linkai Ma \\
Purdue University\\
\texttt{ma856@purdue.edu} \\
\And
Qilin Li \\
University of Wisconsin-Madison \\
\texttt{qli478@wisc.edu} \\
\And
Petros Drineas \\
Purdue University \\
\texttt{pdrineas@purdue.edu}\\
}
\abcfinalcopy
\begin{document}
\maketitle
\lhead{}
\renewcommand{\headrulewidth}{0pt}
\begin{abstract}

Over the past 25 years, sketching and sampling have become widely used tools for accelerating large-scale regression. In iterative randomized solvers, a basic design choice is whether to $\textit{reuse}$ the same sketch or draw $\textit{fresh}$ randomness at every step. For (under-constrained) iterative ridge regression with column sampling, whether fresh sketches offer provable advantages has remained open: $\textit{We show that they do.}$ Fresh sketching lets us analyze error only along the current residual solution, rather than uniformly over the entire Gram matrix. This directional view yields sharper convergence guarantees for leverage score and ridge leverage score sampling and, more importantly, leads to residual-aware sampling rules. By minimizing the variance of the relevant sketched matrix-vector product, we derive an oracle distribution and practical approximations to the oracle distribution, including a mixture sampling distribution with (somewhat weaker) convergence guarantees. Experiments on synthetic and real data, including ridge probes on Qwen2.5 representations, support our theory, showing substantially faster convergence.

\end{abstract}
\section{Introduction}\label{sec:intro}

Tools from Randomized Numerical Linear Algebra (RandNLA), such as sketching and sampling, are of paramount importance for accelerating large-scale regression and other matrix computations~\citep{halko2011finding,woodruff2014sketching,drineas2017lectures,martinsson2020randomized,derezinski2021determinantal, murray2023randomized, mahoney2011randomized,drineas2016randnla}. By replacing expensive matrix operations with smaller randomized approximations, these methods can substantially reduce computational and memory costs while retaining provable accuracy guarantees.

A basic design choice in an iterative randomized solver is whether to draw a sketch once and reuse the same randomness throughout the iteration, or to draw fresh randomness at every step. While reusing a sketch is natural computationally, fresh sketches may better capture the changing directions encountered
by the iterative method. This raises a basic question:

\begin{quote}
\emph{When, and why, does fresh randomness help an iterative sketched
solver?}
\end{quote}

We study this question for iterative ridge regression with column
sampling. Given a design matrix $\Ab\in\mathbb{R}^{n\times d}$, a
response vector $\bb\in\mathbb{R}^{n}$, and a regularization parameter
$\lambda>0$, ridge regression solves
\begin{equation}
 \xstar=\argmin_{\xb\in\R^d}
 \big\{\twonorm{\Ab\xb-\bb}^{2}+\lambda\twonorm{\xb}^{2}\big\}
 =\Ab^{\ts}(\Ab\Ab^{\ts}+\lambda\Ib_n)^{-1}\bb.
 \label{eq:ridge}
\end{equation}
\citep{hoerl1970ridge,saunders1998ridge}.
We focus on the wide regime $n\ll d$, where the number of features is much larger than the number of observations. Directly forming the Gram matrix $\Ab \Ab^\top$ costs $O(n^2d)$, which can become prohibitive when $d$ is very large.

Column sampling reduces this cost by sampling and rescaling features
to approximate the Gram matrix using a much smaller number of columns
\citep{drineas2006fast}. \citet{chowdhury2018iterative}
use this approximation in an iterative solver that repeatedly solves a sketched ridge regression problem and corrects the remaining residual. For sufficiently accurate sketches, these residual corrections yield geometric decay of the solution error.

The analysis of \citet{chowdhury2018iterative} uses a fixed initial sketch that is reused across iterations. Their experiments, however, indicate that drawing a new sample at every iteration can accelerate convergence and reduce the sketch size needed to converge, and they explicitly ask whether this additional randomness leads to stronger theoretical guarantees. \citet{kacham2022sketching} subsequently analyze the same iterative solver with refreshed oblivious sketches and obtain improved sample complexity guarantees. The proof of their Lemma 3.3 analyzes the same matrix-vector product as our Lemma~\ref{lem:colev-vector-var}. More recently, \citet{li2024optimal} propose a sampling probability similar to our oracle proposal: their ideal rule is $p_i^{\mathrm{LN},(j)} \propto |\xb_i^{\star(j)}|\twonorm{\Ab_{\ast i}}$, while ours is $p_i^{\star(j)} \propto |\xb_i^{\star(j)}|\sqrt{w_i}$. We return to the column sampling setting and study fresh leverage score and ridge leverage score sketches, which exploit the structure of the design matrix and are natural sampling distributions for ridge regression. For a more detailed discussion of related works, we refer the readers to Section~\ref{sec:related}.

Our main observation is that \emph{fresh randomness changes what must
be approximated}. Existing fixed-sketch analyses seek sufficiently
accurate uniform approximation of the relevant matrix, for example by
controlling the norm of the matrix $\Vb^\top \Sb \Sb^\top \Vb-\Ib$,
where $\Vb \in \mathbb{R}^{d \times n}$ is the tall-and-thin matrix of the right singular vectors of $\Ab$ and the $d \times s$ matrix $\Sb$ (with $s\ll d$) is the sketching matrix (see Section~\ref{app:algorithms} and~\ref{app:fixed-comparison} for details).
With an independently refreshed sketch at each iteration, this is
stronger than necessary. At iteration $j$, the solver only needs the
sketch to be accurate in the direction of the current residual ridge
solution. The key quantity is therefore a sketched
\emph{matrix-vector} product rather than a uniform
matrix-matrix approximation. Conditional independence of the fresh
sketch from the previous iterations allows us to analyze the variance
of this directional error and propagate it through the iteration.

This directional viewpoint has a second consequence: it tells us how
the sampling probabilities themselves should change during the
iteration. Once convergence is governed by the variance of a particular
sketched matrix-vector product, we can choose the sampling distribution
to minimize that variance. This produces an oracle distribution that
combines ridge leverage information with the current residual solution.
The oracle solution is unavailable in practice, but its structure
suggests adaptive sampling probabilities based on an inexpensive
approximation. Thus, residual-aware sampling is not a separate heuristic:
it emerges directly from the same directional analysis that explains
the advantages of fresh sketching.

\paragraph{Our contributions.}
We establish both analytical and algorithmic advantages of fresh
sketching for iterative ridge regression with column sampling.

\begin{enumerate}

\item
\textbf{Fresh sketches yield sharper convergence through directional error.}
We derive solution error bounds for fresh leverage score and ridge leverage score sampling in Theorems~\ref{thm:colev-ms} and~\ref{thm:ridgelev}. When $\|\Ab\|_2^2/\lambda$ is moderate, these results achieve geometric
convergence at rate $\varepsilon^{t}$ (where $\varepsilon$ is the desired relative error and $t$ is the number of iterations) using $O(n/\varepsilon^2)$ and $O(d_\lambda/\varepsilon^2)$ sampled columns per iteration, respectively, where $d_\lambda$ is the effective degrees of freedom. Compared with \citet{chowdhury2018iterative}, these bounds save factors of $\log n$ and $\log(1+d_\lambda)$ while retaining the same geometric rate. Secondly, at the same sample sizes, Theorems~\ref{thm:colev-struct} and~\ref{thm:ridgelev-struct} give faster guaranteed convergence: the contraction factor gains an additional inverse square root of the corresponding logarithm. Finally, for leverage score sampling, Theorem~\ref{thm:colev} shows that independence and cancellation
 of the linear sketch error yield the algorithmic bias bound
 $\|\ex[\xhatstar-\xstar]\|_2\le\varepsilon^{2t}\|\xstar\|_2$
 at the stated sample size.

\noindent\textit{Remark.} We note that, in general, independent column sampling methods with replacement cannot avoid logarithmic factors; this has been known since the early days of random sampling and follows from coupon collector considerations~\citep[see][Remark~42]{drineas2017lectures}. Recently, celebrated results use modern random matrix theory (RMT) tools to remove the logarithmic factor from the embedding dimension for sparse oblivious sketches, resolving the conjecture of \citet{nelson2013osnap} up to sub-polylogarithmic factors~\citep{chenakkod2026optimal,DBLP:conf/icalp/ChenakkodDD25,chenakkod2024optimal,brailovskaya2024universality}.

Our result, to the best of our knowledge, is the first one to avoid logarithmic factors in the sampling size complexity for the iterative ridge regression framework of~\citet{chowdhury2018iterative}, by using fresh randomness. Our proofs leverage conditional probability and standard measure concentration inequalities.

\item
\textbf{Directional error leads to residual-aware sampling.}
We minimize the variance of the matrix-vector product that governs the
one-step error and derive an oracle sampling distribution that combines
ridge leverage scores with the current residual solution. We additionally introduce practical approximations of the oracle sampling distribution. Our proposed adaptive method uses an initial ridge pilot sketch to approximate the residual solution; mixture methods further mix this probability distribution with ridge leverage scores to improve practicality and performance. (See Section~\ref{sec:oracle} for details.)

\item
\textbf{Experiments demonstrate the advantages of fresh and
residual-aware sketches.}
We compare uniform, leverage score, ridge leverage score, oracle,
adaptive, and mixture sampling on ARCENE and synthetic data
(Figures~1 and~3), and separately compare fixed and fresh sketches on
ARCENE (Figure~4), extending the experiments of \citet{chowdhury2018iterative}. We also evaluate the method on semantic similarity ridge probing for the STS Benchmark, using concatenated representations from multiple layers of Qwen2.5 to produce a wide regression problem~\citep{cer2017stsb,qwen2025technical}. Across iteration counts, sketching budgets, and regularization parameters, the experiments show the convergence advantages of the residual-aware sampling rules.

\end{enumerate}

\section{Problem setup and iterative solver}
\label{sec:setup}
Scalars are denoted by lowercase letters (e.g., $x$, $\lambda$); vectors by bold lowercase letters (e.g., $\xb$, $\gb$); matrices by bold uppercase letters (e.g., $\Wb$, $\Gb$). For a matrix $\Ab \in \RR{m}{n}$ we write $\|\Ab\|_F$ for the Frobenius norm, $\twonorm{\Ab}$ for the spectral norm. The notation $\Ab_{i\ast}$ and $\Ab_{\ast i}$ denotes the $i$th row and column, respectively. The order $\succeq$ denotes positive semidefinite ordering, and $\II_B$ denotes the indicator of an event $B$. Without loss of generality, throughout the analysis we assume that all sampling probabilities are strictly positive. (We will always skip outcomes with zero sampling probabilities.)
Let $\Ab\in\RR{n}{d}$ have full row rank, with $d>n>1$, and let
$\bb\in\R^n$ and $\lambda>0$. Write $\Ib_m$ for the $m\times m$ identity
and $\|\cdot\|_2$ for the Euclidean vector norm. 

We consider the ridge regression problem in~\eqref{eq:ridge}, write the thin singular value decomposition as
$\Ab=\Ub\Sigmab\Vb^{\ts}$, where $\Ub\in\RR{n}{n}$ is orthogonal,
$\Vb\in\RR{d}{n}$ has orthonormal columns, and
$\Sigmab=\diag(\sigma_1,\ldots,\sigma_n)$ with
$\smax=\sigma_1\ge\cdots\ge\sigma_n=\smin>0$.
 Define $ \Slam=\Sigmab(\Sigmab^2+\lambda\Ib_n)^{-1/2},
 \dlam=\|\Slam\|_F^2=\sum_{i=1}^n\frac{\sigma_i^2}{\sigma_i^2+\lambda}.$ Thus $\twonorm{\Slam}\le1$ and $0<\dlam<n$. The quantity $\dlam$
is the effective degrees of freedom \citep{alaoui2015fast}.

\paragraph{Column sampling.}
A sketch of size $s$ is a matrix $\Sb\in\RR{d}{s}$ that independently samples
indices $i_1,\ldots,i_s$ from  $\{1, 2, \ldots, d\}$ using a probability distribution $p=(p_1,\ldots,p_d)$ and has columns
$\Sb_{\ast k}=\eb_{i_k}/\sqrt{s p_{i_k}}$
\citep{drineas2006fast}.
The two baseline distributions use leverage scores (LS)
\citep{drineas2012leverage} and
ridge leverage scores (RLS) \citep{alaoui2015fast,cohen2017ridge}:
\begin{align*}
 p_i^{\mathrm{ls}}=\frac{\twonorm{\Vb_{i\ast}}^2}{n}, \quad p_i^{\mathrm{rls}}=\frac{w_i}{\dlam},
 w_i:=\twonorm{(\Vb\Slam)_{i\ast}}^2.
\end{align*}

\paragraph{Residual correction.}
We use the iteration of \citet{chowdhury2018iterative}, with a new sketch
at every step. Fix an iteration count $t$ and initialize $\bb^{(0)}=\bb$,
$\widetilde{\xb}^{(0)}=\zero_d$, and $\yb^{(0)}=\zero_n$.
At iteration $j=1,\ldots,t$, update the residual, draw a fresh sketch
$\Sb^{(j)}$ with size $s^{(j)}$ and probabilities $p^{(j)}$, and compute
\begin{equation}
\begin{split}
 \bb^{(j)}&=\bb^{(j-1)}-\lambda\yb^{(j-1)}-\Ab\widetilde{\xb}^{(j-1)},\\
 \Hb_j&=\Ab\Sb^{(j)}\Sb^{(j)\ts}\Ab^{\ts}+\lambda\Ib_n,\qquad
 \yb^{(j)}=\Hb_j^{-1}\bb^{(j)},\\
 \widetilde{\xb}^{(j)}&=\Ab^{\ts}\yb^{(j)}.
\end{split}
\label{eq:iteration}
\end{equation}

After $t$ steps, the output is $\xhatstar=\sum_{j=1}^{t}\widetilde{\xb}^{(j)}$.
The exact solution of residual problem $j$ is
$\xb^{\star(j)}=\Ab^{\ts}(\Ab\Ab^{\ts}+\lambda\Ib_n)^{-1}\bb^{(j)}$.
In particular, $\bb^{(1)}=\bb^{(0)}=\bb$ and
$\xb^{\star(1)}=\xb^{\star(0)}=\xstar$.
The history $\Fcal_j=\sigma(\Sb^{(1)},\ldots,\Sb^{(j)})$ is the sigma algebra
generated by the first $j$ sketches, with
$\Fcal_0=\{\emptyset,\Omega\}$, where $\Omega$ is the sample space.
Thus $\xb^{\star(j)}$ is
$\Fcal_{j-1}$-measurable.
For the baseline distributions, sketches are independent across iterations.
For residual-aware distributions, each sketch depends on past randomness.
All expectations refer to sketch randomness, with $\Ab$ and $\bb$ fixed.

The identities
$\xb^{\star(j)}=\xb^{\star(j-1)}-\widetilde{\xb}^{(j-1)}$ and
$\xhatstar-\xstar=\widetilde{\xb}^{(t)}-\xb^{\star(t)}$
reduce the final error to successive residual errors
(Lemma~\ref{lem:chow-rec}, proved in
\hyperref[proof:residual]{Appendix~\ref*{app:residual-identities}}).
Algorithm~\ref{alg:fresh} states the fresh sketching iterations, including leverage score, ridge leverage score and orcale sampling. Adaptive and mixture sampling algorithms appear in Appendix~\ref{app:algorithms}. For completeness, the exact condition and convergence results of \citet{chowdhury2018iterative} are summarized in Appendix~\ref{app:fixed-comparison}.

\begin{algorithm}[t]
\caption{Iterative ridge regression with fresh sketches}
\label{alg:fresh}
\begin{algorithmic}[1]
\STATE \textbf{Input:} $\Ab,\bb$; $\lambda>0$; iteration count $t\ge1$; sketch sizes $s^{(1)},\ldots,s^{(t)}$; rule LS, RLS, or Oracle.
\STATE Initialize $\bb^{(0)}=\bb$, $\widetilde{\xb}^{(0)}=\zero_d$, and $\yb^{(0)}=\zero_n$.
\FOR{$j=1,\ldots,t$}
\STATE Set $\bb^{(j)}=\bb^{(j-1)}-\lambda\yb^{(j-1)}-\Ab\widetilde{\xb}^{(j-1)}$.
\IF{rule is LS}
\STATE Set $p_i^{(j)}=\twonorm{\Vb_{i\ast}}^2/n$.
\ENDIF
\IF{rule is RLS}
\STATE Set $p_i^{(j)}=w_i/\dlam$.
\ENDIF
\IF{rule is Oracle}
\STATE Compute $\xb^{\star(j)}=\Ab^{\ts}(\Ab\Ab^{\ts}+\lambda\Ib_n)^{-1}\bb^{(j)}$.
\STATE Set $p_i^{(j)}\propto |\xb_i^{\star(j)}|\sqrt{w_i}$ .
\ENDIF
\STATE Draw a fresh $\Sb^{(j)}$ of size $s^{(j)}$ using Algorithm~\ref{alg:sample}.
\STATE Form $\Hb_j=\Ab\Sb^{(j)}\Sb^{(j)\ts}\Ab^{\ts}+\lambda\Ib_n$ and solve
$\Hb_j\yb^{(j)}=\bb^{(j)}$.
\STATE Set $\widetilde{\xb}^{(j)}=\Ab^{\ts}\yb^{(j)}$.
\ENDFOR
\STATE \textbf{Return:} $\xhatstar=\sum_{j=1}^{t}\widetilde{\xb}^{(j)}$.
\end{algorithmic}
\end{algorithm}
\section{Leverage Sampling}
\label{sec:colev}

We first consider fresh leverage score sketches, with
$p_i=\twonorm{\Vb_{i\ast}}^2/n$ and deterministic sketch sizes
$s^{(1)},\ldots,s^{(t)}$. Define
\begin{align}
  \Ehat^{(j)}&=\Vb^{\ts}\Sb^{(j)}\Sb^{(j)\ts}\Vb-\Ib_n,
  \quad\Gb=\Ib_n+\lambda\Sigmab^{-2}, \notag\\
  \Rhat^{(j)}&=(\Ib_n+\Gb^{-1}\Ehat^{(j)})^{-1}-\Ib_n
  =-(\Gb+\Ehat^{(j)})^{-1}\Ehat^{(j)}. \label{eq:R}
\end{align}
The inverse exists for every sketch because
$\Gb+\Ehat^{(j)}\succeq\lambda\Sigmab^{-2}\succ\zero$.
The next two lemmas give the error identity and the variance along one vector.

\begin{lemma}
\label{lem:telescope}
For any sequence of sketches in \eqref{eq:iteration},
\begin{equation}
  \xhatstar-\xstar
  =(-1)^{t-1}\Vb\Rhat^{(t)}\cdots\Rhat^{(1)}\Vb^{\ts}\xstar.
  \label{eq:telescope}
\end{equation}
\end{lemma}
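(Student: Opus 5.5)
The plan is to reduce everything to a one-step identity for a single residual problem and then chain it with the residual identities of Lemma~\ref{lem:chow-rec}: $\xb^{\star(j+1)}=\xb^{\star(j)}-\widetilde{\xb}^{(j)}$ and $\xhatstar-\xstar=\widetilde{\xb}^{(t)}-\xb^{\star(t)}$. The key one-step claim is that, for every $j$ and every sketch,
\begin{equation*}
\widetilde{\xb}^{(j)}-\xb^{\star(j)}=\Vb\Rhat^{(j)}\Vb^{\ts}\xb^{\star(j)} .
\end{equation*}
Granting this, $\xb^{\star(j+1)}=-\Vb\Rhat^{(j)}\Vb^{\ts}\xb^{\star(j)}$. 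Since $\xb^{\star(j)}$ lies in the row space of $\Ab$, which is the range of $\Vb$, we have $\Vb^{\ts}\Vb=\Ib_n$ and $\Vb\Vb^{\ts}\xb^{\star(j)}=\xb^{\star(j)}$. Induction from $\xb^{\star(1)}=\xstar$ then gives
\begin{equation*}
\xb^{\star(t)}=(-1)^{t-1}\Vb\Rhat^{(t-1)}\cdots\Rhat^{(1)}\Vb^{\ts}\xstar .
\end{equation*}
Applying the one-step identity once more at $j=t$ yields \eqref{eq:telescope}, and the sign stays $(-1)^{t-1}$ because the final step contributes a plus sign.

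To prove the one-step identity, I would write everything in the SVD coordinates $\Ab=\Ub\Sigmab\Vb^{\ts}$. We have $\Ab\Sb\Sb^{\ts}\Ab^{\ts}=\Ub\Sigmab(\Ib_n+\Ehat)\Sigmab\Ub^{\ts}$, so
\begin{equation*}
\Hb_j=\Ub\Sigmab\bigl(\Ib_n+\Ehat^{(j)}+\lambda\Sigmab^{-2}\bigr)\Sigmab\Ub^{\ts}=\Ub\Sigmab(\Gb+\Ehat^{(j)})\Sigmab\Ub^{\ts}.
\end{equation*}
Hence
\begin{equation*}
\widetilde{\xb}^{(j)}=\Vb(\Gb+\Ehat^{(j)})^{-1}\Sigmab^{-1}\Ub^{\ts}\bb^{(j)},
\end{equation*}
and similarly (the case $\Ehat=\zero$)
\begin{equation*}
\xb^{\star(j)}=\Vb\Gb^{-1}\Sigmab^{-1}\Ub^{\ts}\bb^{(j)},
\end{equation*}
so $\Sigmab^{-1}\Ub^{\ts}\bb^{(j)}=\Gb\Vb^{\ts}\xb^{\star(j)}$. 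Substituting,
\begin{equation*}
\widetilde{\xb}^{(j)}-\xb^{\star(j)}=\Vb\bigl[(\Gb+\Ehat^{(j)})^{-1}\Gb-\Ib_n\bigr]\Vb^{\ts}\xb^{\star(j)} .
\end{equation*}
The bracket equals $(\Ib_n+\Gb^{-1}\Ehat^{(j)})^{-1}-\Ib_n=-(\Gb+\Ehat^{(j)})^{-1}\Ehat^{(j)}=\Rhat^{(j)}$, which matches \eqref{eq:R}.

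The main point to check is the invertibility and the bookkeeping of the factorization. Invertibility of $\Gb+\Ehat^{(j)}$ is already noted after \eqref{eq:R}. The product ordering must also come out as $\Rhat^{(t)}\cdots\Rhat^{(1)}$ with the correct sign, and for that the projection step $\Vb\Vb^{\ts}\xb^{\star(j)}=\xb^{\star(j)}$ is the essential ingredient that lets the $\Vb^{\ts}\Vb$ factors collapse. Because the argument is purely algebraic, it holds for any sequence of sketches; no independence or specific sampling distribution is used.
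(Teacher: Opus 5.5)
Your proof is correct, but it takes a different and somewhat shorter route than the paper.

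\textbf{The paper's route.} The paper tracks the residual vectors $\bb^{(j)}$. It cites \citet[Lemma 11]{chowdhury2018iterative} for the last step, $\widetilde{\xb}^{(t)}-\xb^{\star(t)}=\Vb\Rhat^{(t)}\Gb^{-1}\Sigmab^{-1}\Ub^{\ts}\bb^{(t)}$. It then unwinds $\bb^{(t)}$ through the recursion $\Sigmab^{-1}\Ub^{\ts}\bb^{(j)}=\widetilde{\Rb}^{(j-1)}\Sigmab^{-1}\Ub^{\ts}\bb^{(j-1)}$, where $\widetilde{\Rb}^{(j)}=\Ib_n-(\Ib_n+\Ehat^{(j)}\Gb^{-1})^{-1}$. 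Finally it needs the separate commutation identity $\Gb^{-1}\widetilde{\Rb}^{(j)}=-\Rhat^{(j)}\Gb^{-1}$ (Lemma~\ref{lem:commute}) to move $\Gb^{-1}$ rightward, flipping a sign on each pass.

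\textbf{Your route.} You work in the coordinates $\Vb^{\ts}\xb^{\star(j)}=\Gb^{-1}\Sigmab^{-1}\Ub^{\ts}\bb^{(j)}$. Your one-step identity is exactly that cited Lemma 11, rederived from the SVD in a few lines. Combining it with $\xb^{\star(j+1)}=\xb^{\star(j)}-\widetilde{\xb}^{(j)}$ gives the recursion $\Vb^{\ts}\xb^{\star(j+1)}=-\Rhat^{(j)}\Vb^{\ts}\xb^{\star(j)}$ directly. Applying $\Gb^{-1}$ to the paper's $\bb$-recursion and invoking the commutation identity produces precisely your recursion. So your choice of variables absorbs that lemma and the auxiliary matrices $\widetilde{\Rb}^{(j)}$ entirely.

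\textbf{What each buys.} In your version the sign has a transparent origin, $\xb^{\star(j+1)}=-(\widetilde{\xb}^{(j)}-\xb^{\star(j)})$. Your argument is also self-contained and matches the identity $\xb^{\star(j)}=\eb_{j-1}$ that the paper itself uses later in the ridge leverage analysis. The paper's version stays closer to the $\bb$-recursion formalism of Chowdhury et al.

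One small correction to your commentary: the factors in the product collapse because $\Vb^{\ts}\Vb=\Ib_n$. The projection identity $\Vb\Vb^{\ts}\xb^{\star(j)}=\xb^{\star(j)}$ is only needed for the base case when you phrase the induction for $\xb^{\star(j)}$ itself. You can drop it entirely by inducting on $\Vb^{\ts}\xb^{\star(j)}$.
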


\begin{lemma}
\label{lem:colev-vector-var}
For independent leverage score sketches and every
$\Fcal_{j-1}$-measurable vector $\zb\in\R^n$ with finite second
moment,
\begin{equation}
  \ex\!\left[\twonorm{\Ehat^{(j)}\zb}^{2}\mid\Fcal_{j-1}\right]
  =\frac{n-1}{s^{(j)}}\twonorm{\zb}^{2}.
  \label{eq:col-vector-var-cond}
\end{equation}
\end{lemma}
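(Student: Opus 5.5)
Since the sketch $\Sb^{(j)}$ is drawn independently of $\Fcal_{j-1}$ and $\zb$ is $\Fcal_{j-1}$-measurable, I will first fix $\zb$ and compute the unconditional second moment $\ex\twonorm{\Ehat^{(j)}\zb}^2$ for a deterministic $\zb$. The conditional statement then follows from the freezing (substitution) lemma for conditional expectations: the function $g(\zb)=\ex_{\Sb}\twonorm{(\Vb^{\ts}\Sb\Sb^{\ts}\Vb-\Ib_n)\zb}^2$ is measurable in $\zb$, and conditioning gives $g(\zb)$. The finite second moment hypothesis ensures everything is integrable. Because $g$ will turn out to be $\frac{n-1}{s}\twonorm{\zb}^2$, the identity holds almost surely.

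For the deterministic computation, I write $\Vb^{\ts}\Sb\Sb^{\ts}\Vb\zb=\frac1s\sum_{k=1}^s \Xb_k$, where
\[
\Xb_k=\frac{\Vb_{i_k\ast}^{\ts}\Vb_{i_k\ast}\zb}{p_{i_k}}
\]
are i.i.d. random vectors with mean $\sum_i \Vb_{i\ast}^{\ts}\Vb_{i\ast}\zb=\Vb^{\ts}\Vb\zb=\zb$. Hence $\Ehat^{(j)}\zb$ is an average of $s$ i.i.d. mean-zero vectors $\Xb_k-\zb$. Its expected squared norm is therefore
\[
\frac1s\left(\ex\twonorm{\Xb_1}^2-\twonorm{\zb}^2\right).
\]

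The key step is evaluating $\ex\twonorm{\Xb_1}^2$:
\[
\ex\twonorm{\Xb_1}^2=\sum_i \frac{\twonorm{\Vb_{i\ast}}^2(\Vb_{i\ast}\zb)^2}{p_i}.
\]
Substituting $p_i=\twonorm{\Vb_{i\ast}}^2/n$ cancels the row norms exactly, leaving $n\sum_i(\Vb_{i\ast}\zb)^2=n\twonorm{\Vb\zb}^2=n\twonorm{\zb}^2$ by orthonormality of the columns of $\Vb$. Rows with $\Vb_{i\ast}=\zero$ have zero probability and are skipped, per the convention. Subtracting the squared mean gives $(n-1)\twonorm{\zb}^2/s^{(j)}$.

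The computation itself is routine. The only delicate point is the measure-theoretic step justifying conditioning on $\Fcal_{j-1}$ with a random $\zb$. This rests on the fact that $\Sb^{(j)}$ is independent of $\Fcal_{j-1}$ for the baseline leverage score distribution, whose probabilities do not depend on past sketches. I will state that explicitly and invoke the standard independence/freezing lemma.
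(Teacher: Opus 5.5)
Your proposal is correct and follows essentially the same route as the paper. Both arguments write $\Ehat^{(j)}\zb$ as an average of i.i.d.\ mean-zero vectors, use $p_i=\twonorm{\Vb_{i\ast}}^2/n$ and $\Vb^{\ts}\Vb=\Ib_n$ to get the second moment $n\twonorm{\zb}^2$, and subtract $\twonorm{\zb}^2$. Your explicit freezing-lemma step, which relies on $\Sb^{(j)}$ being independent of $\Fcal_{j-1}$, simply makes rigorous what the paper summarizes as ``the same calculation after conditioning on $\Fcal_{j-1}$.''
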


Appendix~\ref{sec:telescope}
contains the \hyperref[proof:column-product]{proof of (\ref*{eq:telescope})}
and the \hyperref[proof:column-variance]{proof of (\ref*{eq:col-vector-var-cond})}. The above product identifies the key matrix-vector product that quantifies the final solution quality of the iterative ridge regression solver. Instead of trying to obtain uniform control over $\twonorm{\Vb^\top \Sb \Sb^\top \Vb -\Ib_n}$ as in~\citet{chowdhury2018iterative}, we only need to control $\twonorm{\Ehat^{(j)} \zb} = \twonorm{\Vb^\top \Sb \Sb^\top \Vb \zb-\zb}$. We now present our improved convergence result for leverage score sampling.

\begin{theorem}
\label{thm:colev-ms}
Running Algorithm~\ref{alg:fresh} with LS gives
\begin{equation}
  \ex\!\left[\twonorm{\xhatstar-\xstar}^{2}\right]
  \le
  \left(\frac{\smax^2}{\lambda}\right)^{2t}
  \prod_{j=1}^{t}\frac{n-1}{s^{(j)}}\twonorm{\xstar}^{2}.
  \label{eq:colev-ms}
\end{equation}
For any fixed $t$ and $\delta_0\in(0,1)$, with probability at least
$1-\delta_0$,
\begin{equation}
  \twonorm{\xhatstar-\xstar}
  \le \delta_0^{-1/2}
  \left(\frac{\smax^2}{\lambda}\right)^t
  \prod_{j=1}^{t}\sqrt{\frac{n-1}{s^{(j)}}}\twonorm{\xstar}.
  \label{eq:colev-hp}
\end{equation}
In particular, for $\varepsilon\in(0,1)$, a common sketch size
\begin{equation}
  s\ge\left(\frac{\smax^2}{\lambda}\right)^2
  \frac{n-1}{\varepsilon^2}
  \label{eq:colev-var-scale}
\end{equation}
gives $\ex[\twonorm{\xhatstar-\xstar}^2]
\le\varepsilon^{2t}\twonorm{\xstar}^2$ and
$\twonorm{\xhatstar-\xstar}\le \delta_0^{-1/2} \varepsilon^t\twonorm{\xstar}$ with probability at least
$1-\delta_0$.
\end{theorem}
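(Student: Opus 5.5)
Starting from the error identity of Lemma~\ref{lem:telescope}, I will peel off one factor $\Rhat^{(j)}$ at a time, using the conditional variance bound of Lemma~\ref{lem:colev-vector-var} at each step. Since $\Vb$ has orthonormal columns, \eqref{eq:telescope} gives $\twonorm{\xhatstar-\xstar}=\twonorm{\zb^{(t)}}$. Here $\zb^{(0)}=\Vb^{\ts}\xstar$ and $\zb^{(j)}=\Rhat^{(j)}\zb^{(j-1)}$. By construction $\zb^{(j-1)}$ is $\Fcal_{j-1}$-measurable. Because $\twonorm{\Vb^{\ts}\xstar}\le\twonorm{\xstar}$, it suffices to show
\[
\ex\!\left[\twonorm{\zb^{(j)}}^2\mid\Fcal_{j-1}\right]\le\left(\frac{\smax^2}{\lambda}\right)^2\frac{n-1}{s^{(j)}}\twonorm{\zb^{(j-1)}}^2 .
\]
Iterating this with the tower property then yields \eqref{eq:colev-ms}.

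For the one-step bound, I use $\Rhat^{(j)}=-(\Gb+\Ehat^{(j)})^{-1}\Ehat^{(j)}$ from \eqref{eq:R}. The key deterministic fact is that $\Gb+\Ehat^{(j)}=\Vb^{\ts}\Sb^{(j)}\Sb^{(j)\ts}\Vb+\lambda\Sigmab^{-2}\succeq\lambda\Sigmab^{-2}\succeq(\lambda/\smax^2)\Ib_n$. Hence $\twonorm{(\Gb+\Ehat^{(j)})^{-1}}\le\smax^2/\lambda$ for every realization of the sketch. This gives the pointwise bound $\twonorm{\zb^{(j)}}\le(\smax^2/\lambda)\twonorm{\Ehat^{(j)}\zb^{(j-1)}}$. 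Squaring, taking conditional expectation given $\Fcal_{j-1}$, and applying Lemma~\ref{lem:colev-vector-var} with $\zb=\zb^{(j-1)}$ gives the desired factor. The finite second moment hypothesis of that lemma holds inductively, since each $\zb^{(j)}$ is bounded by a constant times $\twonorm{\xstar}$: $\Ehat^{(j)}$ is bounded because the positive probabilities are finitely many.

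The exponent $(\smax^2/\lambda)^{2t}$ in the statement matches exactly this crude resolvent bound applied once per step. I do not expect to need anything finer. The main subtlety is measurability: I must check that conditioning on $\Fcal_{j-1}$ freezes $\zb^{(j-1)}$ while $\Sb^{(j)}$ is a fresh independent sketch. This is exactly what Lemma~\ref{lem:colev-vector-var} is stated for.

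For the high-probability bound \eqref{eq:colev-hp}, I apply Markov's inequality to $\twonorm{\xhatstar-\xstar}^2$ at level $\delta_0^{-1}$ times the bound \eqref{eq:colev-ms}, then take square roots. Finally, for the common sketch size \eqref{eq:colev-var-scale}, substituting gives $(\smax^2/\lambda)^2(n-1)/s\le\varepsilon^2$ per step. The product over $t$ steps is then at most $\varepsilon^{2t}$, which yields both the mean-square claim and the claim holding with probability at least $1-\delta_0$.
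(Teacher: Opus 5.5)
Your proposal is correct and follows the paper's proof essentially step for step. The paper uses the same recursion $\zb_j=\Rhat^{(j)}\zb_{j-1}$ started from $\zb_0=\Gb^{-1}\Sigmab^{-1}\Ub^{\ts}\bb=\Vb^{\ts}\xstar$, the same deterministic bound $\twonorm{(\Gb+\Ehat^{(j)})^{-1}}\le\smax^2/\lambda$, Lemma~\ref{lem:colev-vector-var} applied conditionally on $\Fcal_{j-1}$, the tower property, and one terminal Markov step. Your inequality $\twonorm{\Vb^{\ts}\xstar}\le\twonorm{\xstar}$ is in fact an equality because $\xstar$ lies in the range of $\Vb$, but the inequality is all you need.
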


The proof in \hyperref[proof:column-ms]{Appendix~\ref*{app:column-ms}} uses
$\twonorm{(\Gb+\Ehat^{(j)})^{-1}}\le\smax^2/\lambda$ in
\eqref{eq:R}, then applies \eqref{eq:col-vector-var-cond} conditionally at
each iteration. One application of Markov's inequality to the final error
gives \eqref{eq:colev-hp}; substituting \eqref{eq:colev-var-scale} gives
the common-size bounds.
The required size is
$O((\smax^2/\lambda)^2n/\varepsilon^2)$, where \citet{chowdhury2018iterative} requires $O(n\log n/\varepsilon^2)$. When $\smax^2/\lambda$ is controlled, fresh sketches can save a factor of $\log n$ for the required sketch size.

A separate bound requires larger sketch sizes and replaces $\smax^2/\lambda$ by
$(1-\varepsilon)^{-1}$. It yields an
additional contraction factor of size $O(\frac{1}{\sqrt{\log n}})$
(Theorem~\ref{thm:colev-struct}, proved in \hyperref[proof:column-embedding]{Appendix~\ref*{app:column-embedding}}).

\textbf{Bias and sharper contraction bounds.}
We additionally analyze the algorithmic bias, i.e., the expected error vector
$\ex[\xhatstar-\xstar]$. Fix $0<\varepsilon<1$ and let
$\Mb_j=\Gb^{-1}\Ehat^{(j)}$,
$C=\frac{\smax^2}{\lambda}+\frac{\smax^2}{\smin^2}$, and
$\delta(s)=4(n+1)\exp(-3\varepsilon^2s/(32n))$.
For $s^{(j)}\ge8n/\varepsilon^2$, unbiasedness and an inverse
perturbation bound give
\begin{equation*}
 \ex[\Rhat^{(j)}]=\ex[(\Ib_n+\Mb_j)^{-1}\Mb_j^2],\qquad
 \twonorm{\ex[\Rhat^{(j)}]}
 \le\frac{\varepsilon^2}{2}+C(n-1)^2\delta(s^{(j)}).
\end{equation*}
Independence multiplies these factors in the algorithmic bias bound.
When $C(n-1)^2\delta(s^{(j)})\le\varepsilon^2/2$, this gives
$\twonorm{\ex[\xhatstar-\xstar]}\le\varepsilon^{2t}\twonorm{\xstar}$;
Theorem~\ref{thm:colev} states a sufficient sketch size. Its proof is deferred to Appendix~\ref{app:column-bias}.

\section{Ridge Leverage Sampling}
\label{sec:ridgelev}

We now consider independent ridge leverage score sketches, with $p_i=w_i/\dlam$ and sketch sizes $s^{(1)},\ldots,s^{(t)}$. Set $\Xb=\Vb\Slam$, $w_i=\twonorm{\Xb_{i\ast}}^2$, and
\begin{equation}
  \Eb^{(j)}=\Xb^{\ts}(\Sb^{(j)}\Sb^{(j)\ts}-\Ib_d)\Xb,
  \qquad c_\lambda=\frac{\smax^2+\lambda}{\lambda}.
  \label{eq:clambda}
\end{equation}
The next two lemmas relate the error in one residual solve to a sketched
matrix-vector product and bound its expected squared 2-norm.

\begin{lemma}
\label{lem:psdfloor}\label{lem:periter}
For every sketch, $\Ib_n+\Eb^{(j)}\succ0$ and
$\twonorm{(\Ib_n+\Eb^{(j)})^{-1}}\le c_\lambda$.
Moreover,
\begin{equation*}
 \xb^{\star(j)}-\widetilde{\xb}^{(j)}
 =\Xb(\Ib_n+\Eb^{(j)})^{-1}\Xb^{\ts}
      (\Sb^{(j)}\Sb^{(j)\ts}-\Ib_d)\xb^{\star(j)}.
\end{equation*}
Consequently, the norm of this error is at most
$c_\lambda\twonorm{\Xb^{\ts}(\Sb^{(j)}\Sb^{(j)\ts}-\Ib_d)
\xb^{\star(j)}}$. If $\twonorm{\Eb^{(j)}}\le\varepsilon<1$, we have the sharper
bound
\begin{equation}
 \twonorm{\xb^{\star(j)}-\widetilde{\xb}^{(j)}}
 \le\frac{1}{1-\varepsilon}
 \twonorm{\Xb^{\ts}(\Sb^{(j)}\Sb^{(j)\ts}-\Ib_d)\xb^{\star(j)}}.
 \label{eq:periter-struct}
\end{equation}
\end{lemma}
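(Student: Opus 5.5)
We prove the three claims in order: the positivity and norm bound, then the error identity, then the two consequences.

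\textbf{Positivity and norm bound.} Note that $\Ib_n+\Eb^{(j)}=\Ib_n-\Xb^{\ts}\Xb+\Xb^{\ts}\Sb^{(j)}\Sb^{(j)\ts}\Xb$. Since $\Xb^{\ts}\Xb=\Slam^{2}=\Sigmab^2(\Sigmab^2+\lambda\Ib_n)^{-1}$, we get $\Ib_n-\Xb^{\ts}\Xb=\lambda(\Sigmab^2+\lambda\Ib_n)^{-1}$. The sketched term $\Xb^{\ts}\Sb\Sb^{\ts}\Xb$ is positive semidefinite. Therefore
\[
\Ib_n+\Eb^{(j)}\succeq\lambda(\Sigmab^2+\lambda\Ib_n)^{-1}\succeq\frac{\lambda}{\smax^2+\lambda}\Ib_n\succ0,
\]
which gives $\twonorm{(\Ib_n+\Eb^{(j)})^{-1}}\le c_\lambda$. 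When $\twonorm{\Eb^{(j)}}\le\varepsilon<1$, Weyl's inequality gives $\lambda_{\min}(\Ib_n+\Eb^{(j)})\ge1-\varepsilon$, which yields the factor $1/(1-\varepsilon)$ in \eqref{eq:periter-struct}.

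\textbf{Error identity.} We rewrite both solutions in the $\Xb$ coordinates. Let $\Db=(\Sigmab^2+\lambda\Ib_n)^{1/2}$, so that $\Ab=\Ub\Sigmab\Vb^{\ts}=\Ub\Db\Xb^{\ts}$.
\begin{itemize}
\item For the exact solution, $\Ab\Ab^{\ts}+\lambda\Ib_n=\Ub\Db^2\Ub^{\ts}$, so
\[
\xb^{\star(j)}=\Xb\Db\Ub^{\ts}\Ub\Db^{-2}\Ub^{\ts}\bb^{(j)}=\Xb\cb,\qquad \cb:=\Db^{-1}\Ub^{\ts}\bb^{(j)}.
\]
\item For the sketched solve, with $\Pb=\Sb^{(j)}\Sb^{(j)\ts}$,
\[
\Hb_j=\Ub\Db\bigl(\Xb^{\ts}\Pb\Xb+\lambda\Db^{-2}\bigr)\Db\Ub^{\ts}=\Ub\Db(\Ib_n+\Eb^{(j)})\Db\Ub^{\ts},
\]
using $\lambda\Db^{-2}=\Ib_n-\Xb^{\ts}\Xb$. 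Hence $\widetilde{\xb}^{(j)}=\Ab^{\ts}\Hb_j^{-1}\bb^{(j)}=\Xb(\Ib_n+\Eb^{(j)})^{-1}\cb$.
\end{itemize}
Subtracting,
\[
\xb^{\star(j)}-\widetilde{\xb}^{(j)}=\Xb(\Ib_n+\Eb^{(j)})^{-1}\bigl[(\Ib_n+\Eb^{(j)})-\Ib_n\bigr]\cb=\Xb(\Ib_n+\Eb^{(j)})^{-1}\Eb^{(j)}\cb.
\]
Now $\Eb^{(j)}\cb=\Xb^{\ts}(\Pb-\Ib_d)\Xb\cb=\Xb^{\ts}(\Pb-\Ib_d)\xb^{\star(j)}$, which is the claimed identity.

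\textbf{Norm consequences.} The columns of $\Xb=\Vb\Slam$ have norm at most one, so $\twonorm{\Xb}=\twonorm{\Slam}\le1$. The general norm bound then follows from submultiplicativity and the $c_\lambda$ bound, and \eqref{eq:periter-struct} follows the same way using the $1/(1-\varepsilon)$ bound.

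The main point of care is the factorization $\Ab=\Ub\Db\Xb^{\ts}$ and the identity $\lambda\Db^{-2}=\Ib_n-\Xb^{\ts}\Xb$; everything else is routine.
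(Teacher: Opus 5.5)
Your proof is correct. The positivity argument matches the paper's, but you obtain the error identity by a genuinely different, self-contained route.

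The paper imports $\widetilde{\xb}^{(j)}=\xb^{\star(j)}+\Vb\Slam\Rb^{(j)}\Slam\Sigmab^{-1}\Ub^{\ts}\bb^{(j)}$ from \citet[Lemma~5]{chowdhury2018iterative}. A footnote asserts that their series-form derivation carries over to the closed form $\Rb^{(j)}=-(\Ib_n+\Eb^{(j)})^{-1}\Eb^{(j)}$, and the paper then only simplifies the tail $\Eb^{(j)}\Slam\Sigmab^{-1}\Ub^{\ts}\bb^{(j)}$. You instead derive the sketched solution directly from $\Ab=\Ub\Db\Xb^{\ts}$ and $\Hb_j=\Ub\Db(\Ib_n+\Eb^{(j)})\Db\Ub^{\ts}$. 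Since $\Slam\Sigmab^{-1}=\Db^{-1}$, your formula $\widetilde{\xb}^{(j)}=\Xb(\Ib_n+\Eb^{(j)})^{-1}\Db^{-1}\Ub^{\ts}\bb^{(j)}$ is exactly the cited one.

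Your route makes the identity visibly exact for every sketch realization, with no Neumann series and no smallness of $\Eb^{(j)}$. That is what the lemma's ``for every sketch'' claim and its use in Theorem~\ref{thm:master-var} require, and what the paper's footnote only asserts. It is also the ridge-rescaled analogue of the paper's own leverage-score computation $\Hb_j=\Ub\Sigmab(\Gb+\Ehat^{(j)})\Sigmab\Ub^{\ts}$, since $\Db^2=\Sigmab\Gb\Sigmab$. The paper's version is shorter and keeps the $\Rb^{(j)}$ notation of the work it compares against.

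Using Weyl's inequality instead of a Neumann series for the $1/(1-\varepsilon)$ factor is an equivalent minor variation. One wording fix: $\twonorm{\Xb}\le1$ does not follow from the columns having norm at most one, since that alone does not bound the spectral norm. It follows from $\Xb^{\ts}\Xb=\Slam^2$, which you already noted.
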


\begin{lemma}
\label{lem:ridge-vector-var}
For independent ridge leverage score sketches and every
$\Fcal_{j-1}$-measurable vector $\xb\in\R^d$ with
finite second moment,
\begin{equation}
    \ex\!\left[ \twonorm{\Xb^{\ts}(\Sb^{(j)}\Sb^{(j)\ts}-\Ib_d)\xb}^2
 \mid\Fcal_{j-1}\right]\le\frac{\dlam}{s^{(j)}}\twonorm{\xb}^2.\label{eq:ridge-main-variance}
\end{equation}

\end{lemma}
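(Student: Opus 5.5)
Conditioning on $\Fcal_{j-1}$ makes $\xb$ deterministic, while $\Sb^{(j)}$ is drawn independently of $\Fcal_{j-1}$ with the fixed ridge leverage probabilities $p_i=w_i/\dlam$. The estimator then reduces to an average of $s=s^{(j)}$ i.i.d.\ rank-one terms, and the claim follows from a standard variance computation. Write
\begin{equation*}
\Xb^{\ts}\Sb^{(j)}\Sb^{(j)\ts}\xb=\frac{1}{s}\sum_{k=1}^{s}\frac{x_{i_k}}{p_{i_k}}\Xb_{i_k\ast}^{\ts}=:\frac1s\sum_{k=1}^s\Yb_k .
\end{equation*}
Each $\Yb_k$ satisfies $\ex[\Yb_k\mid\Fcal_{j-1}]=\sum_i x_i\Xb_{i\ast}^{\ts}=\Xb^{\ts}\xb$, so the quantity inside the norm is a centered average of i.i.d.\ vectors.

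Independence and centering give
\begin{equation*}
\ex\!\left[\twonorm{\Xb^{\ts}(\Sb^{(j)}\Sb^{(j)\ts}-\Ib_d)\xb}^2\mid\Fcal_{j-1}\right]
=\frac1s\Big(\ex\twonorm{\Yb_1}^2-\twonorm{\Xb^{\ts}\xb}^2\Big)\le\frac1s\,\ex\twonorm{\Yb_1}^2 .
\end{equation*}
Next I compute the second moment:
\begin{equation*}
\ex\twonorm{\Yb_1}^2=\sum_i p_i\frac{x_i^2}{p_i^2}\twonorm{\Xb_{i\ast}}^2=\sum_i\frac{x_i^2w_i}{p_i}=\dlam\sum_i x_i^2=\dlam\twonorm{\xb}^2 .
\end{equation*}
This gives \eqref{eq:ridge-main-variance}. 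We do not need $\twonorm{\Slam}\le1$, because the ridge leverage probabilities cancel the weights $w_i$ exactly.

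The only delicate step is the conditioning. I must justify that, given $\Fcal_{j-1}$, the indices $i_1,\dots,i_s$ of $\Sb^{(j)}$ are still i.i.d.\ from $p$. This holds because the baseline RLS sketches are drawn independently across iterations with a fixed $p$, and $\xb$ is $\Fcal_{j-1}$-measurable. I would apply the computation above for fixed $\xb$, using a regular conditional distribution or simply the independence lemma for conditional expectations. Finite second moment of $\xb$ ensures every conditional expectation is well defined and integrable. Indices with $p_i=0$ have $w_i=0$, so they are skipped by the paper's convention and contribute nothing.
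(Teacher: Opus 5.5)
Your proof is correct and is essentially the paper's argument: the paper conditions on $\Fcal_{j-1}$ and applies Lemma~\ref{lem:variance} (which cites Theorem~22 of \citet{drineas2017lectures}) with $p_i=w_i/\dlam$, and that is the same i.i.d.\ second-moment computation you carry out by hand, mirroring the paper's explicit proof of Lemma~\ref{lem:colev-vector-var}. One small nit: if $\xb$ has mass on indices with $w_i=0$ (which you skip), your second-moment line should read $\dlam\sum_{i:w_i>0}x_i^2\le\dlam\twonorm{\xb}^2$ rather than an equality, which leaves the bound unchanged.
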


Appendix~\ref{sec:ridge-tools} contains the
\hyperref[proof:ridge-identity]{proof of Lemma~\ref*{lem:periter}} and the
\hyperref[proof:ridge-vector-var]{proof of Lemma~\ref*{lem:ridge-vector-var}}.
The first lemma isolates the sketch error applied to the current residual solution. The second controls that product conditionally on previous sketches. Together they give the following convergence result.

\begin{theorem}[Fresh ridge leverage sampling]
\label{thm:ridgelev}\label{cor:ridgelev-ms}\label{cor:ridgelev-var-scale}
Let $c_\lambda=(\smax^2+\lambda)/\lambda$ as in~\eqref{eq:clambda}.
Running Algorithm~\ref{alg:fresh} with RLS gives
\begin{equation}
  \ex\twonorm{\xhatstar-\xstar}^2
  \le c_\lambda^{2t}\prod_{j=1}^{t}\frac{\dlam}{s^{(j)}}
       \twonorm{\xstar}^2.
  \label{eq:ridge-ms-det}
\end{equation}
For every $\delta\in(0,1)$, with probability at least $1-\delta$,
\begin{equation}
  \twonorm{\xhatstar-\xstar}
  \le \delta^{-1/2}c_\lambda^t
       \prod_{j=1}^{t}\sqrt{\frac{\dlam}{s^{(j)}}}\,
       \twonorm{\xstar}.
  \label{eq:ridge-hp-det}
\end{equation}
In particular, for $\varepsilon\in(0,1)$, a common sketch size
\begin{equation}
  s\ge \frac{c_\lambda^2\dlam}{\varepsilon^2}
  \label{eq:ridge-var-sample}
\end{equation}
gives $\ex\twonorm{\xhatstar-\xstar}^2\le
\varepsilon^{2t}\twonorm{\xstar}^2$ and
$\twonorm{\xhatstar-\xstar}\le
\delta^{-1/2}\varepsilon^t\twonorm{\xstar}$ with probability at least
$1-\delta$.
\end{theorem}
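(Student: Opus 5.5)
The plan is to combine Lemma~\ref{lem:periter} and Lemma~\ref{lem:ridge-vector-var} into a one-step conditional contraction for the residual solutions, and then iterate it with the tower property. By the residual identities (Lemma~\ref{lem:chow-rec}), $\xb^{\star(j+1)}=\xb^{\star(j)}-\widetilde{\xb}^{(j)}$ and $\xhatstar-\xstar=\widetilde{\xb}^{(t)}-\xb^{\star(t)}$. It therefore suffices to show that
\begin{equation*}
\ex\!\left[\twonorm{\xb^{\star(j)}-\widetilde{\xb}^{(j)}}^2\mid\Fcal_{j-1}\right]\le c_\lambda^2\frac{\dlam}{s^{(j)}}\twonorm{\xb^{\star(j)}}^2
\end{equation*}
for every $j$, and then to chain these bounds.

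\textbf{One-step bound.} Lemma~\ref{lem:periter} holds for every realization of the sketch, so it gives the pointwise inequality $\twonorm{\xb^{\star(j)}-\widetilde{\xb}^{(j)}}\le c_\lambda\twonorm{\Xb^{\ts}(\Sb^{(j)}\Sb^{(j)\ts}-\Ib_d)\xb^{\star(j)}}$. The vector $\xb^{\star(j)}$ is $\Fcal_{j-1}$-measurable, and the sketch $\Sb^{(j)}$ is drawn independently of $\Fcal_{j-1}$ from the fixed RLS distribution. Squaring the pointwise bound, taking conditional expectations, and applying Lemma~\ref{lem:ridge-vector-var} with $\xb=\xb^{\star(j)}$ yields the displayed one-step inequality. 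We must check that $\xb^{\star(j)}$ has finite second moment. This follows by induction: the pointwise bound together with the fact that the sketch takes finitely many values (all $p_i>0$) shows that $\xb^{\star(j)}$ is a bounded random vector for each $j$.

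\textbf{Chaining.} We set $e_j=\ex\twonorm{\xb^{\star(j+1)}}^2=\ex\twonorm{\xb^{\star(j)}-\widetilde{\xb}^{(j)}}^2$. By the tower property and the one-step bound, $e_j\le c_\lambda^2(\dlam/s^{(j)})\,\ex\twonorm{\xb^{\star(j)}}^2$. Induction from $\xb^{\star(1)}=\xstar$, which is deterministic, gives $\ex\twonorm{\xb^{\star(t+1)}}^2\le c_\lambda^{2t}\prod_{j}(\dlam/s^{(j)})\twonorm{\xstar}^2$. Because $\xhatstar-\xstar=-\xb^{\star(t+1)}$, this is exactly \eqref{eq:ridge-ms-det}.

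\textbf{High probability and sketch size.} Markov's inequality applied to $\twonorm{\xhatstar-\xstar}^2$ at level $\delta^{-1}$ times the bound \eqref{eq:ridge-ms-det}, followed by a square root, gives \eqref{eq:ridge-hp-det}. Substituting a common sketch size satisfying \eqref{eq:ridge-var-sample} makes each factor $c_\lambda^2\dlam/s\le\varepsilon^2$, which produces both stated consequences.

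The only genuinely delicate step is the conditioning. We must justify applying Lemma~\ref{lem:ridge-vector-var} to the random, history-dependent vector $\xb^{\star(j)}$ while using the pointwise (not expected) bound from Lemma~\ref{lem:periter}. The factor $(\Ib_n+\Eb^{(j)})^{-1}$ depends on the same sketch $\Sb^{(j)}$, so it cannot be pulled out of the expectation. The uniform operator bound $c_\lambda$ is what lets us bound this factor deterministically before taking the expectation. Everything else is routine.
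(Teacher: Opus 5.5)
Your proposal is correct and follows the paper's argument. The paper applies Theorem~\ref{thm:master-var} with $\Hcal_j=\Fcal_j$ and $R_p^{(j)}=1$. That theorem's proof combines the pointwise bound from Lemma~\ref{lem:periter}, the conditional variance bound, the identity $\twonorm{\xb^{\star(j)}}=\twonorm{\eb_{j-1}}$, the tower property, and one terminal Markov step, which is exactly your argument inlined; your explicit finite-second-moment check is a small detail the paper leaves implicit.
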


Set $\eb_j=\xb^{\star(j)}-\widetilde{\xb}^{(j)}$ for $j=0,\ldots,t$.
The proof in \hyperref[proof:ridge-main]{Appendix~\ref*{app:adaptive-master}}
combines Lemmas~\ref{lem:periter} and~\ref{lem:ridge-vector-var} with
$\xb^{\star(j)}=\eb_{j-1}$ to obtain $\ex\!\left[
 \twonorm{\eb_j}^2\mid\Fcal_{j-1}\right]
 \le c_\lambda^2\frac{\dlam}{s^{(j)}}\twonorm{\eb_{j-1}}^2.$

Taking expectations and iterating this inequality with the tower property proves
\eqref{eq:ridge-ms-det}, since $\eb_0=\xstar$ and $\eb_t=\xstar-\xhatstar$.
One application of Markov's inequality gives
\eqref{eq:ridge-hp-det}. Comparing to \citet{chowdhury2018iterative}, we again save a factor of $\log d_\lambda$ on the sketch size when $c_\lambda$ is moderate.

Similarly, we provide a separate bound that replaces $c_\lambda$ by $(1-\varepsilon)^{-1}$ at the cost of a larger sketch size, which also yields an additional contraction factor of size $O(\frac{1}{\sqrt{\log d_\lambda}})$. Theorem~\ref{thm:ridgelev-struct} gives
the full statement and conditions; its proof is in
\hyperref[proof:ridge-embedding]{Appendix~\ref*{app:ridge-struct}}.

\section{Residual-Aware Sampling}
\label{sec:oracle}

The ridge leverage scores used by~\citet{chowdhury2018iterative}
depend only on the design matrix. We develop probabilities that also use the current residual solution. For a fixed
$\xb \in \R^d$ with nonzero coordinates, write
$Z(\xb)=\sum_i|\xb_i|\sqrt{w_i}$.

\begin{proposition}[Optimal sampling for the matrix product]
\label{prop:oracle}
The distribution
\begin{equation*}
 p_i^\star(\xb)=\frac{|\xb_i|\sqrt{w_i}}{Z(\xb)}
\end{equation*}
minimizes
$\ex\twonorm{\Xb^{\ts}(\Sb\Sb^{\ts}-\Ib_d)\xb}^2$.
Furthermore, at $p_i^\star(\xb), \ex\twonorm{\Xb^{\ts}(\Sb\Sb^{\ts}-\Ib_d)\xb}^2 \leq s^{-1}Z(\xb)^2.$
\end{proposition}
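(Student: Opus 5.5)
The plan is to compute the variance exactly and then minimize it over the probability simplex. Write $\Sb\Sb^{\ts}=\frac1s\sum_{k=1}^s \eb_{i_k}\eb_{i_k}^{\ts}/p_{i_k}$. This gives
\[
\Xb^{\ts}(\Sb\Sb^{\ts}-\Ib_d)\xb=\frac1s\sum_{k=1}^s\Big(\frac{\xb_{i_k}\Xb_{i_k\ast}^{\ts}}{p_{i_k}}-\Xb^{\ts}\xb\Big),
\]
an average of $s$ i.i.d.\ zero-mean vectors. The mean is zero since $\sum_i p_i\,\xb_i\Xb_{i\ast}^{\ts}/p_i=\Xb^{\ts}\xb$. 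Cross terms vanish by independence, so
\[
\ex\twonorm{\Xb^{\ts}(\Sb\Sb^{\ts}-\Ib_d)\xb}^2=\frac1s\Big(\sum_{i=1}^d\frac{\xb_i^2 w_i}{p_i}-\twonorm{\Xb^{\ts}\xb}^2\Big),
\]
using $w_i=\twonorm{\Xb_{i\ast}}^2$. This is the same second-moment computation as in Lemma~\ref{lem:ridge-vector-var}, done here with general $p$ rather than $p^{\mathrm{rls}}$.

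Only the term $\sum_i \xb_i^2w_i/p_i$ depends on $p$. Minimizing it subject to $\sum_ip_i=1$, $p_i>0$ is a Cauchy--Schwarz step:
\[
Z(\xb)^2=\Big(\sum_i \frac{|\xb_i|\sqrt{w_i}}{\sqrt{p_i}}\sqrt{p_i}\Big)^2\le \sum_i\frac{\xb_i^2w_i}{p_i}\cdot\sum_ip_i=\sum_i\frac{\xb_i^2w_i}{p_i}.
\]
Equality holds iff $\sqrt{p_i}\propto|\xb_i|\sqrt{w_i}/\sqrt{p_i}$, i.e.\ $p_i\propto|\xb_i|\sqrt{w_i}$, which is exactly $p^\star(\xb)$. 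Alternatively one can use a Lagrange multiplier, since the objective is convex in $p$. The minimum value is therefore $s^{-1}\bigl(Z(\xb)^2-\twonorm{\Xb^{\ts}\xb}^2\bigr)\le s^{-1}Z(\xb)^2$, which gives the stated bound.

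The main subtlety is positivity. Coordinates with $\xb_i=0$ or $w_i=0$ would receive probability zero. Two conventions handle this: the standing assumption that outcomes with zero probability are skipped, or the proposition's hypothesis that $\xb$ has nonzero coordinates. In either case the sum over $i$ runs only over the support, and those indices contribute nothing to $\Xb^{\ts}\xb$. So the unbiasedness and variance identity remain valid, and the Cauchy--Schwarz argument is carried out over that support. Beyond this, the argument is routine.
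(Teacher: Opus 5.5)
Your proposal is correct and is essentially the paper's argument. The paper simply cites Theorem~22 of \citet{drineas2017lectures}, applied to approximate matrix multiplication with $\Xb^{\ts}$ and the single column $\xb$. Your exact i.i.d.\ variance identity followed by the Cauchy--Schwarz minimization over the simplex is precisely the content of that theorem, including the sharper value $s^{-1}\bigl(Z(\xb)^2-\twonorm{\Xb^{\ts}\xb}^2\bigr)$, and your handling of indices with $\xb_i\sqrt{w_i}=0$ matches the paper's standing convention of strictly positive competing distributions with zero-probability outcomes skipped.
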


The proof is a direct application of Theorem 22 from~\citet{drineas2017lectures}. Now recall from  \eqref{eq:ridge-main-variance} that for ridge leverage scores, $\ex\twonorm{\Xb^{\ts}(\Sb\Sb^{\ts}-\Ib_d)\xb}^2 \leq s^{-1} \left( d_\lambda \twonorm{\xb}^2 \right)$. We introduce the following factor to quantify the advantage of oracle sampling over ridge leverage score sampling.

\begin{equation}
 \rho(\xb)=\frac{Z(\xb)^2}{\dlam\twonorm{\xb}^2}\in(0,1].
 \label{eq:rho}
\end{equation}
Cauchy--Schwarz gives $\rho(\xb)\le1$, with equality exactly when
$|\xb_i|$ is proportional to $\sqrt{w_i}$.

Surprisingly, our numerical experiments suggest that the value of $\rho(\xb)$ is very close to $2 / \pi$. We provide some theoretical justifications below.

Fix a nonzero $\xb \in \operatorname{range}(\Xb)$, and write $\xb = \Xb \hb$. 
Draw an index $i$ with probability $w_i/\dlam$ over indices with $w_i>0$, and define $\zeta=\frac{\sqrt{\dlam}\,\xb_i}
{\sqrt{w_i}\,\twonorm{\xb}}.$ Since $ \xb_i / \sqrt{w_i} = \left\langle \Xb_{i\ast} / \twonorm{\Xb_{i\ast}},\,\hb \right\rangle, $
$\zeta$ is a scaled projection of $\hb$ onto a randomly sampled row direction of $\Xb$. 
The following proposition identifies $2/\pi$ as the Gaussian reference value of $\rho(\xb)$ and quantifies the effect of departures from normality.

\begin{proposition}[Gaussian reference for $\rho$]
\label{prop:rho-gaussian}
Let $P_\zeta$ denote the distribution of $\zeta$, and define
\[
\Delta=W_1\!\left(P_\zeta,\mathcal N(0,1)\right),
\]
where $W_1$ denotes the $1$-Wasserstein distance. Then
\begin{equation*}
\left|\rho(\xb)-\frac{2}{\pi}\right|
\le 2\sqrt{\frac{2}{\pi}}\,\Delta+\Delta^2.
\end{equation*}
\end{proposition}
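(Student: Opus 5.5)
First rewrite $\rho(\xb)$ through moments of $\zeta$. Index $i$ is drawn with probability $w_i/\dlam$, so
\[
\ex|\zeta| = \sum_i \frac{w_i}{\dlam}\cdot\frac{\sqrt{\dlam}\,|\xb_i|}{\sqrt{w_i}\,\twonorm{\xb}} = \frac{Z(\xb)}{\sqrt{\dlam}\,\twonorm{\xb}},
\qquad
\ex\zeta^2 = \sum_i \frac{w_i}{\dlam}\cdot\frac{\dlam\,\xb_i^2}{w_i\,\twonorm{\xb}^2} = \frac{\sum_{i:w_i>0}\xb_i^2}{\twonorm{\xb}^2}.
\]
Because $\xb=\Xb\hb$, every coordinate with $w_i=0$ has a zero row $\Xb_{i\ast}$, so $\xb_i=0$ there. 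Hence $\ex\zeta^2=1$ and $\rho(\xb)=(\ex|\zeta|)^2$. For $g\sim\mathcal N(0,1)$ we have $\ex|g|=\sqrt{2/\pi}$, so $2/\pi=(\ex|g|)^2$. The statement therefore reduces to comparing first absolute moments.

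Next, the map $u\mapsto|u|$ is $1$-Lipschitz. Kantorovich--Rubinstein duality, or equivalently taking any coupling of $\zeta$ and $g$ and using $\bigl||\zeta|-|g|\bigr|\le|\zeta-g|$ before optimizing over couplings, gives
\[
\bigl|\ex|\zeta|-\sqrt{2/\pi}\bigr|\le\Delta .
\]
Write $a=\ex|\zeta|$ and $b=\sqrt{2/\pi}$. Then
\[
|a^2-b^2| = |a-b|\,|a+b| \le |a-b|\,(2b+|a-b|) \le 2\sqrt{2/\pi}\,\Delta+\Delta^2 .
\]
This is exactly the claimed inequality.

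The only step needing care is the identity $\ex\zeta^2=1$. It relies on $\xb$ lying in $\operatorname{range}(\Xb)$, which makes zero-leverage coordinates contribute nothing. One might also worry whether $\Delta$ is finite. It is, because $P_\zeta$ is a finitely supported distribution, and finite $\Delta$ is all the argument needs. Everything else is routine.
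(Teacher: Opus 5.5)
Your argument is correct and is essentially the paper's proof: write $\sqrt{\rho(\xb)}=\mathbb{E}|\zeta|$, use that $u\mapsto|u|$ is $1$-Lipschitz to get $\bigl|\sqrt{\rho(\xb)}-\sqrt{2/\pi}\bigr|\le\Delta$, and then factor the difference of squares. You single out $\mathbb{E}\zeta^2=1$ as the step needing care, but the inequality never uses it: $\rho(\xb)=(\mathbb{E}|\zeta|)^2$ follows directly from the definition of $Z(\xb)$, because coordinates with $w_i=0$ contribute nothing to $Z(\xb)$ whether or not $\xb\in\operatorname{range}(\Xb)$.
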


The proof is provided in Appendix~\ref{app:rho-gaussian}, which also reports empirical evidence that $\Delta$ is small and generally decreases with scale for the language model application presented in Section~\ref{sec:experiments}.

\textbf{Oracle convergence.}
Although $p^\star(\xb^{\star(j)})$ depends on previous sketches, it is fixed
before the next sketch is drawn. Let
$\bar\rho_t=(\prod_{j=1}^{t}\rho(\xb^{\star(j)}))^{1/t}$ be the geometric mean
of the oracle factors. Corollary~\ref{cor:oracle}, proved in
\hyperref[proof:oracle-convergence]{Appendix~\ref*{sec:oracle-thy}}, therefore gives,
for a common size $s$ and $\delta\in(0,1)$, with probability at least $1-\delta$,
\begin{equation}
 \twonorm{\xhatstar-\xstar}
 \le\delta^{-1/2}
 \left(c_\lambda\sqrt{\frac{\bar\rho_t\dlam}{s}}\right)^t
 \twonorm{\xstar}.
 \label{eq:oracle-var-scale}
\end{equation}
where $c_\lambda=(\smax^2+\lambda)/\lambda$ is defined in~\eqref{eq:clambda}.
If a fixed $\bar\rho\in(0,1]$ bounds every oracle factor at iterations with nonzero residual, then for $\varepsilon\in(0,1)$, $s\ge c_\lambda^2\bar\rho\dlam/\varepsilon^2$ yields the same $\delta^{-1/2}\varepsilon^t$ relative error guarantee as Theorem~\ref{thm:ridgelev}. We note that $\bar \rho$ is unknown before running the algorithm. An upper bound on $\bar\rho$ would be interesting future work.

\textbf{A practical approximation.}
The oracle requires the unknown residual solution. Algorithm~\ref{alg:adaptive} gives a practical approximation. It draws an initial ridge leverage sketch $\Sb^{(0)}$ of size $s_0$ and caches a
factorization of
$\Hb_0=\Ab\Sb^{(0)}\Sb^{(0)\ts}\Ab^{\ts}+\lambda\Ib_n$.
At iteration $j$, it forms
\begin{equation}
 \widehat{\xb}^{(j)}=\Ab^{\ts}\Hb_0^{-1}\bb^{(j)},
 \qquad
 \widehat p_i^{(j)}=
 \frac{|\widehat{\xb}_i^{(j)}|\sqrt{w_i}}
      {\sum_k|\widehat{\xb}_k^{(j)}|\sqrt{w_k}}.
 \label{eq:phat}
\end{equation}
This requires a solve with the cached factorization and a multiplication by
$\Ab^{\ts}$, without a new pilot factorization.
The proposal and residual are measurable with respect to
$\Fcal_{j-1}^{\mathrm{ad}}=
\sigma(\Sb^{(0)},\Sb^{(1)},\ldots,\Sb^{(j-1)})$.
For a proposal $p^{(j)}$, define its variance factor by
\begin{equation}
 R_p^{(j)}=
 \frac{\sum_i |\xb_i^{\star(j)}|^2w_i/p_i^{(j)}}
 {\dlam\twonorm{\xb^{\star(j)}}^2}.
 \label{eq:adaptive-main-factor}
\end{equation}
Ridge leverage gives $R_p^{(j)}=1$ and the oracle gives
$R_p^{(j)}=\rho(\xb^{\star(j)})$.
With $\bar R_{p,t}=(\prod_{j=1}^{t}R_p^{(j)})^{1/t}$, the adaptive guarantee
replaces $\bar\rho_t$ by $\bar R_{p,t}$ in \eqref{eq:oracle-var-scale}.
Theorem~\ref{thm:master-var} proves the general
bound in \hyperref[proof:adaptive-master]{Appendix~\ref*{app:adaptive-master}};
Corollary~\ref{cor:pure-adaptive-var} applies it to the pilot proposal in
\hyperref[proof:adaptive-pure]{Appendix~\ref*{sec:adaptive}}.
These factors can exceed one; approximation
by a pilot sketch alone does not establish contraction.

A defensive mixture with weight $\alpha\in(0,1)$ controls the worst-case performance:
\begin{equation}
 \widetilde p_i^{(j)}=(1-\alpha)\widehat p_i^{(j)}
                   +\alpha\frac{w_i}{\dlam},\qquad 0<\alpha<1.
 \label{eq:mix}
\end{equation}
Its sketching error is bounded by $1/\alpha$ times the error by ridge leverage score sampling, regardless of pilot accuracy.
Consequently, a common iteration size
$s\ge c_\lambda^2\dlam/(\alpha\varepsilon^2)$ guarantees mean square contraction
by $\varepsilon^2$ per iteration. Corollary~\ref{cor:mixture-var}, proved in
\hyperref[proof:adaptive-mixture]{Appendix~\ref*{sec:adaptive}}, gives this
guarantee and the comparison with the pure proposal.
\section{Experiments}
\label{sec:experiments}
\newcommand{\ArceneRLS}{1.41\times10^{-4}}
\newcommand{\ArceneOracle}{7.20\times10^{-6}}
\newcommand{\ArceneAdaptive}{3.31\times10^{-5}}
\newcommand{\ArceneMixture}{2.59\times10^{-5}}
\newcommand{\SyntheticRLS}{4.48\times10^{-3}}
\newcommand{\SyntheticOracle}{1.49\times10^{-3}}
\newcommand{\SyntheticAdaptive}{1.10\times10^{-2}}
\newcommand{\SyntheticMixture}{7.77\times10^{-3}}
\newcommand{\ArceneReused}{1.85\times10^{-3}}
\newcommand{\ArceneFresh}{1.52\times10^{-13}}
\newcommand{\QwenLambda}{10^{4.5}}
\newcommand{\QwenDof}{26.05}
\newcommand{\QwenRhoInitial}{0.635}
\newcommand{\QwenBudgetRidge}{3.02\times10^{-10}}
\newcommand{\QwenBudgetOracle}{3.42\times10^{-11}}
\newcommand{\QwenBudgetAdaptive}{9.76\times10^{-11}}
\newcommand{\QwenBudgetMixture}{6.19\times10^{-11}}
\newcommand{\QwenAdaptiveFactorMedian}{0.72}
\newcommand{\QwenAdaptiveFactorMin}{0.660}
\newcommand{\QwenAdaptiveFactorMax}{12.56}
\newcommand{\QwenMixtureFactorMedian}{0.66}
\newcommand{\QwenMixtureFactorMin}{0.617}
\newcommand{\QwenMixtureFactorMax}{0.695}

We compare uniform, leverage, ridge leverage, oracle, adaptive, and mixture
sampling on ARCENE, synthetic data, and a ridge probe built from saved
Qwen2.5 representations. The ARCENE and synthetic experiments extend the
experiments of \citet{chowdhury2018iterative}. All six methods use the same residual update with fresh sketches; a separate experiment compares fixed and
fresh sketches (Appendix~\ref{app:baseline-settings}).
Each dataset uses the same six panels: (a) solution error, (b) objective
excess, (c) a sketch size sweep, (d) a regularization sweep, (e) the oracle
factor along each method's residual trajectory, and (f) the actual proposal
factor for adaptive sampling and its mixture. Panel (d) displays
regularization through its effective dimension $\dlam$.

Curves show medians and interquartile ranges across independent sketch
trials. We report relative error
$\twonorm{\xhatstar-\xstar}/\twonorm{\xstar}$ and objective excess
$(f(\xhatstar)-f(\xstar))/f(\xstar)$, evaluated stably as described in
Appendix~\ref{app:implementation}. The displayed budget $s$ is the baseline
number of columns sketched per update. For a run of $t$ updates, adaptive methods
spend $s$ columns on an initial pilot and $\lfloor s(t-1)/t\rfloor$ per
update, so their total is at most $st$. The mixture uses ridge weight $0.1$.

\textbf{Comparison with the leverage baselines.}
ARCENE \citep{guyon2004result} gives a $200\times10000$ design, scaled
globally to $[0,1]$, with labels $\pm1$. We use $15$ trials and
$\lambda=10$ except in the regularization sweep.
In Figure~\ref{fig:arcene}(a,b), the oracle reaches numerical precision significantly faster than the 3 baselines. Adaptive and mixture methods behave very similarly to the orcale method. Their lower errors also persist across the sketch size and regularization sweeps in panels (c,d). We additionally perform the same experiments on the synthetic data used in \citet{chen2015fast}, which has $n=500$, $d=50000$, and we use 5 trials. The results are similar to ARCENE except that the gap between the 3 baselines becomes smaller. See the full result at Figure~\ref{fig:synth}.

A separate fixed versus fresh comparison shows the benefit of refreshing sketches (Figure~\ref{fig:fresh}). At the two smaller budgets, median errors eventually grow for every reused-sketch baseline. At the smallest budget, fresh uniform and leverage sampling also show growing median errors, and fresh ridge leverage makes little progress. The residual-aware methods reduce errors at every tested budget. Appendix~\ref{app:baseline-settings}
gives the full settings and additional plots.

\begin{figure}[t]
 \centering
 \includegraphics[width=\linewidth]{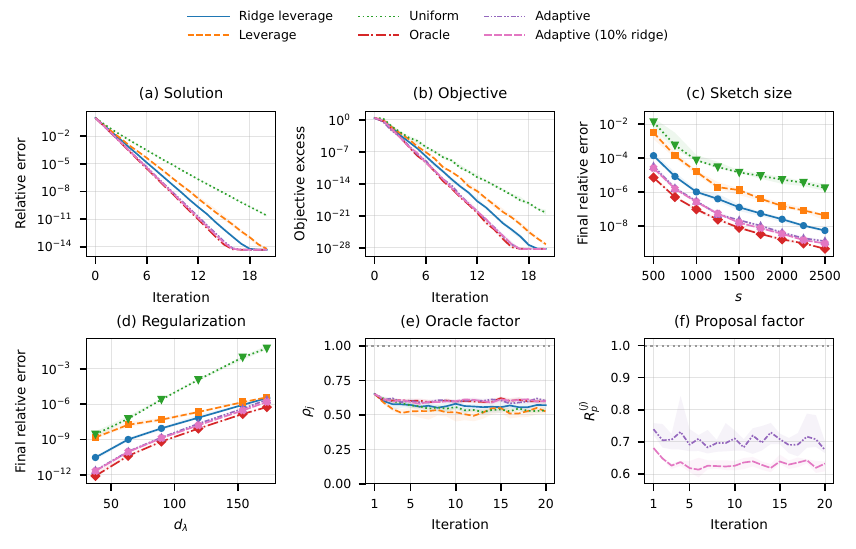}
 \caption{ARCENE, $15$ trials. Panels (a,b,e,f) use 20 updates at
 $s=2400$, $\lambda=10$. Panels (c,d) use 10 updates: (c) varies $s$ at
 $\lambda=10$; (d) varies $\lambda$ at $s=2400$. Bands show interquartile
 ranges. Adaptive budgets include the pilot. Panel (e) shows all six
 methods; (f) shows adaptive sampling and its mixture.
 }
 \label{fig:arcene}
\end{figure}

\begin{figure}[htbp]
 \centering
 \includegraphics[width=\linewidth]{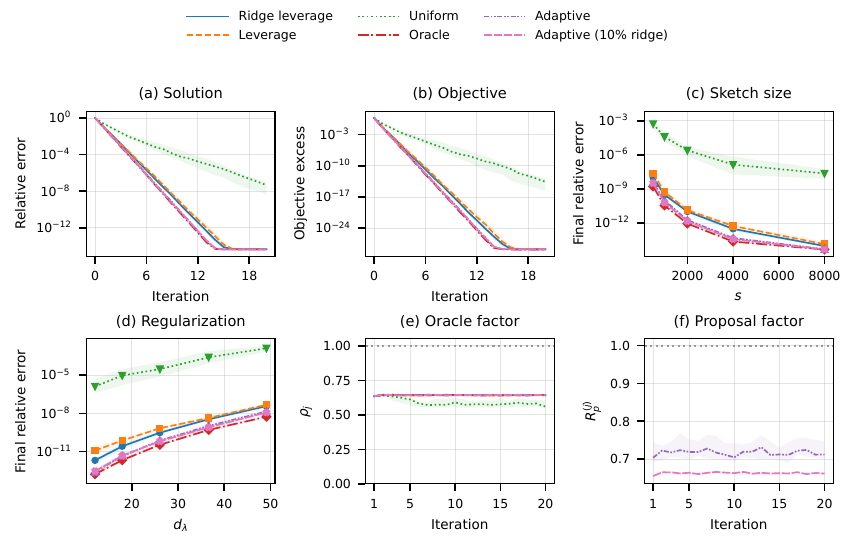}
 \caption{Qwen2.5-32B probe with $102400$ features, $30$ trials. Panels (a,b,e,f)
 use $s=1000$ and 20 updates; (c) varies $s$ with 10 updates.
 These panels use the cross-validated $\lambda=\QwenLambda$. Panel (d) varies
 $\lambda$ at $s=1000$ with 10 updates. Bands show interquartile ranges.
 Adaptive budgets include the pilot. Panel (e) shows all six methods;
 (f) shows adaptive sampling and its mixture.}
 \label{fig:qwen}
\end{figure}

\textbf{Ridge probing of language model representations.}
We fit a ridge probe to frozen Qwen2.5-32B-Instruct representations
\citep{qwen2025technical} for semantic textual similarity
\citep{cer2017stsb}, using 100 training examples.
Concatenating representations from 20 layers at the final retained
non-padding token gives $102{,}400$ features per example.
We account for an unpenalized intercept by centering features and
labels using training means.
The regularization parameter $\lambda$ is selected by 5-fold
cross-validation on the training set.
Appendix~\ref{app:qwen-settings} provides details of feature construction
and parameter selection.

The results of Figure~\ref{fig:qwen} resembel Figure~\ref{fig:arcene} except that uniform sampling converges significantly slower. We again observe the superior performance of residual-aware sampling over the baselines.

\textbf{Observed variance factors.}
Before every update, we record the oracle factor
$\rho_j=\rho(\xb^{\star(j)})$ along each method's residual trajectory
and the actual proposal factor $R_p^{(j)}$ from
\eqref{eq:adaptive-main-factor}. Panels (e,f) use the same trials and all 20 updates as (a,b). Panel (e) shows the best attainable
directional factor for each method's current residual; panel (f) evaluates
the probabilities used by adaptive sampling and its mixture.
Across the three datasets, the mixture has lower median proposal factors
and generally narrower interquartile bands than pure adaptive sampling.
The ridge leverage reference $R_p=1$ is shown as a horizontal line.

\section{Conclusion}
\label{sec:conclusion}
We establish sharper convergence bounds for fresh sampling in the iterative
column sampling framework of \citet{chowdhury2018iterative} for ridge regression.
Building on the analysis of \citet{kacham2022sketching}, we analyze the variance
of the sketched matrix-vector product and exploit conditional independence
across iterations. This analysis yields an oracle sampling rule that minimizes
the variance, along with practical adaptive and mixture strategies that
approximate the oracle. Experiments on synthetic and real data, including
ridge probes on pretrained Qwen2.5 hidden layers, demonstrate gains in solution
accuracy.

A limitation of our analysis is that the sharper convergence guarantees for
oracle, adaptive, and mixture sampling depend on factors determined by the
residual trajectory and, for the practical methods, the quality of the sampling
proposals. These factors are unknown before the algorithm runs. Although the
oracle factor is at most one and the defensive mixture bounds its proposal
factor by $1/\alpha$, these worst-case bounds do not quantify the improvements
observed in our experiments. An open question is to identify conditions on the
problem and the pilot approximation that yield sharper a priori bounds on
these factors or their products across iterations. Such bounds could clarify
when these sampling rules provably improve upon ridge leverage sampling and
guide the choice of sketch sizes and mixture weights.

\label{sec:main-end}

\newpage
\bibliographystyle{abc2027_conference}
\bibliography{abc2027_conference}
\clearpage
\appendix

\section{Algorithms}
\label{app:algorithms}

\begin{algorithm}[htbp]
\caption{Sampling and rescaling \citep{chowdhury2018iterative}}
\label{alg:sample}
\begin{algorithmic}[1]
\STATE \textbf{Input:} Distribution $p$ on $\{1,\ldots,d\}$; integer $s\ge1$.
\STATE Initialize $\Sb=\zero_{d\times s}$.
\FOR{$k=1,\ldots,s$}
\STATE Independently draw $i_k$ with probability $p_{i_k}$.
\STATE Set $\Sb_{i_k,k}=1/\sqrt{s p_{i_k}}$.
\ENDFOR
\STATE \textbf{Return:} $\Sb$.
\end{algorithmic}
\end{algorithm}

\begin{algorithm}[htbp]
\caption{Adaptive sampling with a cached pilot factorization}
\label{alg:adaptive}
\begin{algorithmic}[1]
\STATE \textbf{Input:} $\Ab,\bb$; regularization $\lambda>0$; $t\ge1$; positive integer sizes
$s_0,s^{(1)},\ldots,s^{(t)}$; mixture weight $\alpha\in[0,1)$.
\STATE Draw a pilot sketch $\Sb^{(0)}$ of size $s_0$ using Algorithm~\ref{alg:sample} with $p_i = w_i / d_\lambda$. Factor
$\Hb_0=\Ab\Sb^{(0)}\Sb^{(0)\ts}\Ab^{\ts}+\lambda\Ib_n$.
\STATE Initialize $\bb^{(0)}=\bb$, $\widetilde{\xb}^{(0)}=\zero_d$, and $\yb^{(0)}=\zero_n$.
\FOR{$j=1,\ldots,t$}
\STATE Set $\bb^{(j)}=\bb^{(j-1)}-\lambda\yb^{(j-1)}-\Ab\widetilde{\xb}^{(j-1)}$.
\STATE Solve $\Hb_0\ub=\bb^{(j)}$ using the cached factorization; set
$\widehat{\xb}^{(j)}=\Ab^{\ts}\ub$.
\STATE Let $z=\sum_i|\widehat{\xb}_i^{(j)}|\sqrt{w_i}$.
\STATE Set $\widehat p_i^{(j)}=|\widehat{\xb}_i^{(j)}|\sqrt{w_i}/z$.
\STATE Set $p_i^{(j)}=(1-\alpha)\widehat p_i^{(j)}+\alpha w_i/\dlam$.
\STATE Draw a fresh $\Sb^{(j)}$ of size $s^{(j)}$ using Algorithm~\ref{alg:sample} with $p^{(j)}$.
\STATE Form $\Hb_j=\Ab\Sb^{(j)}\Sb^{(j)\ts}\Ab^{\ts}+\lambda\Ib_n$ and solve
$\Hb_j\yb^{(j)}=\bb^{(j)}$.
\STATE Set $\widetilde{\xb}^{(j)}=\Ab^{\ts}\yb^{(j)}$.
\ENDFOR
\STATE \textbf{Return:} $\xhatstar=\sum_{j=1}^{t}\widetilde{\xb}^{(j)}$.
\end{algorithmic}
\end{algorithm}

\section{Running Time Analysis}

We discuss the running time of Algorithm~\ref{alg:fresh} with LS or RLS, assuming the exact leverage scores or ridge leverage scores are provided. Approximations can be computed efficiently~\citep{drineas2012leverage,clarkson2013lowrank,cohen2017ridge}. Within one iteration of~\eqref{eq:iteration}, the two products $\Ab\widetilde{\xb}^{(j-1)}$ and $\Ab^{\ts}\yb^{(j)}$ cost $O(\nnz{\Ab})$, constructing $\Ab\Sb^{(j)}$ costs $O(ns)$, forming $\Hb_j$ costs $O(n^2s)$, and factoring and solving the resulting $n\times n$ system costs $O(n^3)$. The remaining vector operations are dominated by these costs. Thus each iteration costs $O(\nnz{\Ab}+n^2\max(n,s))$.

\paragraph{Relative solution error.}
Fix $\varepsilon,\varepsilon_{\texttt{alg}},\delta\in(0,1)$, where $\varepsilon$ is the per-iteration contraction parameter. By Theorems~\ref{thm:colev-ms} and~\ref{thm:ridgelev}, choosing the common sketch sizes
\begin{equation}
 s_{\mathrm{LS}}=\left\lceil
   \left(\frac{\smax^2}{\lambda}\right)^2\frac{n-1}{\varepsilon^2}
 \right\rceil,
 \qquad
 s_{\mathrm{RLS}}=\left\lceil\frac{c_\lambda^2\dlam}{\varepsilon^2}\right\rceil
 \label{eq:runtime-sketch-sizes}
\end{equation}
with $c_\lambda=(\smax^2+\lambda)/\lambda$ from~\eqref{eq:clambda}
gives $\twonorm{\xhatstar-\xstar}\le \delta^{-1/2}\varepsilon^t\twonorm{\xstar}$ with probability at least $1-\delta$. Consequently, $\twonorm{\xhatstar-\xstar}\le \varepsilon_{\texttt{alg}}\twonorm{\xstar}$ is guaranteed after
\begin{equation*}
 t_{\mathrm{sol}}=\left\lceil
 \frac{\log(1/\varepsilon_{\texttt{alg}})+\tfrac12\log(1/\delta)}
      {\log(1/\varepsilon)}
 \right\rceil
\end{equation*}
iterations. With the sketch sizes in~\eqref{eq:runtime-sketch-sizes}, the running times for $t$ iterations are
\begin{align}
 T_{\mathrm{LS}}(t)
 &=O\!\left(\nnz{\Ab}
   +\frac{n^3}{\varepsilon^2}\left(\frac{\smax^2}{\lambda}\right)^2
   \right)t,
 \label{eq:runtime-ls}\\
 T_{\mathrm{RLS}}(t)
 &=O\!\left(\nnz{\Ab}+n^3
   +\frac{n^2c_\lambda^2\dlam}{\varepsilon^2}\right)t.
 \label{eq:runtime-rls}
\end{align}
Here the LS simplification assumes $\lambda\le\smax^2$.

\paragraph{Relative objective error.}
Let $f(\xb)=\twonorm{\Ab\xb-\bb}^2+\lambda\twonorm{\xb}^2$ and $\eb=\xhatstar-\xstar$. The optimality condition $\Ab^{\ts}(\Ab\xstar-\bb)+\lambda\xstar=\zero_d$ cancels the cross terms, giving
\[
 f(\xhatstar)-f(\xstar)
 =\twonorm{\Ab\eb}^2+\lambda\twonorm{\eb}^2
 \le(\smax^2+\lambda)\twonorm{\eb}^2.
\]
Since $f(\xstar)\ge\lambda\twonorm{\xstar}^2$ and $c_\lambda=(\smax^2+\lambda)/\lambda$, for $\bb\ne\zero_n$ we obtain
\begin{equation*}
 \frac{f(\xhatstar)-f(\xstar)}{f(\xstar)}
 \le c_\lambda\frac{\twonorm{\xhatstar-\xstar}^2}{\twonorm{\xstar}^2}
 \le\frac{c_\lambda}{\delta}\varepsilon^{2t}
\end{equation*}
with probability at least $1-\delta$, with the sketch sizes in~\eqref{eq:runtime-sketch-sizes}. If $\bb=\zero_n$, the solution and all iterates are zero, so the objective guarantee is trivial. To ensure $f(\xhatstar)\le(1+\varepsilon_{\texttt{alg}})f(\xstar)$ with probability at least $1-\delta$, it suffices to take
\begin{equation*}
 t_{\mathrm{obj}}=\left\lceil
 \frac{\log\!\bigl(c_\lambda/(\delta\varepsilon_{\texttt{alg}})\bigr)}
      {2\log(1/\varepsilon)}
 \right\rceil.
\end{equation*}
The corresponding running times are $T_{\mathrm{LS}}(t_{\mathrm{obj}})$ and $T_{\mathrm{RLS}}(t_{\mathrm{obj}})$ from~\eqref{eq:runtime-ls}--\eqref{eq:runtime-rls}, using the same sketch sizes as in~\eqref{eq:runtime-sketch-sizes}.

\section{Related work}
\label{sec:related}

\paragraph{Sketching for ridge regression.}
\citet{chen2015fast} give a fast relative-error approximation algorithm for
ridge regression. \citet{wang2017sketched} study the statistical and optimization impacts of classical sketch and Hessian sketch used for matrix ridge regression problems. \citet{avron2017sharper} derive bounds for regularized objective
approximation governed by effective dimension rather than rank.
\citet{chowdhury2018iterative} introduce an iterative solver using leverage
and ridge leverage sampling, and ask whether fresh randomness improves
convergence. \citet{kacham2022sketching} prove geometric convergence for
the refreshed solver and improve sparsity guarantees for oblivious sketches.
Their Lemma~3.3 controls sketch error in the direction of the exact residual
solution, the same matrix-vector product analyzed in this paper. 
\paragraph{Iterative sketching and preconditioning.}
\citet{pilanci2016iterative} introduced the Iterative Hessian Sketch (IHS) for convex-constrained least-squares problems, using fresh random sketches to approximate the Hessian at each iteration and achieving geometric convergence with sketch dimensions governed by the intrinsic complexity of the constraint set.
\citet{wang2017dual} connect Hessian sketching to dual random projection
and develop accelerated methods using preconditioned conjugate gradients.
Sketching also provides preconditioners for ridge regression with
stochastic variance reduced gradient \citep{gonen2016solving} and for
kernel ridge systems through random features \citep{avron2017faster}.
\citet{lacotte2019faster} compare fixed and refreshed sketches used in iterative methods for solving overdetermined least-squares problems, where the sketches are data-oblivious. They give a closed-form formula for the expected error of IHS. \citet{lacotte2021newton} consider choosing the sketch sizes adaptively.

\paragraph{Feature and column sampling for ridge regression.}
Leverage scores are used in least-squares regression
\citep{drineas2006regression} and low-rank matrix approximation via
column and row sampling \citep{drineas2006subspace,drineas2008cur,mahoney2009cur}.
Ridge leverage scores connect
sampling to effective dimension in kernel ridge regression
\citep{alaoui2015fast} and support low-rank and Nystr\"om approximation
\citep{cohen2017ridge,musco2017recursive}.
\citet{li2024optimal} use sampling probabilities similar to our oracle
proposal: applied to the residual problem, their ideal rule is
$p_i^{\mathrm{LN},(j)} \propto |\xb_i^{\star(j)}|\twonorm{\Ab_{\ast i}}$,
while ours is $p_i^{\star(j)} \propto |\xb_i^{\star(j)}|\sqrt{w_i}$. Complementarily,~\citet{kacham20a} study deterministic coresets for multi-response ridge regression, obtaining coresets of size $O((d_\lambda+d')/\varepsilon)$, together with corresponding lower bounds, where $d'$ denotes the number of response vectors. Their row-sampling coreset setting, however, is not directly comparable to our column-sampling framework.

\paragraph{Ridge probes for language models.}
Probing fits simple predictors to a model's internal representations
\citep{belinkov2022probing}. Ridge probes have been used to decode spatial and
temporal coordinates \citep{gurnee2024space} and numerical values
\citep{zhu2025numbers}. Our application concerns computing a probe on fixed
hidden features, with semantic similarity as the prediction target.

\section{Preliminaries and embedding bounds}
\label{app:preliminaries}

\subsection{An explicit embedding tail}
\label{app:embedding-tail}
Spectral approximation of matrix products can be controlled through stable
rank \citep{cohen2016optimal}. For the column sampling laws studied here,
matrix concentration gives spectral error bounds for sampled Gram matrices
\citep{tropp2015introduction}. We use the explicit specialization in the
proof of \citet[Theorem~3]{chowdhury2018iterative}, retaining its constants
and admissibility condition below.
\begin{lemma}[Explicit failure probability]
\label{lem:cyd-struct-tail}
Let $\Xb\in\RR{d}{n}$ be nonzero with $\twonorm{\Xb}\le1$, and let $\Sb$ be
constructed by Algorithm~\ref{alg:sample} with
$p_i=\twonorm{\Xb_{i\ast}}^{2}/\|\Xb\|_{F}^{2}$. Fix
$\varepsilon\in(0,1]$. If
\begin{equation}
  s\ge\frac{2\|\Xb\|_{F}^{2}}{\varepsilon^{2}},
  \label{eq:cyd-tail-size}
\end{equation}
then
\begin{align}
  \pr{\twonorm{\Xb^{\ts}\Sb\Sb^{\ts}\Xb-\Xb^{\ts}\Xb}>\varepsilon}
  &\le
  4(1+\|\Xb\|_{F}^{2})
  \exp\!\left(
    -\frac{s\varepsilon^{2}}
    {\|\Xb\|_{F}^{2}(2+2\varepsilon/3)}
  \right) \label{eq:cyd-tail-exact}\\
  &\le
  4(1+\|\Xb\|_{F}^{2})
  \exp\!\left(-\frac{3s\varepsilon^{2}}{8\|\Xb\|_{F}^{2}}\right).
  \label{eq:cyd-tail-simple}
\end{align}
\end{lemma}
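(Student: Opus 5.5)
We will obtain the bound from the matrix Bernstein inequality, following the explicit specialization in the proof of \citet[Theorem~3]{chowdhury2018iterative}.

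\textbf{Decomposition into summands.} With $p_i=\twonorm{\Xb_{i\ast}}^2/\|\Xb\|_F^2$ and $\Sb_{\ast k}=\eb_{i_k}/\sqrt{sp_{i_k}}$, we write
\[
\Xb^{\ts}\Sb\Sb^{\ts}\Xb-\Xb^{\ts}\Xb=\sum_{k=1}^s\Yb_k,
\qquad
\Yb_k=\frac1s\left(\frac{\Xb_{i_k\ast}^{\ts}\Xb_{i_k\ast}}{p_{i_k}}-\Xb^{\ts}\Xb\right).
\]
The $\Yb_k$ are independent, symmetric and mean zero.

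\textbf{Uniform bound on each summand.} Each rank-one term has norm
\[
\twonorm{\Xb_{i\ast}}^2/p_i=\|\Xb\|_F^2 .
\]
Since both the rank-one term and $\Xb^{\ts}\Xb$ are PSD, and $\twonorm{\Xb^{\ts}\Xb}\le1\le\|\Xb\|_F^2$ (the Frobenius norm dominates the spectral norm), we get $\twonorm{\Yb_k}\le\|\Xb\|_F^2/s=:L$.

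\textbf{Variance bound.} We have
\[
\ex[\Yb_k^2]=\frac{1}{s^2}\left(\sum_i\frac{\twonorm{\Xb_{i\ast}}^2}{p_i}\Xb_{i\ast}^{\ts}\Xb_{i\ast}-(\Xb^{\ts}\Xb)^2\right)\preceq\frac{\|\Xb\|_F^2}{s^2}\Xb^{\ts}\Xb .
\]
Summing over $k$ gives the variance proxy
\[
\Vb_Y:=\frac{\|\Xb\|_F^2}{s}\Xb^{\ts}\Xb,
\qquad
v:=\twonorm{\Vb_Y}\le\frac{\|\Xb\|_F^2}{s}.
\]

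\textbf{Intrinsic-dimension matrix Bernstein.} Here the dimension factor must come out as $1+\|\Xb\|_F^2$ rather than $n$. We use the intrinsic-dimension version \citep[Theorem~7.7.1]{tropp2015introduction}, with
\[
\mathrm{intdim}(\Vb_Y)=\mathrm{tr}(\Vb_Y)/\twonorm{\Vb_Y}=\|\Xb\|_F^2/\twonorm{\Xb}^2 .
\]
That theorem gives a tail of the form
\[
4\,\mathrm{intdim}\cdot\exp\!\left(\frac{-\varepsilon^2/2}{v+L\varepsilon/3}\right),
\qquad\text{valid for }\varepsilon\ge\sqrt v+L/3 .
\]

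\textbf{Main obstacle: matching the stated constants.} The intrinsic dimension $\|\Xb\|_F^2/\twonorm{\Xb}^2$ is not bounded by $1+\|\Xb\|_F^2$ when $\twonorm{\Xb}$ is small. To handle this we apply the theorem with the dominating proxy
\[
\Vb'=\frac{\|\Xb\|_F^2}{s}\bigl(\Xb^{\ts}\Xb+\Pb\bigr),
\]
where $\Pb$ is a rank-one projection onto a top singular direction, chosen so that $\twonorm{\Vb'}=\|\Xb\|_F^2/s$ exactly. Then $\mathrm{tr}(\Vb')/\twonorm{\Vb'}\le\|\Xb\|_F^2+1$. Substituting $v=L=\|\Xb\|_F^2/s$ gives the exponent
\[
-\frac{s\varepsilon^2}{\|\Xb\|_F^2(2+2\varepsilon/3)},
\]
which is \eqref{eq:cyd-tail-exact}.

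\textbf{Admissibility and simplification.} The admissibility condition $\varepsilon\ge\sqrt v+L/3$ follows from \eqref{eq:cyd-tail-size}. Indeed, $v\le\varepsilon^2/2$ gives $\sqrt v\le\varepsilon/\sqrt2$, and $L/3\le\varepsilon^2/6\le\varepsilon/6$; since $1/\sqrt2+1/6<1$, the condition holds. Finally, $\varepsilon\le1$ gives $2+2\varepsilon/3\le8/3$, which yields \eqref{eq:cyd-tail-simple}.
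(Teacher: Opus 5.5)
Your overall route is the paper's. The paper takes from the proof of Chowdhury et al.'s Theorem~3 the uniform bound $\rho_1=\|\Xb\|_F^2/s$, the variance norm $\twonorm{\Pb}=\|\Xb\|_F^2/s$, and the resulting tail with prefactor $4(1+\|\Xb\|_F^2)$. It then performs exactly your admissibility check ($1/\sqrt{2}+1/6<1$) and your simplification $2+2\varepsilon/3\le 8/3$. What you add is a reconstruction of those imported Bernstein inputs. Three steps in that reconstruction need repair before the argument stands on its own.

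\emph{First}, the chain $\twonorm{\Xb^{\ts}\Xb}\le 1\le\|\Xb\|_F^2$ is false in general. $\|\Xb\|_F^2$ can be below $1$, e.g.\ $\Xb=\Vb\Slam$ with $\dlam<1$, which is one of the paper's applications. Use $\twonorm{\Xb^{\ts}\Xb}=\twonorm{\Xb}^2\le\|\Xb\|_F^2$ directly; it gives the same $L$.

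\emph{Second}, the added rank-one term cannot be an unscaled projection. With $\Pb=\ub_1\ub_1^{\ts}$ you get $\twonorm{\Vb'}=(1+\twonorm{\Xb}^2)\|\Xb\|_F^2/s$, which worsens the exponent. Take $\Pb=(1-\twonorm{\Xb}^2)\ub_1\ub_1^{\ts}$ instead. Then $\twonorm{\Vb'}=\|\Xb\|_F^2/s$ and $\mathrm{tr}(\Vb')/\twonorm{\Vb'}=\|\Xb\|_F^2+1-\twonorm{\Xb}^2\le1+\|\Xb\|_F^2$.

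\emph{Third}, and most important, the intrinsic-dimension Bernstein bound with prefactor $4\,\mathrm{intdim}$ controls $\lambda_{\max}(\mathbf{Y})$ for $\mathbf{Y}=\sum_k\mathbf{Y}_k$, not $\twonorm{\mathbf{Y}}$. Adding the bound for $-\mathbf{Y}$ yields $8(1+\|\Xb\|_F^2)$, not $4(1+\|\Xb\|_F^2)$. Passing through the Hermitian dilation, as in the rectangular version, gives the same $8$, because it doubles the intrinsic dimension. To get the stated constant, rerun the Laplace-transform argument with the even function $\cosh(\theta a)-1$ instead of $e^{\theta a}-\theta a-1$. Combine Lieb's theorem, the Bernstein moment generating function bound applied to $\pm\mathbf{Y}_k$, and the intrinsic dimension lemma. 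With $\theta=\varepsilon/(v+L\varepsilon/3)$ and $a=\theta\varepsilon$, this gives
\[
\Pr\bigl\{\twonorm{\mathbf{Y}}\ge\varepsilon\bigr\}\le\frac{2}{(1-e^{-a})^{2}}\,\mathrm{intdim}(\Vb')\,e^{-a/2},
\]
where $e^{-a/2}$ is exactly the exponential in \eqref{eq:cyd-tail-exact}. Under \eqref{eq:cyd-tail-size}, $v=L\le\varepsilon^2/2$, so $a\ge2/(1+\varepsilon/3)\ge3/2$. The prefactor is then at most $2/(1-e^{-3/2})^2<3.4<4$. The paper avoids this point only because it cites the already specialized two-sided tail; a self-contained version of your proof needs it.
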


In the concentration argument cited below, $\rho_1$ denotes the uniform
summand bound, and $\Pb$ denotes the positive semidefinite variance bound.

\begin{proof}
In the proof of \citet[Theorem~3]{chowdhury2018iterative}, equations~(119)
and~(124) give
$\rho_{1}=\|\Xb\|_{F}^{2}/s$ and
$\|\Pb\|_{2}=\|\Xb\|_{F}^{2}/s$, respectively. Thus the required
admissibility condition is
\begin{equation}
  \varepsilon\ge
  \sqrt{\frac{\|\Xb\|_{F}^{2}}{s}}
  +\frac{\|\Xb\|_{F}^{2}}{3s}.
  \label{eq:cyd-tail-admissibility}
\end{equation}
Under \eqref{eq:cyd-tail-size}, the right-hand side of
\eqref{eq:cyd-tail-admissibility} is at most
\[
  \frac{\varepsilon}{\sqrt{2}}+\frac{\varepsilon^{2}}{6}
  \le\left(\frac{1}{\sqrt{2}}+\frac{1}{6}\right)\varepsilon
  <\varepsilon,
\]
so the condition is satisfied. Equation~(128) of that proof now gives
\eqref{eq:cyd-tail-exact}. Since $\varepsilon\le1$, we have
$2+2\varepsilon/3\le8/3$, which proves \eqref{eq:cyd-tail-simple}.
\end{proof}

\subsection{Residual identities}
\label{app:residual-identities}
\begin{lemma}[Residual decomposition]
\label{lem:chow-decomp}\label{lem:chow-rec}\label{lem:chow-acc}
For the iteration in \eqref{eq:iteration}, any sketches, and $j=1,\ldots,t$,
\begin{align}
 \xb^{\star(j)}&=\Vb\Slam^2\Sigmab^{-1}\Ub^{\ts}\bb^{(j)},
 \label{eq:residual-svd}\\
 \xb^{\star(j)}&=\xb^{\star(j-1)}-\widetilde{\xb}^{(j-1)},
 \label{eq:residual-rec}\\
 \xstar&=\xb^{\star(t)}+\sum_{j=1}^{t-1}\widetilde{\xb}^{(j)},\qquad
 \xhatstar-\xstar=\widetilde{\xb}^{(t)}-\xb^{\star(t)}.
 \label{eq:residual-decomp}
\end{align}
\end{lemma}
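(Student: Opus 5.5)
We prove the three identities in turn, using only the SVD $\Ab=\Ub\Sigmab\Vb^{\ts}$ and the definitions in \eqref{eq:iteration}; no property of the sketches is needed.

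For \eqref{eq:residual-svd}, we substitute the SVD into the definition $\xb^{\star(j)}=\Ab^{\ts}(\Ab\Ab^{\ts}+\lambda\Ib_n)^{-1}\bb^{(j)}$. Since $\Ab\Ab^{\ts}+\lambda\Ib_n=\Ub(\Sigmab^2+\lambda\Ib_n)\Ub^{\ts}$ and $\Ub$ is orthogonal, we get
\[
\xb^{\star(j)}=\Vb\Sigmab(\Sigmab^2+\lambda\Ib_n)^{-1}\Ub^{\ts}\bb^{(j)}.
\]
The diagonal matrices commute, so $\Sigmab(\Sigmab^2+\lambda\Ib_n)^{-1}=\Slam^2\Sigmab^{-1}$, which gives \eqref{eq:residual-svd}.

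For \eqref{eq:residual-rec}, the key step is to apply the linear map $\Ab^{\ts}(\Ab\Ab^{\ts}+\lambda\Ib_n)^{-1}$ to the residual update. For $j\ge2$,
\[
\bb^{(j)}=\bb^{(j-1)}-\lambda\yb^{(j-1)}-\Ab\Ab^{\ts}\yb^{(j-1)}=\bb^{(j-1)}-(\Ab\Ab^{\ts}+\lambda\Ib_n)\yb^{(j-1)},
\]
because $\widetilde{\xb}^{(j-1)}=\Ab^{\ts}\yb^{(j-1)}$. Applying the map yields
\[
\xb^{\star(j)}=\xb^{\star(j-1)}-\Ab^{\ts}\yb^{(j-1)}=\xb^{\star(j-1)}-\widetilde{\xb}^{(j-1)}.
\]
For $j=1$ we use $\yb^{(0)}=\zero_n$ and $\widetilde{\xb}^{(0)}=\zero_d$, so $\bb^{(1)}=\bb^{(0)}$ and the identity is trivial. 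This is the only place the particular form of $\yb^{(j-1)}$ could matter, and it drops out because the update uses exactly the matrix $\Ab\Ab^{\ts}+\lambda\Ib_n$ of the exact solve. So I expect no real obstacle here; the one point to watch is the indexing convention at $j=1$, where $\xb^{\star(0)}=\xstar$ by definition.

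For \eqref{eq:residual-decomp}, we telescope \eqref{eq:residual-rec} from $j=2$ to $t$:
\[
\xb^{\star(t)}=\xb^{\star(1)}-\sum_{j=1}^{t-1}\widetilde{\xb}^{(j)}.
\]
Since $\xb^{\star(1)}=\xstar$, this is the first identity. Subtracting it from $\xhatstar=\sum_{j=1}^t\widetilde{\xb}^{(j)}$ gives $\xhatstar-\xstar=\widetilde{\xb}^{(t)}-\xb^{\star(t)}$. For $t=1$ the sums are empty and the claims hold directly.
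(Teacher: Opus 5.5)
Your proposal is correct and follows the same three-step route as the paper's proof: substitute the SVD into the exact residual solution, apply $\Ab^{\ts}(\Ab\Ab^{\ts}+\lambda\Ib_n)^{-1}$ to the residual update, and telescope. The only cosmetic difference is that you telescope from $\xb^{\star(1)}=\xstar$ rather than from $\xb^{\star(0)}=\xstar$ with $\widetilde{\xb}^{(0)}=\zero_d$, which is equivalent.
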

\begin{proof}
\phantomsection\label{proof:residual}
These are the identities used by \citet{chowdhury2018iterative}.
Substitution of the SVD into the exact residual solution gives
\eqref{eq:residual-svd}. Apply
$\Ab^{\ts}(\Ab\Ab^{\ts}+\lambda\Ib_n)^{-1}$ to the residual update
in \eqref{eq:iteration} to obtain \eqref{eq:residual-rec}. Telescoping from
$\xb^{\star(0)}=\xstar$, with $\widetilde{\xb}^{(0)}=\zero_d$, gives
\eqref{eq:residual-decomp}.
\end{proof}

\subsection{Fixed sketch comparison}
\label{app:fixed-comparison}
For completeness, the two embedding conditions in
\citet[Theorems~1 and~2]{chowdhury2018iterative} are
\begin{align*}
 \twonorm{\Vb^{\ts}\Sb\Sb^{\ts}\Vb-\Ib_n}&\le\varepsilon/2,\\
 \twonorm{\Slam\Vb^{\ts}\Sb\Sb^{\ts}\Vb\Slam-\Slam^2}
 &\le\varepsilon/(4\sqrt{2}).
\end{align*}
For $\varepsilon\in(0,1)$, the first gives
\begin{equation*}
 \twonorm{\xhatstar-\xstar}\le\varepsilon^t\twonorm{\xstar},
\end{equation*}
and the second gives
\begin{equation*}
 \twonorm{\xhatstar-\xstar}\le\frac{\varepsilon^t}{2}
 \left(\twonorm{\xstar}+
 \frac{\twonorm{\Ub_{k,\perp}^{\ts}\bb}}{\sqrt{2\lambda}}\right).
\end{equation*}
Here $k \in \{1,\cdots,n \}$ counts singular values with $\sigma_i^2>\lambda$, and $\Ub_{k,\perp}$ contains the remaining left singular vectors. Leverage and ridge leverage sampling respectively satisfy these events with probability at least $1-\delta$, for
$\delta\in(0,1)$, with sufficient
sample sizes of order $O(\varepsilon^{-2} n \ln n)$ and $O(\varepsilon^{-2} d_\lambda \ln d_\lambda)$.
One sketch is reused for all iterations in these bounds. 

\section{Additional Results for Leverage Sampling}
\label{app:colev-proofs}

Throughout this section, sketches use independent sampling with replacement
with $p_i=\twonorm{\Vb_{i\ast}}^2/n$ and deterministic sizes $s^{(j)}$.
Rescaling gives
$\ex[\Vb^{\ts}\Sb^{(j)}\Sb^{(j)\ts}\Vb]=\Ib_n$ and therefore
$\ex[\Ehat^{(j)}]=\zero$.

\subsection{Error Identities and Vector Variance}
\label{sec:telescope}
\label{app:telescope}

We first record the commutation identity used to telescope the residuals.
\begin{lemma}[Commutation identity]
\label{lem:commute}
\label{app:commute}
Define $\widetilde{\Rb}^{(j)}=\Ib_n-
(\Ib_n+\Ehat^{(j)}\Gb^{-1})^{-1}$. Then
\begin{equation}
  \Gb^{-1}\widetilde{\Rb}^{(j)}
  =-\Rhat^{(j)}\Gb^{-1}.
  \label{eq:commute}
\end{equation}
\end{lemma}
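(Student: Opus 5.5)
The identity is purely algebraic; the plan is a push-through argument. Write $\Mb=\Ehat^{(j)}$ for short. Both sides are well defined because $\Ib_n+\Mb\Gb^{-1}=(\Gb+\Mb)\Gb^{-1}$ and $\Ib_n+\Gb^{-1}\Mb=\Gb^{-1}(\Gb+\Mb)$. Here $\Gb+\Mb\succ\zero$ was already noted after \eqref{eq:R}, and $\Gb$ is invertible since $\Gb=\Ib_n+\lambda\Sigmab^{-2}\succ\zero$.

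First rewrite $\widetilde{\Rb}^{(j)}$ in closed form, mirroring the second expression in \eqref{eq:R}. Using $(\Ib_n+\Mb\Gb^{-1})^{-1}=\Gb(\Gb+\Mb)^{-1}$, we get
\[
\widetilde{\Rb}^{(j)}=\Ib_n-\Gb(\Gb+\Mb)^{-1}=\bigl((\Gb+\Mb)-\Gb\bigr)(\Gb+\Mb)^{-1}=\Mb(\Gb+\Mb)^{-1}.
\]
Hence $\Gb^{-1}\widetilde{\Rb}^{(j)}=\Gb^{-1}\Mb(\Gb+\Mb)^{-1}$.

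Second, the right-hand side is $-\Rhat^{(j)}\Gb^{-1}=(\Gb+\Mb)^{-1}\Mb\Gb^{-1}$ by \eqref{eq:R}. So the claim reduces to
\[
\Gb^{-1}\Mb(\Gb+\Mb)^{-1}=(\Gb+\Mb)^{-1}\Mb\Gb^{-1}.
\]
Multiply on the left by $\Gb+\Mb$ and on the right by $\Gb$, both of which are invertible. The left side becomes $(\Gb+\Mb)\Gb^{-1}\Mb(\Gb+\Mb)^{-1}\Gb$. Expanding $(\Gb+\Mb)\Gb^{-1}\Mb=\Mb+\Mb\Gb^{-1}\Mb=\Mb\Gb^{-1}(\Gb+\Mb)$, this equals $\Mb\Gb^{-1}(\Gb+\Mb)(\Gb+\Mb)^{-1}\Gb=\Mb$. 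The right side becomes $\Mb$ directly, which proves the identity.

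There is no real obstacle. The only care needed is keeping the order of the noncommuting factors straight and justifying the invertibility of $\Gb+\Mb$, and that was established earlier.
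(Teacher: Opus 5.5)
Your proof is correct and is essentially the paper's argument: both rest on the two factorizations $\Gb+\Ehat^{(j)}=\Gb(\Ib_n+\Gb^{-1}\Ehat^{(j)})=(\Ib_n+\Ehat^{(j)}\Gb^{-1})\Gb$. The paper inverts this to obtain $(\Ib_n+\Gb^{-1}\Ehat^{(j)})^{-1}\Gb^{-1}=\Gb^{-1}(\Ib_n+\Ehat^{(j)}\Gb^{-1})^{-1}$ and substitutes directly. You instead pass through the closed forms $\widetilde{\Rb}^{(j)}=\Ehat^{(j)}(\Gb+\Ehat^{(j)})^{-1}$ and $\Rhat^{(j)}=-(\Gb+\Ehat^{(j)})^{-1}\Ehat^{(j)}$ and check the resulting identity by clearing inverses, which is a slightly longer route to the same push-through identity.
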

\begin{proof}
Both $\Ib_{n}+\Gb^{-1}\Ehat^{(j)}$ and $\Ib_{n}+\Ehat^{(j)}\Gb^{-1}$ are invertible, and
\begin{equation*}
  \Gb(\Ib_{n}+\Gb^{-1}\Ehat^{(j)})
  =\Gb+\Ehat^{(j)}
  =(\Ib_{n}+\Ehat^{(j)}\Gb^{-1})\Gb .
\end{equation*}
Inverting this identity gives 
\begin{equation*}
  (\Ib_{n}+\Gb^{-1}\Ehat^{(j)})^{-1}\Gb^{-1}
  =\Gb^{-1}(\Ib_{n}+\Ehat^{(j)}\Gb^{-1})^{-1}.
\end{equation*}
Hence, using the closed forms \eqref{eq:R} and the definition in Lemma~\ref{lem:commute},
\begin{align*}
  \Gb^{-1}\widetilde{\Rb}^{(j)}
  &=\Gb^{-1}\big[\Ib_{n}-(\Ib_{n}+\Ehat^{(j)}\Gb^{-1})^{-1}\big]
   =\Gb^{-1}-\Gb^{-1}(\Ib_{n}+\Ehat^{(j)}\Gb^{-1})^{-1}\\
  &=\Gb^{-1}-(\Ib_{n}+\Gb^{-1}\Ehat^{(j)})^{-1}\Gb^{-1}
   =-\big[(\Ib_{n}+\Gb^{-1}\Ehat^{(j)})^{-1}-\Ib_{n}\big]\Gb^{-1}
   =-\Rhat^{(j)}\Gb^{-1},
\end{align*}
which is \eqref{eq:commute}.
\end{proof}

\begin{proof}[\textbf{Proof of Lemma~\ref{lem:telescope}}]
\phantomsection\label{proof:column-product}
By Lemma~\ref{lem:chow-decomp}, $\xhatstar-\xstar=\widetilde{\xb}^{(t)}-\xb^{\star(t)}$,
and by \citet[Lemma 11]{chowdhury2018iterative} this equals
$\Vb\Rhat^{(t)}\Gb^{-1}\Sigmab^{-1}\Ub^{\ts}\bb^{(t)}$. We now unwind $\bb^{(t)}$.
For $j=2,\ldots,t$, the updates of Algorithm~\ref{alg:fresh} give
$\bb^{(j)}=\bb^{(j-1)}-(\Ab\Ab^{\ts}+\lambda\Ib_{n})
(\Ab\Sb^{(j-1)}\Sb^{(j-1)\ts}\Ab^{\ts}+\lambda\Ib_{n})^{-1}\bb^{(j-1)}$ (collecting
$\lambda\yb^{(j-1)}+\Ab\widetilde{\xb}^{(j-1)}=(\Ab\Ab^{\ts}+\lambda\Ib_{n})\yb^{(j-1)}$).
Using the thin SVD and $\Ub^{\ts}\Ub=\Ib_{n}$,
\begin{align*}
  \Ab\Ab^{\ts}+\lambda\Ib_{n}&=\Ub\Sigmab(\Ib_{n}+\lambda\Sigmab^{-2})\Sigmab\Ub^{\ts}
  =\Ub\Sigmab\Gb\Sigmab\Ub^{\ts},\\
  \Ab\Sb^{(j-1)}\Sb^{(j-1)\ts}\Ab^{\ts}+\lambda\Ib_{n}
  &=\Ub\Sigmab\big(\Vb^{\ts}\Sb^{(j-1)}\Sb^{(j-1)\ts}\Vb+\lambda\Sigmab^{-2}\big)\Sigmab\Ub^{\ts}\\
  &=\Ub\Sigmab(\Ehat^{(j-1)}+\Gb)\Sigmab\Ub^{\ts},
\end{align*}
where the last equality uses $\Ehat^{(j-1)}+\Gb=(\Vb^{\ts}\Sb^{(j-1)}\Sb^{(j-1)\ts}\Vb-\Ib_{n})
+(\Ib_{n}+\lambda\Sigmab^{-2})$. Hence
\begin{align*}
  (\Ab\Ab^{\ts}+\lambda\Ib_{n})(\Ab\Sb^{(j-1)}\Sb^{(j-1)\ts}\Ab^{\ts}+\lambda\Ib_{n})^{-1}
  &=\Ub\Sigmab\Gb\Sigmab\Ub^{\ts}\cdot\Ub\Sigmab^{-1}(\Ehat^{(j-1)}+\Gb)^{-1}\Sigmab^{-1}\Ub^{\ts}\\
  &=\Ub\Sigmab\Gb(\Gb+\Ehat^{(j-1)})^{-1}\Sigmab^{-1}\Ub^{\ts}.
\end{align*}
Factor $\Gb+\Ehat^{(j-1)}=(\Ib_{n}+\Ehat^{(j-1)}\Gb^{-1})\Gb$, so
$\Gb(\Gb+\Ehat^{(j-1)})^{-1}=\Gb\Gb^{-1}(\Ib_{n}+\Ehat^{(j-1)}\Gb^{-1})^{-1}
=(\Ib_{n}+\Ehat^{(j-1)}\Gb^{-1})^{-1}$. Therefore
\begin{align*}
  \bb^{(j)}
  &=\bb^{(j-1)}-\Ub\Sigmab(\Ib_{n}+\Ehat^{(j-1)}\Gb^{-1})^{-1}\Sigmab^{-1}\Ub^{\ts}\bb^{(j-1)}\\
  &=\Ub\Sigmab\big[\Ib_{n}-(\Ib_{n}+\Ehat^{(j-1)}\Gb^{-1})^{-1}\big]\Sigmab^{-1}\Ub^{\ts}\bb^{(j-1)}
  =\Ub\Sigmab\,\widetilde{\Rb}^{(j-1)}\Sigmab^{-1}\Ub^{\ts}\bb^{(j-1)},
\end{align*}
using $\bb^{(j-1)}=\Ub\Sigmab\,\Sigmab^{-1}\Ub^{\ts}\bb^{(j-1)}$ and
$\Ib_{n}-(\Ib_{n}+\Ehat^{(j-1)}\Gb^{-1})^{-1}=\widetilde{\Rb}^{(j-1)}$, which is the
definition in Lemma~\ref{lem:commute}.
Premultiplying by $\Sigmab^{-1}\Ub^{\ts}$ and using $\Ub^{\ts}\Ub=\Ib_{n}$,
$\Sigmab^{-1}\Sigmab=\Ib_{n}$,
\begin{equation}
  \Sigmab^{-1}\Ub^{\ts}\bb^{(j)}=\widetilde{\Rb}^{(j-1)}\Sigmab^{-1}\Ub^{\ts}\bb^{(j-1)}.
  \label{eq:subrec}
\end{equation}
Apply \eqref{eq:subrec} repeatedly, starting from
$\xhatstar-\xstar=\Vb\Rhat^{(t)}\Gb^{-1}\Sigmab^{-1}\Ub^{\ts}\bb^{(t)}$ and ending at
$\bb^{(1)}=\bb^{(0)}=\bb$:
\begin{equation*}
  \xhatstar-\xstar
  =\Vb\Rhat^{(t)}\Gb^{-1}\,\widetilde{\Rb}^{(t-1)}\cdots\widetilde{\Rb}^{(1)}\,
  \Sigmab^{-1}\Ub^{\ts}\bb.
\end{equation*}
Finally, move $\Gb^{-1}$ rightward through the $(t-1)$ factors $\widetilde{\Rb}$
using $\Gb^{-1}\widetilde{\Rb}^{(j)}=-\Rhat^{(j)}\Gb^{-1}$ from \eqref{eq:commute};
each pass flips a sign, producing $(-1)^{t-1}$:
\begin{equation*}
  \Gb^{-1}\widetilde{\Rb}^{(t-1)}\cdots\widetilde{\Rb}^{(1)}
  =(-1)^{t-1}\Rhat^{(t-1)}\cdots\Rhat^{(1)}\Gb^{-1}.
\end{equation*}
Substituting gives
$\xhatstar-\xstar=(-1)^{t-1}\Vb\Rhat^{(t)}\Rhat^{(t-1)}\cdots\Rhat^{(1)}\Gb^{-1}\Sigmab^{-1}\Ub^{\ts}\bb$,
which is \eqref{eq:telescope}.
The terminal expression equals that in \eqref{eq:telescope} because
$\Gb^{-1}\Sigmab^{-1}\Ub^{\ts}\bb=\Vb^{\ts}\xstar$.

\end{proof}

The terminal vector can also be written as
\begin{equation}
  \Vb^{\ts}\xstar
  =\Gb^{-1}\Sigmab^{-1}\Ub^{\ts}\bb,
  \qquad \twonorm{\Vb^{\ts}\xstar}=\twonorm{\xstar}.
  \label{eq:tail-is-xstar}
\end{equation}
Thus, for any sketches satisfying $\twonorm{\Rhat^{(j)}}\le\varepsilon$,
\begin{equation*}
  \twonorm{\xhatstar-\xstar}\le\varepsilon^t\twonorm{\xstar}.
\end{equation*}
No independence assumption is required for these identities.

\begin{proof}[\textbf{Proof of Lemma~\ref{lem:colev-vector-var}}]
\phantomsection\label{proof:column-variance}
Write $\vb_{i}^{\ts}:=\Vb_{i\ast}$. If
$i_{1},\dots,i_{s^{(j)}}$ are the sampled indices for $\Sb^{(j)}$, then
\[
  \Ehat^{(j)}\zb
  =\frac{1}{s^{(j)}}\sum_{\ell=1}^{s^{(j)}}
  \frac{1}{p_{i_{\ell}}}\vb_{i_{\ell}}\vb_{i_{\ell}}^{\ts}\zb-\zb .
\]
Writing $\wb_{\ell}^{(j)}:=p_{i_{\ell}}^{-1}\vb_{i_{\ell}}\vb_{i_{\ell}}^{\ts}\zb$
for the summands, these are i.i.d.\ with mean
\[
  \ex[\wb_{1}^{(j)}]
  =\sum_{i=1}^{d}p_i\,p_i^{-1}\vb_i\vb_i^{\ts}\zb
  =\Big(\sum_{i=1}^{d}\vb_i\vb_i^{\ts}\Big)\zb
  =\Vb^{\ts}\Vb\,\zb=\zb,
\]
where $\sum_{i}\vb_i\vb_i^{\ts}=\Vb^{\ts}\Vb=\Ib_{n}$ because $\Vb$ has orthonormal
columns. Their second moment is
\begin{align*}
  \ex\!\left[\twonorm{\wb_{1}^{(j)}}^{2}\right]
  &=\sum_{i=1}^{d}p_i\,\frac{1}{p_i^{2}}
    \twonorm{\vb_i}^{2}(\vb_i^{\ts}\zb)^{2}\\
  &=\sum_{i=1}^{d}\frac{1}{p_i}
    \twonorm{\vb_i}^{2}(\vb_i^{\ts}\zb)^{2}\\
  &=n\sum_{i=1}^{d}(\vb_i^{\ts}\zb)^{2}
   =n \twonorm{\Vb \zb}^2 =n\twonorm{\zb}^{2}.
\end{align*}
Since
$\Ehat^{(j)}\zb=\frac{1}{s^{(j)}}\sum_{\ell=1}^{s^{(j)}}(\wb_{\ell}^{(j)}-\zb)$
is an average of i.i.d.\ mean-zero vectors, the cross terms vanish, so
\[
  \ex\!\left[\twonorm{\Ehat^{(j)}\zb}^{2}\right]
  =\frac{1}{s^{(j)}}\,\ex\!\left[\twonorm{\wb_{1}^{(j)}-\zb}^{2}\right]
  =\frac{1}{s^{(j)}}\Big(\ex\!\left[\twonorm{\wb_{1}^{(j)}}^{2}\right]-\twonorm{\zb}^{2}\Big)
  =\frac{n-1}{s^{(j)}}\twonorm{\zb}^{2}.
\]
The conditional form \eqref{eq:col-vector-var-cond} follows by applying the same calculation after
conditioning on $\Fcal_{j-1}$.
\end{proof}

\subsection{Convergence at the Improved Sample Size}
\label{app:column-ms}

\begin{proof}[\textbf{Proof of Theorem~\ref{thm:colev-ms}}]
\phantomsection\label{proof:column-ms}
The proof conditions on the past and uses Lemma~\ref{lem:colev-vector-var}. Let
\[
  \zb_{0}:=\Gb^{-1}\Sigmab^{-1}\Ub^{\ts}\bb,
  \qquad
  \zb_j:=\Rhat^{(j)}\zb_{j-1},\quad j=1,\dots,t.
\]
By \eqref{eq:telescope} and $\Vb^{\ts}\Vb=\Ib_{n}$,
\begin{equation}
  \twonorm{\xhatstar-\xstar}=\twonorm{\zb_{t}},
  \qquad
  \twonorm{\zb_{0}}=\twonorm{\xstar},
  \label{eq:z-error}
\end{equation}
where the second equality follows from
$\Vb\Gb^{-1}\Sigmab^{-1}\Ub^{\ts}\bb=\xstar$.

For an arbitrary $j\in\{1,\dots,t\}$, recall from \eqref{eq:R} that $\Rhat^{(j)}=-(\Gb+\Ehat^{(j)})^{-1}\Ehat^{(j)}.$
Since
\[
\Gb+\Ehat^{(j)}=\lambda\Sigmab^{-2}+\Vb^{\ts}\Sb^{(j)}\Sb^{(j)\ts}\Vb
  \succeq \lambda\Sigmab^{-2},
\]
we have $\twonorm{(\Gb+\Ehat^{(j)})^{-1}}\le\smax^{2}/\lambda$. Therefore,
conditionally on $\Fcal_{j-1}$,
\begin{align}
  \ex\!\left[\twonorm{\zb_j}^{2}\mid\Fcal_{j-1}\right]
  &\le \left(\frac{\smax^{2}}{\lambda}\right)^{2}
  \ex\!\left[\twonorm{\Ehat^{(j)}\zb_{j-1}}^{2}\mid\Fcal_{j-1}\right]\notag\\
  &=\left(\frac{\smax^{2}}{\lambda}\right)^{2}
    \frac{n-1}{s^{(j)}}\twonorm{\zb_{j-1}}^{2},
  \label{eq:colev-cond-var}
\end{align}
where the last equality uses Lemma~\ref{lem:colev-vector-var} with $\zb=\zb_{j-1}$. Iterating
\eqref{eq:colev-cond-var} by the tower property and then using \eqref{eq:z-error}
gives \eqref{eq:colev-ms}. The high-probability bound \eqref{eq:colev-hp}
follows from one terminal application of Markov's inequality.

For a common sketch size satisfying \eqref{eq:colev-var-scale}, the product
in \eqref{eq:colev-ms} is bounded by
\[
  \left[
  \left(\frac{\smax^{2}}{\lambda}\right)^{2}\frac{n-1}{s}
  \right]^t
  \le \varepsilon^{2t}.
\]
This proves the stated mean square contraction. The high-probability
statement follows by substituting the same sample-size bound into
\eqref{eq:colev-hp}.
\end{proof}
\subsection{Convergence on the Embedding Event}
\label{app:column-embedding}

The next result retains a uniform embedding event to avoid the factor $\smax^2/\lambda$. 
\begin{theorem}[Leverage score sampling convergence with an embedding event]
\label{thm:colev-struct}
Run Algorithm~\ref{alg:fresh} with LS. Fix $\varepsilon\in(0,1)$ and assume
$s^{(j)}\ge2n/\varepsilon^2$ for every $j$. Define
\begin{equation*}
  \delta_{\mathrm{ls}}(s)
  =4(n+1)\exp\!\left(-\frac{3\varepsilon^2s}{8n}\right),
  \qquad
  J_t=\bigcap_{j=1}^{t}
       \{\twonorm{\Ehat^{(j)}}\le\varepsilon\}.
\end{equation*}
Then
\begin{equation}
  \ex[\twonorm{\xhatstar-\xstar}^2\II_{J_t}]
  \le(1-\varepsilon)^{-2t}
       \left({\prod_{j=1}^{t}\frac{n-1}{s^{(j)}}}\right)\twonorm{\xstar}^2.
  \label{eq:colev-ms-struct}
\end{equation}
For any fixed $t$ and $\delta_{\mathrm{tail}}\in(0,1)$, with probability
at least $1-\delta_{\mathrm{tail}}-
\sum_{j=1}^{t}\delta_{\mathrm{ls}}(s^{(j)})$,
\begin{equation}
  \twonorm{\xhatstar-\xstar}
  \le\frac{(1-\varepsilon)^{-t}}{\sqrt{\delta_{\mathrm{tail}}}}
       \prod_{j=1}^{t}\sqrt{\frac{n-1}{s^{(j)}}}\twonorm{\xstar}.
  \label{eq:colev-hp-struct}
\end{equation}
In particular, fix $\delta_{\mathrm{emb}}\in(0,1)$. If the common sketch size satisfies
\begin{equation*}
  s\ge\max\left\{
    \frac{2n}{\varepsilon^2},\;
    \frac{8n}{3\varepsilon^2}
      \log\!\left(\frac{4t(n+1)}{\delta_{\mathrm{emb}}}\right)
  \right\},
\end{equation*}
then, with probability at least
$1-\delta_{\mathrm{tail}}-\delta_{\mathrm{emb}}$,
\begin{equation*}
  \twonorm{\xhatstar-\xstar}
  \le\frac{1}{\sqrt{\delta_{\mathrm{tail}}}}
    \left(\frac{\varepsilon}{1-\varepsilon}
      \sqrt{\frac{3(n-1)}
        {8n\log(4t(n+1)/\delta_{\mathrm{emb}})}}\right)^t
    \twonorm{\xstar}.
\end{equation*}
\end{theorem}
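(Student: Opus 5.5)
We follow the proof of Theorem~\ref{thm:colev-ms}, but replace the crude bound $\twonorm{(\Gb+\Ehat^{(j)})^{-1}}\le\smax^2/\lambda$ with a bound that holds on the embedding event. Let $\zb_0=\Vb^{\ts}\xstar$ and $\zb_j=\Rhat^{(j)}\zb_{j-1}$. Then Lemma~\ref{lem:telescope} gives $\twonorm{\xhatstar-\xstar}=\twonorm{\zb_t}$ and $\twonorm{\zb_0}=\twonorm{\xstar}$. On $\{\twonorm{\Ehat^{(j)}}\le\varepsilon\}$ we have
\[
\Gb+\Ehat^{(j)}\succeq(1-\varepsilon)\Ib_n+\lambda\Sigmab^{-2}\succeq(1-\varepsilon)\Ib_n ,
\]
because $\Gb=\Ib_n+\lambda\Sigmab^{-2}$. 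Hence $\twonorm{(\Gb+\Ehat^{(j)})^{-1}}\le(1-\varepsilon)^{-1}$, and so
\[
\twonorm{\zb_j}\II_{\{\twonorm{\Ehat^{(j)}}\le\varepsilon\}}\le(1-\varepsilon)^{-1}\twonorm{\Ehat^{(j)}\zb_{j-1}}.
\]

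For the mean-square bound \eqref{eq:colev-ms-struct}, set $J_0=\Omega$ and note $J_j=J_{j-1}\cap\{\twonorm{\Ehat^{(j)}}\le\varepsilon\}$, where $J_{j-1}\in\Fcal_{j-1}$. Then
\[
\ex[\twonorm{\zb_j}^2\II_{J_j}\mid\Fcal_{j-1}]\le\II_{J_{j-1}}(1-\varepsilon)^{-2}\,\ex[\twonorm{\Ehat^{(j)}\zb_{j-1}}^2\mid\Fcal_{j-1}].
\]
The dropped indicator is bounded by $1$, which is what lets us use the exact conditional variance. Lemma~\ref{lem:colev-vector-var} then gives $(1-\varepsilon)^{-2}\frac{n-1}{s^{(j)}}\twonorm{\zb_{j-1}}^2\II_{J_{j-1}}$. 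Taking expectations and iterating with the tower property yields \eqref{eq:colev-ms-struct}. The main technical care is in this step: we must bound the indicator of the current event by one \emph{before} invoking the conditional variance identity, since conditioning on the event itself would destroy the unbiasedness used in Lemma~\ref{lem:colev-vector-var}. The previous events, in contrast, are $\Fcal_{j-1}$-measurable and pass through the conditional expectation.

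For the tail bound \eqref{eq:colev-hp-struct}, apply Lemma~\ref{lem:cyd-struct-tail} with $\Xb=\Vb$. Here $\|\Vb\|_F^2=n$, the probabilities coincide with leverage scores, and $\twonorm{\Vb}=1$. Since $s^{(j)}\ge 2n/\varepsilon^2$, this gives $\pr{\twonorm{\Ehat^{(j)}}>\varepsilon}\le\delta_{\mathrm{ls}}(s^{(j)})$; the bound holds conditionally on $\Fcal_{j-1}$ because the sketches are independent. A union bound gives $\pr{J_t^c}\le\sum_j\delta_{\mathrm{ls}}(s^{(j)})$. Next, apply Markov's inequality to $\twonorm{\xhatstar-\xstar}^2\II_{J_t}$ at the level of \eqref{eq:colev-ms-struct} divided by $\delta_{\mathrm{tail}}$. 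Off $J_t$ and off the Markov failure event, the claimed bound holds. Summing the two failure probabilities gives the stated confidence.

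For the common-size statement, the second lower bound on $s$ forces $t\,\delta_{\mathrm{ls}}(s)\le\delta_{\mathrm{emb}}$, since $\frac{3\varepsilon^2 s}{8n}\ge\log(4t(n+1)/\delta_{\mathrm{emb}})$. The first lower bound ensures admissibility for Lemma~\ref{lem:cyd-struct-tail}. Finally, substitute the same lower bound $s\ge\frac{8n}{3\varepsilon^2}\log(4t(n+1)/\delta_{\mathrm{emb}})$ into $\frac{n-1}{s}$. This gives
\[
\sqrt{\frac{n-1}{s}}\le\varepsilon\sqrt{\frac{3(n-1)}{8n\log(4t(n+1)/\delta_{\mathrm{emb}})}},
\]
and combining it with the factor $(1-\varepsilon)^{-1}$ produces the displayed per-iteration contraction.
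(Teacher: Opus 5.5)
Your proposal is correct and follows essentially the same route as the paper's proof. You use the same recursion $\zb_j=\Rhat^{(j)}\zb_{j-1}$ (your $\zb_0=\Vb^{\ts}\xstar$ coincides with the paper's $\Gb^{-1}\Sigmab^{-1}\Ub^{\ts}\bb$ by \eqref{eq:tail-is-xstar}), the bound $(1-\varepsilon)^{-1}$ on the inverse on the current event, dropping the current indicator before conditioning, Markov on the truncated error, and Lemma~\ref{lem:cyd-struct-tail} applied to $\Vb$ with a union bound. The only thing the paper adds is a separate line for the degenerate case $\xstar=\zero$, where your Markov threshold would be zero but all iterates vanish, so the bound holds trivially.
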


\begin{proof}
\phantomsection\label{proof:column-embedding}
If $\xstar=\zero$, then $\bb=\zero$ and every iterate is zero, so the conclusions hold immediately. Assume otherwise in the calculation below.
Let $B_j=\{\twonorm{\Ehat^{(j)}}\le\varepsilon\}$ and $J_j=\bigcap_{\ell=1}^{j}B_\ell$, with $J_0=\Omega$.
Use the same variables as in the proof of Theorem~\ref{thm:colev-ms}:
\[
  \zb_{0}:=\Gb^{-1}\Sigmab^{-1}\Ub^{\ts}\bb,
  \qquad
  \zb_j:=\Rhat^{(j)}\zb_{j-1},\quad j=1,\dots,t.
\]
By \eqref{eq:telescope}, $\Vb^{\ts}\Vb=\Ib_n$, and \eqref{eq:tail-is-xstar},
\[
  \twonorm{\xhatstar-\xstar}=\twonorm{\zb_t},
  \qquad
  \twonorm{\zb_0}=\twonorm{\xstar}.
\]
On $B_j$, we have $\Ib_n+\Ehat^{(j)}\succeq(1-\varepsilon)\Ib_n$, and hence
\[
  \Gb+\Ehat^{(j)}
  =\lambda\Sigmab^{-2}+\Ib_n+\Ehat^{(j)}
  \succeq(1-\varepsilon)\Ib_n .
\]
Thus $\twonorm{(\Gb+\Ehat^{(j)})^{-1}}\le(1-\varepsilon)^{-1}$. Since
$\Rhat^{(j)}=-(\Gb+\Ehat^{(j)})^{-1}\Ehat^{(j)}$ and
$J_j=J_{j-1}\cap B_j$,
\begin{equation*}
  \twonorm{\zb_j}^{2}\II_{J_j}
  \le
  (1-\varepsilon)^{-2}
  \twonorm{\Ehat^{(j)}\zb_{j-1}}^{2}\II_{J_{j-1}}.
\end{equation*}
Condition on $\Fcal_{j-1}$. The event $J_{j-1}$ and vector $\zb_{j-1}$ are
$\Fcal_{j-1}$-measurable, while $\sigma(\Sb^{(j)})$ is independent of $\Fcal_{j-1}$. Lemma~\ref{lem:colev-vector-var} with $\zb = \zb_{j-1}$
therefore gives

\begin{align}
     \ex\!\left[\twonorm{\zb_j}^{2}\II_{J_j}\right] &\leq (1-\varepsilon)^{-2}  \ex \! \left[ \twonorm{\Ehat^{(j)} \zb_{j-1} }^2 \II_{J_{j-1}}  \right]\notag\\
     &= (1-\varepsilon)^{-2}  \ex \! \left[   \ex \! \left[ \twonorm{\Ehat^{(j)} \zb_{j-1} }^2 \II_{J_{j-1}} \mid \Fcal_{j-1} \right]  \right]\notag\\
     &= (1-\varepsilon)^{-2}  \ex \! \left[    \II_{J_{j-1}}\ex \! \left[ \twonorm{\Ehat^{(j)} \zb_{j-1} }^2 \mid \Fcal_{j-1} \right]  \right]\notag\\
     &= (1-\varepsilon)^{-2} \frac{n-1}{s^{(j)}} \ex\!\left[\twonorm{\zb_{j-1}}^{2}\II_{J_{j-1}}\right].\label{eq:colev-struct-rec}
\end{align}

Iterating \eqref{eq:colev-struct-rec} proves \eqref{eq:colev-ms-struct}.

\begin{align*}
   &\pr{ \twonorm{\xhatstar-\xstar}  \ge \frac{1}{\sqrt{\delta_{\mathrm{tail}}}}\, (1-\varepsilon)^{-t}  \prod_{j=1}^{t}\sqrt{\frac{n-1}{s^{(j)}}}\,  \twonorm{\xstar}, \,J_t} \\
   &= \pr{\twonorm{\xhatstar-\xstar}^2  \ge \frac{1}{\delta_{\mathrm{tail}}} (1-\varepsilon)^{-2t} \prod_{j=1}^{t}\frac{n-1}{s^{(j)}}\, \twonorm{\xstar}^{2} ,\, J_t}\\
   &= \pr{\twonorm{\xhatstar-\xstar}^2 \II_{J_t}  \ge \frac{1}{\delta_{\mathrm{tail}}} (1-\varepsilon)^{-2t} \prod_{j=1}^{t}\frac{n-1}{s^{(j)}}\, \twonorm{\xstar}^{2} }\\
   &\leq \delta_{\mathrm{tail}},
\end{align*}

where the last inequality follows from Markov's inequality and \eqref{eq:colev-ms-struct}. Lemma~\ref{lem:cyd-struct-tail}, applied to $\Vb$, gives
$\pr{B_j^c}\le\delta_{\mathrm{ls}}(s^{(j)})$ for every $j$. Hence
$\pr{J_t^c}\le\sum_{j=1}^{t}\delta_{\mathrm{ls}}(s^{(j)})$ by a union bound.
Combining the two bounds proves \eqref{eq:colev-hp-struct}.

\emph{Common sketch size.}
The sample-size assumption gives
$\delta_{\mathrm{ls}}(s)\le\delta_{\mathrm{emb}}/t$, so the embedding failure probability in
Theorem~\ref{thm:colev-struct} is at most $\delta_{\mathrm{emb}}$. It also implies
\[
  \sqrt{\frac{n-1}{s}}
  \le
  \varepsilon
  \sqrt{\frac{3(n-1)}
             {8n\log(4t(n+1)/\delta_{\mathrm{emb}})}}.
\]
Substituting this bound into \eqref{eq:colev-hp-struct} proves the claim.
\end{proof}
\subsection{Squared Contraction of Algorithmic Bias}
\label{app:column-bias}

Here bias refers to the mean solver error over the sketches, $\ex[\xhatstar-\xstar]$, with $\Ab$ and $\bb$ held fixed. It is distinct from the statistical bias of ridge regression. Related works study inversion bias associated with sketching~\citep{derezinski2021sparse,niu2025fundamental}. 

\begin{theorem}[Squared contraction of algorithmic bias]
\label{thm:colev}
Fix $\varepsilon\in(0,1)$ and define
\begin{equation*}
  C= \frac{\smax^2}{\lambda} + \frac{\smax^2}{\smin^2},
  \qquad
  \delta(s)=4(n+1)\exp\!\left(-\frac{3\varepsilon^2s}{32n}\right).
\end{equation*}
If $s^{(j)}\ge8n/\varepsilon^2$ for every $j$, then
\begin{equation}
  \twonorm{\ex[\xhatstar-\xstar]}
  \le\prod_{j=1}^{t}\left(\frac{\varepsilon^2}{2}
           +C(n-1)^2\delta(s^{(j)})\right)\twonorm{\xstar}.
  \label{eq:colev-bias-bound}
\end{equation}
In particular, suppose all iterations use a common sketch size satisfying
\begin{equation*}
  s\ge\max\left\{
    \frac{8n}{\varepsilon^2},\;
    \frac{32n}{3\varepsilon^2}
    \log\!\frac{8C(n+1)(n-1)^2}{\varepsilon^2}
  \right\}.
\end{equation*}
Then
\begin{equation*}
  \twonorm{\ex[\xhatstar-\xstar]}
  \le\varepsilon^{2t}\twonorm{\xstar}.
\end{equation*}
\end{theorem}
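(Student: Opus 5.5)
By Lemma~\ref{lem:telescope}, $\xhatstar-\xstar=(-1)^{t-1}\Vb\Rhat^{(t)}\cdots\Rhat^{(1)}\Vb^{\ts}\xstar$. The leverage score sketches are drawn independently across iterations and $\Vb$, $\xstar$ are deterministic, so expectation factors through the product:
\[
\ex[\xhatstar-\xstar]=(-1)^{t-1}\Vb\,\ex[\Rhat^{(t)}]\cdots\ex[\Rhat^{(1)}]\,\Vb^{\ts}\xstar .
\]
Using $\twonorm{\Vb}=1$ and \eqref{eq:tail-is-xstar}, it then suffices to prove
\[
\twonorm{\ex[\Rhat^{(j)}]}\le \varepsilon^2/2+C(n-1)^2\delta(s^{(j)})
\]
for each $j$. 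The common-size corollary is then arithmetic. The log condition gives $\delta(s)\le \varepsilon^2/(2C(n-1)^2)$, so each factor is at most $\varepsilon^2$.

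\textbf{Per-step bias.} Set $\Mb_j=\Gb^{-1}\Ehat^{(j)}$, so $\Rhat^{(j)}=(\Ib_n+\Mb_j)^{-1}-\Ib_n$. The resolvent identity gives
\[
(\Ib+\Mb)^{-1}=\Ib-\Mb+(\Ib+\Mb)^{-1}\Mb^2 .
\]
Since $\ex[\Ehat^{(j)}]=\zero$, the linear term cancels in expectation, leaving $\ex[\Rhat^{(j)}]=\ex[(\Ib+\Mb_j)^{-1}\Mb_j^2]$. I split this expectation on $B_j=\{\twonorm{\Ehat^{(j)}}\le\varepsilon/2\}$.

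\emph{On $B_j$.} Because $\Gb\succeq\Ib$, we have $\twonorm{\Mb_j}\le\twonorm{\Gb^{-1}}\twonorm{\Ehat^{(j)}}\le\varepsilon/2$. Hence
\[
\twonorm{(\Ib+\Mb_j)^{-1}\Mb_j^2}\le \frac{(\varepsilon/2)^2}{1-\varepsilon/2}\le\varepsilon^2/2
\]
for $\varepsilon<1$. For a sharper form one can instead write $(\Ib+\Mb)^{-1}\Mb^2=(\Gb+\Ehat)^{-1}\Ehat\,\Gb^{-1}\Ehat$ and use $\Gb+\Ehat\succeq(1-\varepsilon/2)\Ib$.

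\emph{On $B_j^c$.} Lemma~\ref{lem:cyd-struct-tail} applied to $\Vb$ with accuracy $\varepsilon/2$ gives $\pr{B_j^c}\le\delta(s^{(j)})$; the size condition $s\ge 8n/\varepsilon^2=2n/(\varepsilon/2)^2$ is exactly its admissibility requirement, and $3(\varepsilon/2)^2s/(8n)=3\varepsilon^2 s/(32n)$ matches $\delta$. Here I need a deterministic bound
\[
\twonorm{(\Ib+\Mb_j)^{-1}\Mb_j^2}=\twonorm{(\Gb+\Ehat)^{-1}\Ehat\,\Gb^{-1}\Ehat}\le C(n-1)^2 .
\]
I would use $\Ehat\succeq-\Ib$ and $\twonorm{\Ehat}\le \max(1,\twonorm{\Vb^{\ts}\Sb\Sb^{\ts}\Vb}-1)$. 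With LS weights every sampled rank-one term has norm $n/s$, so $\twonorm{\Vb^{\ts}\Sb\Sb^{\ts}\Vb}\le n$ and $\twonorm{\Ehat}\le n-1$. Writing $(\Gb+\Ehat)^{-1}\Ehat=\Ib-(\Gb+\Ehat)^{-1}\Gb$ and bounding $\twonorm{(\Gb+\Ehat)^{-1}}\le\smax^2/\lambda$ and $\twonorm{\Gb}\le 1+\lambda/\smin^2$ yields a factor like $1+\smax^2/\lambda+\smax^2/\smin^2$; with $\twonorm{\Gb^{-1}\Ehat}\le n-1$ this gives roughly $C(n-1)$ up to constants. Multiplying by $\pr{B_j^c}$ gives the tail term.

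\textbf{Main obstacle.} The main obstacle is the constant bookkeeping in this worst-case bound, namely obtaining exactly $C(n-1)^2$ with $C=\smax^2/\lambda+\smax^2/\smin^2$. If the factorization above does not close directly, I would fall back on the cruder estimate $\twonorm{(\Gb+\Ehat)^{-1}}\twonorm{\Ehat}^2\le(\smax^2/\lambda)(n-1)^2$, which is at most $C(n-1)^2$. That fallback already suffices since $C\ge\smax^2/\lambda$, and it makes the bound straightforward.
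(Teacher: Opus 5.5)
Your proposal is correct and follows the paper's proof essentially step for step: you factor the expectation by independence, use the exact identity $\ex[\Rhat^{(j)}]=\ex[(\Ib_n+\Mb_j)^{-1}\Mb_j^2]$, split on $B_j=\{\twonorm{\Ehat^{(j)}}\le\varepsilon/2\}$ via Lemma~\ref{lem:cyd-struct-tail}, and use $\twonorm{\Ehat^{(j)}}\le n-1$ on the complement. The one small difference is on $B_j^c$: the paper bounds $\twonorm{(\Ib_n+\Mb_j)^{-1}}\le\twonorm{(\Gb+\Ehat^{(j)})^{-1}}\twonorm{\Gb}\le C$ and $\twonorm{\Mb_j}^2\le(n-1)^2$, whereas your fallback factorization $(\Gb+\Ehat^{(j)})^{-1}\Ehat^{(j)}\Gb^{-1}\Ehat^{(j)}$ gives the constant $(\smax^2/\lambda)(n-1)^2\le C(n-1)^2$, which is sharper rather than cruder, so your exploratory first factorization is unnecessary.
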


\begin{proof}
\phantomsection\label{proof:column-bias}

\emph{Step 1 (factorize the expectation).} Take expectations in
\eqref{eq:telescope}. The factors $\Vb$ and $\Gb^{-1}\Sigmab^{-1}\Ub^{\ts}\bb$
are deterministic, and the $\Rhat^{(j)}$ are independent (each depends only on its own $\Sb^{(j)}$), so
\begin{equation*}
  \ex[\xhatstar-\xstar]
  =(-1)^{t-1}\Vb\,\ex[\Rhat^{(t)}]\cdots\ex[\Rhat^{(1)}]\,\Gb^{-1}\Sigmab^{-1}\Ub^{\ts}\bb.
\end{equation*}

\emph{Step 2 (an exact identity for $\ex[\Rhat^{(j)}]$).} Fix $j$ and write
$\Mb:=\Gb^{-1}\Ehat^{(j)}$, so that by definition \eqref{eq:R}
\begin{equation*}
  \Rhat^{(j)}=(\Ib_{n}+\Mb)^{-1}-\Ib_{n}.
\end{equation*}
We claim that
\begin{equation*}
  (\Ib_{n}+\Mb)^{-1}-\Ib_{n}=-\Mb+(\Ib_{n}+\Mb)^{-1}\Mb^{2}.
\end{equation*}
Indeed,
\begin{align*}
  (\Ib_{n}+\Mb)^{-1}-\Ib_{n}+\Mb
  &=(\Ib_{n}+\Mb)^{-1}\big[\Ib_{n}-(\Ib_{n}+\Mb)+(\Ib_{n}+\Mb)\Mb\big]\\
  &=(\Ib_{n}+\Mb)^{-1}\big[\Ib_{n}-\Ib_{n}-\Mb+\Mb+\Mb^{2}\big]
   =(\Ib_{n}+\Mb)^{-1}\Mb^{2},
\end{align*}
Therefore
\begin{equation*}
  \Rhat^{(j)}=-\Mb+(\Ib_{n}+\Mb)^{-1}\Mb^{2},
\end{equation*}
and since $\ex[\Mb]=\zero$,
\begin{equation}
  \ex[\Rhat^{(j)}]=-\ex[\Mb]+\ex\!\big[(\Ib_{n}+\Mb)^{-1}\Mb^{2}\big]
              =\ex\!\big[(\Ib_{n}+\Mb)^{-1}\Mb^{2}\big].
  \label{eq:exactER}
\end{equation}
This is exact: it uses only $\ex[\Mb]=\zero$ and the invertibility of
$\Ib_{n}+\Mb$ (guaranteed by $\lambda>0$), with no smallness assumption on $\Mb$. The linear term has cancelled, leaving a quantity \emph{quadratic} in $\Mb$.

\emph{Step 3 (bound the quadratic).} A deterministic bound on
$\twonorm{(\Ib_{n}+\Mb)^{-1}}$ holds always: $\Ib_{n}+\Mb=\Gb^{-1}(\Gb+\Ehat^{(j)})$
with $\Gb+\Ehat^{(j)}=\lambda\Sigmab^{-2}+\Vb^{\ts}\Sb^{(j)}\Sb^{(j)\ts}\Vb\succeq\lambda\Sigmab^{-2}\succ0$,
so
\begin{equation}
  \twonorm{(\Ib_{n}+\Mb)^{-1}}
  =\twonorm{(\Gb+\Ehat^{(j)})^{-1}\Gb}
  \le\twonorm{(\Gb+\Ehat^{(j)})^{-1}}\,\twonorm{\Gb}
  \le\frac{\smax^{2}}{\lambda}\Big(1+\frac{\lambda}{\smin^{2}}\Big)=C.
  \label{eq:detinv}
\end{equation}
Split on the event $B_j=\{\twonorm{\Ehat^{(j)}}\le\varepsilon/2\}$, which by
Lemma~\ref{lem:cyd-struct-tail} satisfies $\pr{B_j^{c}}\le\delta(s^{(j)})$. On $B_j$,
$\twonorm{\Mb}\le\twonorm{\Ehat^{(j)}}\le\varepsilon/2$ because
$\twonorm{\Gb^{-1}}\le1$.
From \eqref{eq:exactER},
\begin{align*}
  \twonorm{\ex[\Rhat^{(j)}]} &\leq \ex\! \big[\twonorm{(\Ib_{n}+\Mb)^{-1} {\Mb}^{2}} \big]\\
  &\le\ex\!\big[\twonorm{(\Ib_{n}+\Mb)^{-1}}\,\twonorm{\Mb}^{2}\big]\\
&=\ex\!\big[\twonorm{(\Ib_{n}+\Mb)^{-1}}\twonorm{\Mb}^{2}\,\II_{B_j}\big]+\ex\!\big[\twonorm{(\Ib_{n}+\Mb)^{-1}}\twonorm{\Mb}^{2}\,\II_{B_j^{c}}\big],
\end{align*}
where in the first inequality we used Jensen's inequality and for the second we used submultiplicativity of the spectral norm. On $B_j$ we have $\twonorm{(\Ib_{n}+\Mb)^{-1}}\le1/(1-\varepsilon/2)$ (since
$\twonorm{\Mb}\le\varepsilon/2<1$, so $\twonorm{(\Ib_{n}+\Mb)^{-1}} = \twonorm{\sum_{\ell=0}^\infty (-1)^\ell \Mb^\ell} \le \sum_{\ell=0}^\infty \twonorm{\Mb}^\ell\le 1/(1-\twonorm{\Mb})$)
and $\twonorm{\Mb}^{2}\le(\varepsilon/2)^{2}$, so the
first term is at most $\frac{(\varepsilon/2)^{2}}{1-\varepsilon/2}\le\varepsilon^{2}/2$
for $\varepsilon\le1$. On $B_j^{c}$ we use the deterministic bound \eqref{eq:detinv},
giving the second term $\le C\,\ex[\twonorm{\Mb}^{2}\II_{B_j^{c}}]$.

It remains to bound $\ex[\twonorm{\Mb}^{2}\II_{B_j^{c}}]$. Leverage score sampling yields
\begin{equation*}
  \zero\preceq\Vb^{\ts}\Sb^{(j)}\Sb^{(j)\ts}\Vb\preceq n\Ib_{n},
\end{equation*}
since $\Vb^{\ts}\Sb^{(j)}\Sb^{(j)\ts}\Vb
=\tfrac{1}{s^{(j)}}\sum_{k=1}^{s^{(j)}}p_{i_{k}}^{-1}(\Vb_{i_{k}\ast})^{\ts}(\Vb_{i_{k}\ast})$ and each
rank-one term obeys
$p_{i_{k}}^{-1}(\Vb_{i_{k}\ast})^{\ts}(\Vb_{i_{k}\ast})
\preceq(\twonorm{\Vb_{i_{k}\ast}}^{2}/p_{i_{k}})\Ib_{n}=n\Ib_{n}$, using
$p_{i}=\twonorm{\Vb_{i\ast}}^{2}/n$. Hence
$-\Ib_{n}\preceq\Ehat^{(j)}\preceq(n-1)\Ib_{n}$, and since $\twonorm{\Gb^{-1}}\le1$,
\begin{equation*}
  \twonorm{\Mb}\le\twonorm{\Gb^{-1}}\,\twonorm{\Ehat^{(j)}}\le n-1.
\end{equation*}
With $\pr{B_j^{c}}\le\delta(s^{(j)})$ this gives the deterministic control of the
remainder,
\begin{equation*}
  \ex\!\big[\twonorm{\Mb}^{2}\II_{B_j^{c}}\big]
  \le (n-1)^{2}\,\pr{B_j^{c}}\le (n-1)^{2}\,\delta(s^{(j)}).
\end{equation*}
Altogether,
\begin{equation*}
  \twonorm{\ex[\Rhat^{(j)}]}
  \le\tfrac{\varepsilon^{2}}{2}+C(n-1)^{2}\,\delta(s^{(j)}).
\end{equation*}

\emph{Step 4 (conclude).} By Step~1, submultiplicativity of the spectral norm,
$\twonorm{\Vb}=1$, and \eqref{eq:tail-is-xstar},
\begin{equation*}
  \twonorm{\ex[\xhatstar-\xstar]}
  \le\prod_{j=1}^{t}\twonorm{\ex[\Rhat^{(j)}]}\,\twonorm{\xstar}
  \le\prod_{j=1}^{t}\Big(\tfrac{\varepsilon^{2}}{2}+C(n-1)^{2}\,\delta(s^{(j)})\Big)\twonorm{\xstar}.
\end{equation*}

\emph{Common sketch size.}
The stated $s$ gives
$\delta(s)=4(n+1)\exp\!\big(-\tfrac{3\varepsilon^{2}s}{32n}\big)
\le\varepsilon^{2}/(2C(n-1)^{2})$, so
$C(n-1)^{2}\,\delta(s)\le\varepsilon^{2}/2$ and every factor in
\eqref{eq:colev-bias-bound} is at most $\varepsilon^{2}$.
\end{proof}

\section{Ridge Leverage Analysis}
\label{app:ridge}

Throughout this appendix, $\Xb=\Vb\Slam$ and
$w_i=\twonorm{\Xb_{i\ast}}^2$, so $\sum_iw_i=\dlam>0$. We write
$c_\lambda=(\smax^2+\lambda)/\lambda$ as in~\eqref{eq:clambda}.

\subsection{Error identity and conditional variance}
\label{sec:ridge-tools}

We also use the inverse error matrix
\begin{equation*}
 \Rb^{(j)}=(\Ib_n+\Eb^{(j)})^{-1}-\Ib_n
          =-(\Ib_n+\Eb^{(j)})^{-1}\Eb^{(j)}.
\end{equation*}

\begin{proof}[\textbf{Proof of Lemma~\ref{lem:periter}}]
\phantomsection\label{proof:ridge-identity}
From \eqref{eq:clambda},
\begin{equation*}
  \Ib_{n}+\Eb^{(j)}=(\Ib_{n}-\Slam^{2})+\Xb^{\ts}\Sb^{(j)}\Sb^{(j)\ts}\Xb.
\end{equation*}
The second matrix is a Gram matrix, hence symmetric positive semidefinite. The first is
diagonal with entries $1-\sigma_{i}^{2}/(\sigma_{i}^{2}+\lambda)=\lambda/(\sigma_{i}^{2}+\lambda)>0$,
so $\Ib_{n}-\Slam^{2}\succ0$ with smallest eigenvalue
$\min_{i}\lambda/(\sigma_{i}^{2}+\lambda)=\lambda/(\smax^{2}+\lambda)$.
Adding a PSD matrix only raises eigenvalues, so $\Ib_{n}+\Eb^{(j)}$ is symmetric
positive definite with
$\lambda_{\min}(\Ib_{n}+\Eb^{(j)})\ge\lambda/(\smax^{2}+\lambda)$. For a symmetric
positive-definite matrix $\Ab, \twonorm{\Ab^{-1}}=1/\lambda_{\min}(\Ab)$, gives
\begin{equation}
    \twonorm{(\Ib_n+\Eb^{(j)})^{-1}}\le\frac{\smax^2+\lambda}{\lambda}=c_\lambda. \label{eq:clambda_bound}
\end{equation}

By \citet[Lemma 5, eqn.~(51)]{chowdhury2018iterative}\footnote{Even though the proof steps in~\citet{chowdhury2018iterative} use the series form, the same result holds with our definition of $\Rb^{(j)}$.}
\begin{equation*}
  \widetilde{\xb}^{(j)}=\xb^{\star(j)}+\Vb\Slam\,\Rb^{(j)}\,\Slam\Sigmab^{-1}\Ub^{\ts}\bb^{(j)},
\end{equation*}
so with $\Rb^{(j)}=-(\Ib_{n}+\Eb^{(j)})^{-1}\Eb^{(j)}$,
\begin{equation}
  \xb^{\star(j)}-\widetilde{\xb}^{(j)}
  =\Vb\Slam(\Ib_{n}+\Eb^{(j)})^{-1}\,\Eb^{(j)}\Slam\Sigmab^{-1}\Ub^{\ts}\bb^{(j)}.
  \label{eq:periter-raw}
\end{equation}
We simplify the tail. Using $\Eb^{(j)}=\Slam\Vb^{\ts}\Sb^{(j)}\Sb^{(j)\ts}\Vb\Slam-\Slam^{2}$,
$\Vb^{\ts}\Vb=\Ib_{n}$, $\Xb=\Vb\Slam$, and Lemma~\ref{lem:chow-acc}
($\xb^{\star(j)}=\Vb\Slam^{2}\Sigmab^{-1}\Ub^{\ts}\bb^{(j)}$),
\begin{align}
  \Eb^{(j)}\Slam\Sigmab^{-1}\Ub^{\ts}\bb^{(j)}
  &=\Slam\Vb^{\ts}\Sb^{(j)}\Sb^{(j)\ts}\Vb\Slam^{2}\Sigmab^{-1}\Ub^{\ts}\bb^{(j)}
   -\Slam\Vb^{\ts}\Vb\Slam^{2}\Sigmab^{-1}\Ub^{\ts}\bb^{(j)}\notag\\
  &=\Slam\Vb^{\ts}\Sb^{(j)}\Sb^{(j)\ts}\,\xb^{\star(j)}-\Slam\Vb^{\ts}\,\xb^{\star(j)}
   =\Xb^{\ts}\big(\Sb^{(j)}\Sb^{(j)\ts}-\Ib_{d}\big)\xb^{\star(j)},
  \label{eq:tail-simpl}
\end{align}
where the last line uses $\Slam\Vb^{\ts}=\Xb^{\ts}$ and
$\Vb\Slam^{2}\Sigmab^{-1}\Ub^{\ts}\bb^{(j)}=\xb^{\star(j)}$. Substituting
\eqref{eq:tail-simpl} into \eqref{eq:periter-raw} gives the stated identity.
Finally $\twonorm{\Vb\Slam}\le\twonorm{\Vb}\twonorm{\Slam}\le1$ and~\eqref{eq:clambda_bound} give $\twonorm{\xb^{\star(j)}-\widetilde{\xb}^{(j)}} \leq c_\lambda\twonorm{\Xb^{\ts}(\Sb^{(j)}\Sb^{(j)\ts}-\Ib_d)
\xb^{\star(j)}}$. If
$\twonorm{\Eb^{(j)}}\le\varepsilon<1$, then the Neumann series gives
$\twonorm{(\Ib_{n}+\Eb^{(j)})^{-1}}\le 1/(1-\varepsilon)$, proving
\eqref{eq:periter-struct}.
\end{proof}

\begin{lemma}[Variance of a sketched matrix product]
\label{lem:variance}
For fixed $\xb\in\R^d$ and draw $\Sb$ by Algorithm~\ref{alg:sample} with size $s$ and probabilities $p$. Then
\begin{align*}
 \ex\twonorm{\Xb^{\ts}(\Sb\Sb^{\ts}-\Ib_d)\xb}^2 &\le\frac1s\sum_i\frac{|\xb_i|^2w_i}{p_i}.
\end{align*}
If $\xb$ and $p$ are measurable with respect to a sigma algebra $\Gcal$, the same calculation gives
\begin{equation}
 \ex\!\left[\twonorm{\Xb^{\ts}(\Sb\Sb^{\ts}-\Ib_d)\xb}^2
           \mid\Gcal\right]
 \le\frac1s\sum_i\frac{|\xb_i|^2w_i}{p_i}.
 \label{eq:variance-cond}
\end{equation}
\end{lemma}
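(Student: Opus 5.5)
I will write the sketched product as an average of i.i.d. mean-zero vectors and compute its second moment directly, as in the proof of Lemma~\ref{lem:colev-vector-var}. Let $i_1,\ldots,i_s$ be the indices drawn by Algorithm~\ref{alg:sample}, and write $\xb_i$ for the $i$th coordinate of $\xb$. Since $\Sb\Sb^{\ts}=\frac1s\sum_{k=1}^{s}\eb_{i_k}\eb_{i_k}^{\ts}/p_{i_k}$, we have
\begin{equation*}
 \Xb^{\ts}(\Sb\Sb^{\ts}-\Ib_d)\xb=\frac1s\sum_{k=1}^{s}\bigl(\ub_k-\Xb^{\ts}\xb\bigr),\qquad
 \ub_k:=\frac{\xb_{i_k}}{p_{i_k}}(\Xb_{i_k\ast})^{\ts}.
\end{equation*}
The $\ub_k$ are i.i.d. with
\begin{equation*}
 \ex[\ub_k]=\sum_i p_i\frac{\xb_i}{p_i}(\Xb_{i\ast})^{\ts}=\Xb^{\ts}\xb,
\end{equation*}
so each summand has mean zero. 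Outcomes with $p_i=0$ are excluded by the standing assumption.

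Next I use independence to drop the cross terms. This gives
\begin{equation*}
 \ex\twonorm{\Xb^{\ts}(\Sb\Sb^{\ts}-\Ib_d)\xb}^2=\frac1s\,\ex\twonorm{\ub_1-\Xb^{\ts}\xb}^2=\frac1s\bigl(\ex\twonorm{\ub_1}^2-\twonorm{\Xb^{\ts}\xb}^2\bigr)\le\frac1s\,\ex\twonorm{\ub_1}^2.
\end{equation*}
The remaining second moment is
\begin{equation*}
 \ex\twonorm{\ub_1}^2=\sum_i p_i\frac{|\xb_i|^2}{p_i^2}\twonorm{\Xb_{i\ast}}^2=\sum_i\frac{|\xb_i|^2w_i}{p_i},
\end{equation*}
which is the unconditional bound.

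For the conditional form \eqref{eq:variance-cond}, I assume, as in the algorithms, that given $\Gcal$ the indices are drawn i.i.d. from $p$, independently of anything else in $\Gcal$. Because $\xb$ and $p$ are $\Gcal$-measurable, they can be frozen, and the computation above applies to the regular conditional law. Concretely, I write the conditional expectation as $\sum_{i_1,\ldots,i_s}\prod_k p_{i_k}(\cdot)$ with $\xb$ and $p$ fixed. This is the same computation, now pointwise in $\Gcal$.

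The main obstacle is making the conditioning step rigorous when $p$ itself is random, as in the adaptive proposals. The other steps are routine algebra. If finite second moments are an issue, the bound is an equality-then-drop of a nonnegative term, so nonnegativity handles it even when the right-hand side is infinite.
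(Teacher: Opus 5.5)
Your proof is correct and matches the paper's argument: the paper cites Theorem~22 of \citet{drineas2017lectures} (the expected-error bound for approximate matrix multiplication, applied to $\Xb^{\ts}$ and $\xb$), and your i.i.d.\ mean-zero second-moment computation is exactly the proof of that theorem; the conditional form is then handled, as you do, by freezing $\xb$ and $p$. Your assumption that, given $\Gcal$, the indices are i.i.d.\ draws from $p$ is needed for the conditional statement and the paper leaves it implicit, so stating it is a clarification rather than a gap.
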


\begin{proof}\label{proof:ridge-variance}
Use Theorem 22 from~\citet{drineas2017lectures}. The conditional form \eqref{eq:variance-cond} follows by applying the same calculation after conditioning on $\Gcal$.
\end{proof}

\begin{proof}[\textbf{Proof of Lemma~\ref{lem:ridge-vector-var}}]
\phantomsection\label{proof:ridge-vector-var}
Condition on $\Fcal_{j-1}$, so that $\xb$ is fixed. Apply the exact variance
identity in Lemma~\ref{lem:variance} with $p_i=w_i/\dlam$ to obtain
\[
 \ex\!\left[
 \twonorm{\Xb^{\ts}(\Sb^{(j)}\Sb^{(j)\ts}-\Ib_d)\xb}^2
 \mid\Fcal_{j-1}\right]
 \le \frac1{s^{(j)}}\left(
 \sum_i\frac{|\xb_i|^2w_i}{w_i/\dlam}
 \right) = \frac{\dlam}{s^{(j)}}\twonorm{\xb}^2.
\]

\end{proof}

\subsection{A convergence theorem for residual-aware distributions}
\label{app:adaptive-master}

\begin{theorem}[Convergence from conditional variance]
\label{thm:master-var}
For the iteration in \eqref{eq:iteration}, let $(\Hcal_j)_{j=0}^t$ be a filtration
such that $\Hcal_{j-1}$ contains all randomness available before drawing
$\Sb^{(j)}$, for $j=1,\ldots,t$. Suppose $p^{(j)}$ and
$\xb^{\star(j)}$ are $\Hcal_{j-1}$-measurable and, conditionally on $\Hcal_{j-1}$, the sketch contains $s^{(j)}$ independent draws from $p^{(j)}$.
For nonzero residual solutions, define
\begin{equation}
 R_p^{(j)}=
 \frac{\sum_i|\xb_i^{\star(j)}|^2w_i/p_i^{(j)}}
      {\dlam\twonorm{\xb^{\star(j)}}^2},
 \qquad P_t=\prod_{j=1}^{t}R_p^{(j)}.
 \label{eq:master-factor}
\end{equation}
Set $R_p^{(j)}=1$ when $\xb^{\star(j)}=\zero$. Then $P_t>0$ and
\begin{equation}
 \ex\!\left[\frac{\twonorm{\xhatstar-\xstar}^2}{P_t}\right]
 \le c_\lambda^{2t}\prod_{j=1}^{t}\frac{\dlam}{s^{(j)}}
                      \twonorm{\xstar}^2.
 \label{eq:master-ms}
\end{equation}
For every $\delta\in(0,1)$, with probability at least $1-\delta$,
\begin{equation}
 \twonorm{\xhatstar-\xstar}
 \le\delta^{-1/2}c_\lambda^t
       \prod_{j=1}^{t}\sqrt{\frac{R_p^{(j)}\dlam}{s^{(j)}}}
       \twonorm{\xstar}.
 \label{eq:master-hp}
\end{equation}
\end{theorem}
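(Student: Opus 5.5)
The approach is to run the same one-step argument as in Theorem~\ref{thm:ridgelev}, but to normalize each step's squared error by its own variance factor $R_p^{(j)}$. This normalization turns the ratio $\twonorm{\eb_j}^2/\prod_{\ell\le j}R_p^{(\ell)}$ into a supermartingale-type sequence. Set $\eb_j=\xb^{\star(j)}-\widetilde{\xb}^{(j)}$ and $P_j=\prod_{\ell=1}^{j}R_p^{(\ell)}$, with $P_0=1$.

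\textbf{Step 1: positivity and measurability.} Since all sampling probabilities are assumed positive and $w_i>0$ on relevant coordinates, each $R_p^{(j)}$ is strictly positive whenever $\xb^{\star(j)}\neq\zero$. By convention $R_p^{(j)}=1$ otherwise, so $P_t>0$. Each $R_p^{(j)}$ is $\Hcal_{j-1}$-measurable because $p^{(j)}$ and $\xb^{\star(j)}$ are. It follows that $P_j$ is $\Hcal_{j-1}$-measurable.

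\textbf{Step 2: the one-step conditional bound.} Lemma~\ref{lem:periter} holds for every sketch. Combining it with the conditional variance bound \eqref{eq:variance-cond} of Lemma~\ref{lem:variance}, applied with $\Gcal=\Hcal_{j-1}$, gives
\[
\ex\!\left[\twonorm{\eb_j}^2\mid\Hcal_{j-1}\right]\le c_\lambda^2\frac{1}{s^{(j)}}\sum_i\frac{|\xb^{\star(j)}_i|^2w_i}{p^{(j)}_i}=c_\lambda^2\frac{\dlam}{s^{(j)}}R_p^{(j)}\twonorm{\xb^{\star(j)}}^2 .
\]
When $\xb^{\star(j)}=\zero$, this holds trivially: the identity in Lemma~\ref{lem:periter} gives $\eb_j=\zero$. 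By \eqref{eq:residual-rec}, $\xb^{\star(j)}=\eb_{j-1}$ (with $\eb_0=\xstar$). Dividing by the $\Hcal_{j-1}$-measurable positive quantity $P_j=P_{j-1}R_p^{(j)}$ and pulling it inside the conditional expectation yields
\[
\ex\!\left[\twonorm{\eb_j}^2/P_j\mid\Hcal_{j-1}\right]\le c_\lambda^2\frac{\dlam}{s^{(j)}}\,\twonorm{\eb_{j-1}}^2/P_{j-1}.
\]

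\textbf{Step 3: iterate and conclude.} Take expectations and iterate with the tower property from $j=t$ down to $j=1$. Using $\eb_t=\xstar-\xhatstar$ from \eqref{eq:residual-decomp}, this gives \eqref{eq:master-ms}.

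For \eqref{eq:master-hp}, apply Markov's inequality to the nonnegative variable $\twonorm{\xhatstar-\xstar}^2/P_t$. With probability at least $1-\delta$,
\[
\twonorm{\xhatstar-\xstar}^2\le\delta^{-1}P_t\,c_\lambda^{2t}\prod_j\frac{\dlam}{s^{(j)}}\twonorm{\xstar}^2 .
\]
Taking square roots gives the stated bound, since $P_t$ is random but appears on the same event.

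\textbf{Main obstacle.} The delicate point is Step 2's division by a random factor. It is legitimate only because $R_p^{(j)}$ is fixed before $\Sb^{(j)}$ is drawn, and because finite second moments (or nonnegativity) justify conditional expectations of the possibly unbounded ratio. I would handle integrability by working with nonnegative variables throughout, so that the tower property holds in $[0,\infty]$ without any extra moment assumptions.
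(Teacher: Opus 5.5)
Your proposal is correct and follows the paper's proof essentially step for step. Both arguments combine Lemma~\ref{lem:periter} with \eqref{eq:variance-cond} to get the per-step bound, normalize by the $\Hcal_{j-1}$-measurable product $P_j=P_{j-1}R_p^{(j)}$, use $\twonorm{\xb^{\star(j)}}=\twonorm{\eb_{j-1}}$ and the tower property to iterate, and apply Markov's inequality once to $\twonorm{\xhatstar-\xstar}^2/P_t$. Your extra remarks fill in details the paper leaves implicit: positivity of $P_t$ (which holds because $\xb^{\star(j)}\in\operatorname{range}(\Xb)$ vanishes wherever $w_i=0$), the zero-residual case, and working with nonnegative variables in $[0,\infty]$.
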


\begin{proof}
\phantomsection\label{proof:adaptive-master}
Set $\eb_j:=\xb^{\star(j)}-\widetilde{\xb}^{(j)}$ and $\eb_0:=\xstar$, and define
$P_j:=\prod_{\ell=1}^{j}R_p^{(\ell)}$ with $P_0:=1$.

Let
\[
  \yb_j:=\Xb^{\ts}(\Sb^{(j)}\Sb^{(j)\ts}-\Ib_d)\xb^{\star(j)} .
\]
Lemma~\ref{lem:periter} gives
$\twonorm{\eb_j}\le c_\lambda\twonorm{\yb_j}$ for every sketch realization.
Since $p^{(j)}$ and $\xb^{\star(j)}$ are $\Hcal_{j-1}$-measurable, \eqref{eq:variance-cond} and the definition of $R_p^{(j)}$ give
\begin{equation}
  \ex\!\left[\twonorm{\eb_j}^{2}\mid\Hcal_{j-1}\right]
  \le
  c_\lambda^{2}\frac{R_p^{(j)}\dlam}{s^{(j)}}
  \twonorm{\xb^{\star(j)}}^{2}.
  \label{eq:master-cond-rec-raw}
\end{equation}
By Lemma~\ref{lem:chow-rec}, $\twonorm{\xb^{\star(j)}}=\twonorm{\eb_{j-1}}$.
Since $P_j=P_{j-1}R_p^{(j)}$ is $\Hcal_{j-1}$-measurable,
\begin{equation}
  \ex\!\left[
  \frac{\twonorm{\eb_j}^{2}}{P_j}\,\middle|\,\Hcal_{j-1}
  \right]
  \le
  c_\lambda^{2}\frac{\dlam}{s^{(j)}}
  \frac{\twonorm{\eb_{j-1}}^{2}}{P_{j-1}}.
  \label{eq:master-selfnorm-rec}
\end{equation}
Iterating \eqref{eq:master-selfnorm-rec} with the tower property and using
Lemma~\ref{lem:chow-decomp}
($\twonorm{\xhatstar-\xstar}=\twonorm{\eb_t}$) gives \eqref{eq:master-ms}.
Markov's inequality applied to $\twonorm{\eb_t}^{2}/P_t$ gives
\eqref{eq:master-hp}.
\end{proof}

\begin{proof}[\textbf{Proof of Theorem~\ref{thm:ridgelev}}]
\phantomsection\label{proof:ridge-main}
Apply Theorem~\ref{thm:master-var} with $\Hcal_j=\Fcal_j$ and
$p_i^{(j)}=w_i/\dlam$. The residual solution $\xb^{\star(j)}$ is
$\Fcal_{j-1}$-measurable, and each fresh sketch consists of independent
draws from this fixed distribution. For every nonzero residual solution,
\[
 R_p^{(j)}
 =\frac{\sum_i|\xb_i^{\star(j)}|^2w_i/(w_i/\dlam)}
       {\dlam\twonorm{\xb^{\star(j)}}^2}
 =1.
\]
For zero residual solutions, $R_p^{(j)}=1$ by definition. Thus $P_t=1$,
and \eqref{eq:master-ms} and \eqref{eq:master-hp} give
\eqref{eq:ridge-ms-det} and \eqref{eq:ridge-hp-det}, respectively.

For a common sketch size satisfying \eqref{eq:ridge-var-sample},
$c_\lambda^2\dlam/s\le\varepsilon^2$; substituting this into the two bounds
gives the final claims.
\end{proof}

\subsection{Convergence on the embedding events}
\label{app:ridge-struct}

\begin{theorem}[Ridge leverage score sampling convergence with an embedding event]
\label{thm:ridgelev-struct}
Run Algorithm~\ref{alg:fresh} with RLS.
Fix $\varepsilon\in(0,1)$ and assume
$s^{(j)}\ge2\dlam/\varepsilon^2$ for every $j$. Define
\begin{equation*}
 \delta_{\mathrm{rls}}(s)
 =4(1+\dlam)\exp\!\left(-\frac{3\varepsilon^2s}{8\dlam}\right),
 \qquad
 H_j=\bigcap_{\ell=1}^{j}\{\twonorm{\Eb^{(\ell)}}\le\varepsilon\}.
\end{equation*}
Then
\begin{equation}
 \ex[\twonorm{\xhatstar-\xstar}^2\II_{H_t}]
 \le(1-\varepsilon)^{-2t}
       \prod_{j=1}^{t}\frac{\dlam}{s^{(j)}}\twonorm{\xstar}^2.
 \label{eq:ridge-ms-struct}
\end{equation}
For $\delta_{\mathrm{tail}}\in(0,1)$, with probability at least
$1-\delta_{\mathrm{tail}}-\sum_{j=1}^{t}\delta_{\mathrm{rls}}(s^{(j)})$,
\begin{equation}
 \twonorm{\xhatstar-\xstar}
 \le\delta_{\mathrm{tail}}^{-1/2}(1-\varepsilon)^{-t}
       \prod_{j=1}^{t}\sqrt{\frac{\dlam}{s^{(j)}}}\twonorm{\xstar}.
 \label{eq:ridge-hp-struct}
\end{equation}
In particular, fix $\delta_{\mathrm{emb}}\in(0,1)$.
If every iteration uses a common size
\begin{equation*}
 s\ge\max\left\{\frac{2\dlam}{\varepsilon^2},
 \frac{8\dlam}{3\varepsilon^2}
 \log\!\left(\frac{4t(1+\dlam)}{\delta_{\mathrm{emb}}}\right)\right\},
\end{equation*}
then, with probability at least
$1-\delta_{\mathrm{tail}}-\delta_{\mathrm{emb}}$,
\begin{equation*}
 \twonorm{\xhatstar-\xstar}
 \le\delta_{\mathrm{tail}}^{-1/2}
 \left(\frac{\varepsilon}{1-\varepsilon}
 \sqrt{\frac{3}{8\log(4t(1+\dlam)/\delta_{\mathrm{emb}})}}\right)^t
 \twonorm{\xstar}.
\end{equation*}
\end{theorem}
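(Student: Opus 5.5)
The argument follows the proof of Theorem~\ref{thm:colev-struct}, with the leverage-score quantities replaced by their ridge counterparts. Set $\eb_0=\xstar$ and $\eb_j=\xb^{\star(j)}-\widetilde{\xb}^{(j)}$, and let $B_\ell=\{\twonorm{\Eb^{(\ell)}}\le\varepsilon\}$, so that $H_j=H_{j-1}\cap B_j$ and $H_0=\Omega$. Lemma~\ref{lem:chow-rec} gives $\twonorm{\xhatstar-\xstar}=\twonorm{\eb_t}$ and $\twonorm{\xb^{\star(j)}}=\twonorm{\eb_{j-1}}$. On $B_j$ the sharper bound \eqref{eq:periter-struct} of Lemma~\ref{lem:periter} applies. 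Multiplying by the indicator of $H_j$ and then dropping $\II_{B_j}\le1$ gives the pathwise inequality
\[
\twonorm{\eb_j}^2\II_{H_j}\le(1-\varepsilon)^{-2}\twonorm{\Xb^{\ts}(\Sb^{(j)}\Sb^{(j)\ts}-\Ib_d)\xb^{\star(j)}}^2\,\II_{H_{j-1}}.
\]

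Next, take expectations and condition on $\Fcal_{j-1}$. Both $\II_{H_{j-1}}$ and $\xb^{\star(j)}$ are $\Fcal_{j-1}$-measurable, so the indicator factors out of the conditional expectation. Lemma~\ref{lem:ridge-vector-var} then yields
\[
\ex[\twonorm{\eb_j}^2\II_{H_j}]\le(1-\varepsilon)^{-2}\frac{\dlam}{s^{(j)}}\ex[\twonorm{\eb_{j-1}}^2\II_{H_{j-1}}].
\]
Iterating this from $j=t$ down to $j=1$ gives \eqref{eq:ridge-ms-struct}. For the tail bound, Markov's inequality applied to $\twonorm{\eb_t}^2\II_{H_t}$ shows that the event where $H_t$ holds but \eqref{eq:ridge-hp-struct} fails has probability at most $\delta_{\mathrm{tail}}$, exactly as in the leverage case. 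It then remains to bound $\pr{H_t^c}$ by a union bound.

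The one step needing care is this embedding probability. I would apply Lemma~\ref{lem:cyd-struct-tail} to $\Xb=\Vb\Slam$, which is admissible because $\twonorm{\Xb}\le\twonorm{\Slam}\le1$ and $\|\Xb\|_F^2=\dlam$. The RLS probabilities are exactly $p_i=\twonorm{\Xb_{i\ast}}^2/\|\Xb\|_F^2$, and the hypothesis $s^{(j)}\ge2\dlam/\varepsilon^2$ is exactly \eqref{eq:cyd-tail-size}. Since $\Xb^{\ts}\Sb\Sb^{\ts}\Xb-\Xb^{\ts}\Xb=\Eb^{(j)}$, the bound \eqref{eq:cyd-tail-simple} gives $\pr{B_j^c}\le\delta_{\mathrm{rls}}(s^{(j)})$. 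The union bound over $j$ then completes \eqref{eq:ridge-hp-struct}.

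For a common size $s$, the second term in the maximum guarantees $\delta_{\mathrm{rls}}(s)\le\delta_{\mathrm{emb}}/t$, so the total embedding failure probability is at most $\delta_{\mathrm{emb}}$. The same term also gives
\[
\sqrt{\dlam/s}\le\varepsilon\sqrt{3/(8\log(4t(1+\dlam)/\delta_{\mathrm{emb}}))}.
\]
Substituting this into \eqref{eq:ridge-hp-struct} gives the final display. The degenerate case $\xstar=\zero$ is trivial, since then every iterate is zero.
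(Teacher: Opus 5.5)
Your proposal is correct and follows the paper's proof essentially step for step. It uses the same pathwise bound $\twonorm{\eb_j}^2\II_{H_j}\le(1-\varepsilon)^{-2}\twonorm{\Xb^{\ts}(\Sb^{(j)}\Sb^{(j)\ts}-\Ib_d)\xb^{\star(j)}}^2\II_{H_{j-1}}$ (obtained by dropping the current embedding indicator before conditioning), the conditional variance bound with $p_i=w_i/\dlam$, Markov's inequality on $\twonorm{\eb_t}^2\II_{H_t}$, Lemma~\ref{lem:cyd-struct-tail} applied to $\Xb=\Vb\Slam$ with a union bound, and the same common-size substitution; your explicit checks of that lemma's hypotheses and of the case $\xstar=\zero$ only spell out what the paper leaves implicit.
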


\begin{proof}
\phantomsection\label{proof:ridge-embedding}
Let $G_j=\{\twonorm{\Eb^{(j)}}\le\varepsilon\}$, $H_j=\bigcap_{\ell=1}^{j}G_\ell$, and $H_0=\Omega$.
Set $\eb_j:=\xb^{\star(j)}-\widetilde{\xb}^{(j)}$ and $\eb_0:=\xstar$. Also set
\[
  \yb_j:=\Xb^{\ts}(\Sb^{(j)}\Sb^{(j)\ts}-\Ib_{d})\xb^{\star(j)}.
\]
On $G_j$,~\eqref{eq:periter-struct} gives
$\twonorm{\eb_j}\le(1-\varepsilon)^{-1}\twonorm{\yb_j}$. Since
$H_j=H_{j-1}\cap G_j$,
\begin{equation*}
  \twonorm{\eb_j}^{2}\II_{H_j}
  \le
  (1-\varepsilon)^{-2}\twonorm{\yb_j}^{2}\II_{H_{j-1}}\II_{G_j}
  \le
  (1-\varepsilon)^{-2}\twonorm{\yb_j}^{2}\II_{H_{j-1}}.
\end{equation*}
The last inequality is the key point: we use $G_j$ to bound the inverse, then
drop $\II_{G_j}$ before conditioning, so the conditional sampling law is not
changed.

Now take expectations and condition on $\Fcal_{j-1}$. The event $H_{j-1}$ and
the vector $\xb^{\star(j)}$ are $\Fcal_{j-1}$-measurable, while $\Sb^{(j)}$ is
fresh. Substituting $p_i=w_i/\dlam$ into \eqref{eq:variance-cond} gives
$\sum_i\abs{\xb_i^{\star(j)}}^2w_i/p_i=\dlam\twonorm{\xb^{\star(j)}}^2$, and hence
\begin{align*}
  \ex\!\left[\twonorm{\eb_j}^{2}\II_{H_j}\right]
  &\le
  (1-\varepsilon)^{-2}
  \ex\!\left[
  \II_{H_{j-1}}\,
  \ex\!\left[\twonorm{\yb_j}^{2}\mid\Fcal_{j-1}\right]
  \right]\\
  &\le
  (1-\varepsilon)^{-2}
  \frac{\dlam}{s^{(j)}}
  \ex\!\left[
  \twonorm{\xb^{\star(j)}}^{2}\II_{H_{j-1}}
  \right].
\end{align*}
By Lemma~\ref{lem:chow-rec}, $\twonorm{\xb^{\star(j)}}=\twonorm{\eb_{j-1}}$ for
each $j$, with $\eb_0=\xstar$. Therefore
\begin{equation}
  \ex\!\left[\twonorm{\eb_j}^{2}\II_{H_j}\right]
  \le
  (1-\varepsilon)^{-2}
  \frac{\dlam}{s^{(j)}}
  \ex\!\left[\twonorm{\eb_{j-1}}^{2}\II_{H_{j-1}}\right].
  \label{eq:ridge-trunc-rec}
\end{equation}
Iterating \eqref{eq:ridge-trunc-rec} and using Lemma~\ref{lem:chow-decomp}
($\twonorm{\xhatstar-\xstar}=\twonorm{\eb_t}$) gives \eqref{eq:ridge-ms-struct}.

For the high-probability statement, Markov's inequality applied to
$\twonorm{\eb_t}^{2}\II_{H_t}$ gives
\[
  \pr{\twonorm{\eb_t}>
  \frac{1}{\sqrt{\delta_{\mathrm{tail}}}}\,
  (1-\varepsilon)^{-t}
  \prod_{j=1}^{t}\sqrt{\frac{\dlam}{s^{(j)}}}\twonorm{\xstar},
  \ H_t}
  \le\delta_{\mathrm{tail}}.
\]
Lemma~\ref{lem:cyd-struct-tail}, applied to $\Xb$, gives
$\pr{G_j^c}\le\delta_{\mathrm{rls}}(s^{(j)})$ for every $j$. Hence, by a union
bound, $\pr{H_t^c}\le\sum_{j=1}^{t}\delta_{\mathrm{rls}}(s^{(j)})$. Combining
the two bounds proves \eqref{eq:ridge-hp-struct}.

\emph{Common sketch size.}
The sample size assumption gives $\delta_{\mathrm{rls}}(s)\le\delta_{\mathrm{emb}}/t$,
so the embedding failure probability in Theorem~\ref{thm:ridgelev-struct} is at most
$\delta_{\mathrm{emb}}$. It also implies
\[
  \sqrt{\frac{\dlam}{s}}
  \le
  \varepsilon\sqrt{\frac{3}{8\log(4t(1+\dlam)/\delta_{\mathrm{emb}})}}.
\]
Substituting this bound into \eqref{eq:ridge-hp-struct} proves the claim.
\end{proof}

\section{Oracle and Adaptive Analysis}
\label{app:oracle}

Throughout this appendix, $c_\lambda=(\smax^2+\lambda)/\lambda$ as in~\eqref{eq:clambda}.

\subsection{Oracle convergence}
\label{sec:oracle-thy}

\begin{corollary}[Oracle convergence]
\label{cor:oracle}\label{cor:oracle-var-scale}
Run Algorithm~\ref{alg:fresh} with Oracle. Set $\rho^{(j)}=\rho(\xb^{\star(j)})$ and
$\Theta_t=\prod_{j=1}^{t}\rho^{(j)}$.
Then
\begin{equation}
 \ex\!\left[\frac{\twonorm{\xhatstar-\xstar}^2}{\Theta_t}\right]
 \le c_\lambda^{2t}\prod_{j=1}^{t}\frac{\dlam}{s^{(j)}}
        \twonorm{\xstar}^2.
 \label{eq:oracle-selfnorm-ms}
\end{equation}
For every $\delta\in(0,1)$, with probability at least $1-\delta$,
\begin{equation}
 \twonorm{\xhatstar-\xstar}
 \le\delta^{-1/2}c_\lambda^t
       \prod_{j=1}^{t}\sqrt{\frac{\rho^{(j)}\dlam}{s^{(j)}}}
       \twonorm{\xstar}.
 \label{eq:oracle-selfnorm-hp}
\end{equation}
For a common size $s$, this is \eqref{eq:oracle-var-scale}.
If a deterministic $\bar\rho\in(0,1]$ bounds $\rho^{(j)}$ 
at every iteration with nonzero residual, and
$s\ge c_\lambda^2\bar\rho\dlam/\varepsilon^2$ for
$\varepsilon\in(0,1)$, then
\begin{equation}
 \ex\twonorm{\xhatstar-\xstar}^2
 \le\varepsilon^{2t}\twonorm{\xstar}^2,
 \qquad
 \twonorm{\xhatstar-\xstar}
 \le\delta^{-1/2}\varepsilon^t\twonorm{\xstar}
 \label{eq:oracle-eps-contract}
\end{equation}
where the second inequality holds with probability at least $1-\delta$.
\end{corollary}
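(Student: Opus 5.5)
The plan is to obtain Corollary~\ref{cor:oracle} as a direct specialization of Theorem~\ref{thm:master-var}. Take the filtration $\Hcal_j=\Fcal_j$. The oracle proposal $p^{\star}(\xb^{\star(j)})$ is a deterministic function of $\xb^{\star(j)}$, which is $\Fcal_{j-1}$-measurable. Conditionally on $\Fcal_{j-1}$, the fresh sketch $\Sb^{(j)}$ consists of $s^{(j)}$ independent draws from this proposal. Hence the measurability hypotheses of Theorem~\ref{thm:master-var} hold.

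Iterations with $\xb^{\star(j)}=\zero$ need separate handling. In that case $\bb^{(j)}$ lies in the kernel of $\Ab^{\ts}(\Ab\Ab^{\ts}+\lambda\Ib_n)^{-1}$, so $\bb^{(j)}=\zero$. All later iterates then vanish, and any choice of $p^{(j)}$ is harmless. I would set $\rho^{(j)}=1$ there, consistent with the convention $R_p^{(j)}=1$ of Theorem~\ref{thm:master-var}.

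Coordinates with $\xb_i^{\star(j)}=0$ receive zero probability. Under the paper's convention these outcomes are skipped, and they also contribute $0$ to $\sum_i|\xb_i|^2w_i/p_i$, so Lemma~\ref{lem:variance} still applies with the sum restricted to the support.

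The key computation is $R_p^{(j)}=\rho^{(j)}$ under the oracle proposal. Substituting $p_i=|\xb_i|\sqrt{w_i}/Z(\xb)$ gives
\[
\sum_i \frac{|\xb_i|^2 w_i}{p_i} = Z(\xb)\sum_i |\xb_i|\sqrt{w_i} = Z(\xb)^2 .
\]
Dividing by $\dlam\twonorm{\xb}^2$ yields exactly $\rho(\xb)$ from \eqref{eq:rho}. Therefore $P_t=\Theta_t$, and \eqref{eq:master-ms} and \eqref{eq:master-hp} become \eqref{eq:oracle-selfnorm-ms} and \eqref{eq:oracle-selfnorm-hp}.

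For a common size $s$, the product $\prod_j\sqrt{\rho^{(j)}\dlam/s}$ equals $(\bar\rho_t\dlam/s)^{t/2}$. This recovers \eqref{eq:oracle-var-scale}.

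For the final claim \eqref{eq:oracle-eps-contract}, use $\rho^{(j)}\le\bar\rho$ at nonzero-residual iterations. The convention $\rho^{(j)}=1$ at zero-residual iterations is the one subtle point, because there the bound $\rho^{(j)}\le\bar\rho$ fails. It does no harm, since on such paths the error is already zero. To avoid this issue, I would not reuse the self-normalized bound. Instead I would redo the one-step recursion directly. Lemma~\ref{lem:periter} and \eqref{eq:variance-cond} give
\[
\ex[\twonorm{\eb_j}^2\mid\Fcal_{j-1}]\le c_\lambda^2\rho^{(j)}\dlam\twonorm{\eb_{j-1}}^2/s .
\]
This is at most $c_\lambda^2\bar\rho\dlam\twonorm{\eb_{j-1}}^2/s$: either $\rho^{(j)}\le\bar\rho$, or $\eb_{j-1}=\zero$ and both sides vanish. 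Under $s\ge c_\lambda^2\bar\rho\dlam/\varepsilon^2$ the right side is at most $\varepsilon^2\twonorm{\eb_{j-1}}^2$. The tower property then gives the mean-square bound, and Markov's inequality gives the high-probability bound.

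I expect this zero-residual bookkeeping, together with the skipped zero-probability indices, to be the only real obstacle. Everything else is substitution.
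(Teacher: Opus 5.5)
Your proposal is correct and follows the paper's proof. Both arguments specialize Theorem~\ref{thm:master-var} with $\Hcal_j=\Fcal_j$, check that the oracle proposal gives $R_p^{(j)}=\rho^{(j)}$ (so $P_t=\Theta_t$), and obtain the common-size and $\bar\rho$ claims by substitution. The one difference is in \eqref{eq:oracle-eps-contract}: you rerun the one-step recursion instead of plugging $\bar\rho_t\le\bar\rho$ into the self-normalized bounds, which handles the zero-residual convention $\rho^{(j)}=1$ more carefully. The paper's shortcut is still valid, because $\twonorm{\xhatstar-\xstar}=0$ on those paths.
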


\begin{proof}
\phantomsection\label{proof:oracle-convergence}
Apply Theorem~\ref{thm:master-var} with $\Hcal_j=\Fcal_j$ and
$p^{(j)}=p^\star(\xb^{\star(j)})$. By Proposition~\ref{prop:oracle} and
\eqref{eq:rho}, the master factor \eqref{eq:master-factor} is
$R_p^{(j)}=\rho^{(j)}$. Hence $P_t=\Theta_t$, and
\eqref{eq:master-ms}--\eqref{eq:master-hp} give
\eqref{eq:oracle-selfnorm-ms}--\eqref{eq:oracle-selfnorm-hp}.

For common $s$, the product in \eqref{eq:oracle-selfnorm-hp} satisfies
\[
  \prod_{j=1}^{t}\sqrt{\frac{\rho^{(j)}\dlam}{s}}
  =
  \left(\sqrt{\frac{\bar{\rho}_{t}\dlam}{s}}\right)^{t}.
\]
This proves \eqref{eq:oracle-var-scale}. If $\rho^{(j)}\le\bar{\rho}$ for all
$j$, then $\bar{\rho}_{t}\le\bar{\rho}$; substituting
$s\ge c_\lambda^{2}\bar{\rho}\dlam/\varepsilon^{2}$ gives
\eqref{eq:oracle-eps-contract}.

\end{proof}

\subsection{Adaptive proposals and a defensive mixture}
\label{sec:adaptive}

The pilot sketch $\Sb^{(0)}$ is part of the history:
\begin{equation*}
 \Fcal_0^{\mathrm{ad}}=\sigma(\Sb^{(0)}),
 \qquad
 \Fcal_j^{\mathrm{ad}}
 =\sigma(\Sb^{(0)},\Sb^{(1)},\ldots,\Sb^{(j)}).
\end{equation*}
The proposal in \eqref{eq:phat} is therefore fixed conditionally on
$\Fcal_{j-1}^{\mathrm{ad}}$.

\begin{corollary}[Pure adaptive proposals]
\label{cor:pure-adaptive-var}
Run Algorithm~\ref{alg:adaptive} with the pure proposal
$\widehat p^{(j)}$ and sketch sizes $s^{(j)}$.
At a nonzero residual solution, define
\begin{equation}
 R_{\mathrm{pa}}^{(j)}=
 \frac{\sum_i|\xb_i^{\star(j)}|^2w_i/\widehat p_i^{(j)}}
      {\dlam\twonorm{\xb^{\star(j)}}^2},
 \qquad P_{\mathrm{pa},t}=\prod_{j=1}^{t}R_{\mathrm{pa}}^{(j)}.
 \label{eq:pure-adaptive-factor}
\end{equation}
Then
\begin{equation*}
 \ex\!\left[\frac{\twonorm{\xhatstar-\xstar}^2}{P_{\mathrm{pa},t}}\right]
 \le c_\lambda^{2t}\prod_{j=1}^{t}\frac{\dlam}{s^{(j)}}
       \twonorm{\xstar}^2.
\end{equation*}
For every $\delta\in(0,1)$, with probability at least $1-\delta$,
\begin{equation*}
 \twonorm{\xhatstar-\xstar}
 \le\delta^{-1/2}c_\lambda^t
       \prod_{j=1}^{t}\sqrt{\frac{R_{\mathrm{pa}}^{(j)}\dlam}{s^{(j)}}}
       \twonorm{\xstar}.
\end{equation*}
\end{corollary}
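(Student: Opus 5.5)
The plan is to obtain Corollary~\ref{cor:pure-adaptive-var} as a direct specialization of Theorem~\ref{thm:master-var}. The filtration is $\Hcal_j=\Fcal_j^{\mathrm{ad}}=\sigma(\Sb^{(0)},\Sb^{(1)},\ldots,\Sb^{(j)})$ for $j=0,\ldots,t$, and the proposal is $p^{(j)}=\widehat p^{(j)}$, i.e.\ Algorithm~\ref{alg:adaptive} with $\alpha=0$. Once the hypotheses of Theorem~\ref{thm:master-var} are checked, the master factor \eqref{eq:master-factor} coincides with $R_{\mathrm{pa}}^{(j)}$ in \eqref{eq:pure-adaptive-factor}. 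Then $P_t=P_{\mathrm{pa},t}$, and \eqref{eq:master-ms}--\eqref{eq:master-hp} are exactly the two displayed bounds.

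The first step is measurability. By \eqref{eq:iteration}, $\bb^{(j)}$ is a deterministic function of $\bb$ and $\Sb^{(1)},\ldots,\Sb^{(j-1)}$. Hence $\xb^{\star(j)}=\Ab^{\ts}(\Ab\Ab^{\ts}+\lambda\Ib_n)^{-1}\bb^{(j)}$ is $\Fcal_{j-1}\subseteq\Fcal^{\mathrm{ad}}_{j-1}$-measurable. The pilot estimate $\widehat{\xb}^{(j)}=\Ab^{\ts}\Hb_0^{-1}\bb^{(j)}$ depends additionally only on $\Sb^{(0)}$, so $\widehat p^{(j)}$ in \eqref{eq:phat} is $\Fcal^{\mathrm{ad}}_{j-1}$-measurable. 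Algorithm~\ref{alg:adaptive} draws $\Sb^{(j)}$ with Algorithm~\ref{alg:sample} using $p^{(j)}$. So, conditionally on $\Fcal^{\mathrm{ad}}_{j-1}$, the sketch consists of $s^{(j)}$ i.i.d.\ draws from $\widehat p^{(j)}$, which is the sampling hypothesis of Theorem~\ref{thm:master-var}. The residual recursion in Lemma~\ref{lem:chow-rec} and the deterministic per-iteration bound in Lemma~\ref{lem:periter} hold for any sketches, so the presence of the pilot does not affect them.

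The main obstacle is well-definedness of the factor. $R_{\mathrm{pa}}^{(j)}$ involves division by $\widehat p_i^{(j)}$, and coordinates with $\widehat{\xb}^{(j)}_i=0$, or $z=0$, give zero probabilities. I would handle this with the paper's standing convention that zero-probability outcomes are skipped: sums in Lemma~\ref{lem:variance} run over $\{i:\widehat p^{(j)}_i>0\}$. I would note that if some $i$ has $\widehat p_i^{(j)}=0$ but $\xb^{\star(j)}_i w_i\ne0$, then $R_{\mathrm{pa}}^{(j)}=+\infty$ and the bounds hold trivially. This is consistent with the remark that pure adaptive factors need not be bounded. In that case the left side of the mean-square bound reads $0$ on such paths, and the high-probability bound is vacuous there. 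For iterations with $\xb^{\star(j)}=\zero$, I would set $R_{\mathrm{pa}}^{(j)}=1$, as in Theorem~\ref{thm:master-var}. With these conventions $P_{\mathrm{pa},t}\in(0,\infty]$, and applying Theorem~\ref{thm:master-var} verbatim finishes the proof.
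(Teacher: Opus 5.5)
Your proposal is correct and follows the paper's proof, which is exactly the one-line application of Theorem~\ref{thm:master-var} with $\Hcal_j=\Fcal_j^{\mathrm{ad}}$ and $p^{(j)}=\widehat p^{(j)}$, so that $R_p^{(j)}=R_{\mathrm{pa}}^{(j)}$ and $P_t=P_{\mathrm{pa},t}$. Your explicit measurability check and your $R_{\mathrm{pa}}^{(j)}=+\infty$ convention for coordinates with $\widehat p_i^{(j)}=0$ but $\xb_i^{\star(j)}w_i\neq 0$ are extra care that the paper sidesteps through its standing assumption that all sampling probabilities are positive.
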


\begin{proof}
\phantomsection\label{proof:adaptive-pure}

Apply Theorem~\ref{thm:master-var} with $\Hcal_j=\Fcal_j^{\mathrm{ad}}$ and
$p^{(j)}=\widehat p^{(j)}$. Then $R_p^{(j)}=R_{\mathrm{pa}}^{(j)}$ by
\eqref{eq:pure-adaptive-factor}.

\end{proof}

The corollary does not bound proposal quality: small probabilities can make
$R_{\mathrm{pa}}^{(j)}$ arbitrarily large. The mixture in \eqref{eq:mix}
bounds this factor uniformly.

\begin{corollary}[Defensive mixture]
\label{cor:mixture-var}
Use \eqref{eq:mix} with any measurable probability proposal
$\widehat p^{(j)}$ and a fixed $\alpha\in(0,1)$.
At a nonzero residual solution, its factor $R_{\mathrm{mix}}^{(j)}$ in
\eqref{eq:master-factor} satisfies
\begin{equation*}
 R_{\mathrm{mix}}^{(j)}\le\frac1\alpha,
 \qquad
 R_{\mathrm{mix}}^{(j)}\le
       \frac{R_{\mathrm{pa}}^{(j)}}{1-\alpha}.
\end{equation*}
Consequently,
\begin{equation}
 \ex\twonorm{\xhatstar-\xstar}^2
 \le c_\lambda^{2t}\prod_{j=1}^{t}\frac{\dlam}{\alpha s^{(j)}}
                         \twonorm{\xstar}^2.
 \label{eq:mixture-ms}
\end{equation}
For $\varepsilon,\delta\in(0,1)$ and a common size
$s\ge c_\lambda^2\dlam/(\alpha\varepsilon^2)$, the right side is at most
$\varepsilon^{2t}\twonorm{\xstar}^2$; with probability at least
$1-\delta$, the relative error is at most
$\delta^{-1/2}\varepsilon^t$.
\end{corollary}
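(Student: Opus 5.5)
Both factor bounds come from a lower bound on each mixture coordinate. Since $\widetilde p_i^{(j)}=(1-\alpha)\widehat p_i^{(j)}+\alpha w_i/\dlam$ and both summands are nonnegative, we have
\[
\widetilde p_i^{(j)}\ge\alpha w_i/\dlam \qquad\text{and}\qquad \widetilde p_i^{(j)}\ge(1-\alpha)\widehat p_i^{(j)}
\]
for every $i$. Both are strictly positive whenever $w_i>0$; indices with $w_i=0$ contribute nothing to the numerator of \eqref{eq:master-factor} under the convention that zero-probability outcomes are skipped.

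For the first bound, substitute $1/\widetilde p_i^{(j)}\le\dlam/(\alpha w_i)$ into \eqref{eq:master-factor}. The numerator becomes at most $(\dlam/\alpha)\sum_i|\xb_i^{\star(j)}|^2=(\dlam/\alpha)\twonorm{\xb^{\star(j)}}^2$, so $R_{\mathrm{mix}}^{(j)}\le1/\alpha$. For the second, use $1/\widetilde p_i^{(j)}\le1/((1-\alpha)\widehat p_i^{(j)})$ termwise. Comparing with \eqref{eq:pure-adaptive-factor} gives $R_{\mathrm{mix}}^{(j)}\le R_{\mathrm{pa}}^{(j)}/(1-\alpha)$, with the right side read as $+\infty$ if some $\widehat p_i^{(j)}$ vanishes where $|\xb_i^{\star(j)}|\sqrt{w_i}>0$. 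At zero residual solutions the convention $R_p^{(j)}=1\le1/\alpha$ keeps the first bound valid.

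For \eqref{eq:mixture-ms}, apply Theorem~\ref{thm:master-var} with $\Hcal_j=\Fcal_j^{\mathrm{ad}}$, taking $p^{(j)}$ to be the mixture. Its hypotheses hold because the proposal and $\xb^{\star(j)}$ are $\Fcal_{j-1}^{\mathrm{ad}}$-measurable and the sketch draws are conditionally i.i.d. There is one wrinkle: \eqref{eq:master-ms} is self-normalized by $P_t$, which is random. The cleanest route is to bypass that normalization and rerun the one-step recursion directly. From \eqref{eq:master-cond-rec-raw} with $R_p^{(j)}\le1/\alpha$,
\[
\ex\bigl[\twonorm{\eb_j}^2\mid\Hcal_{j-1}\bigr]\le c_\lambda^2\frac{\dlam}{\alpha s^{(j)}}\twonorm{\eb_{j-1}}^2,
\]
using $\twonorm{\xb^{\star(j)}}=\twonorm{\eb_{j-1}}$ from Lemma~\ref{lem:chow-rec}. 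The tower property then gives \eqref{eq:mixture-ms}, because $\eb_0=\xstar$ and $\twonorm{\eb_t}=\twonorm{\xhatstar-\xstar}$. Alternatively, $P_t\le\alpha^{-t}$ deterministically, so $\twonorm{\xhatstar-\xstar}^2\le\alpha^{-t}\twonorm{\xhatstar-\xstar}^2/P_t$ and \eqref{eq:master-ms} yields the same bound.

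For a common size $s\ge c_\lambda^2\dlam/(\alpha\varepsilon^2)$, each factor $c_\lambda^2\dlam/(\alpha s)$ is at most $\varepsilon^2$, so the right side of \eqref{eq:mixture-ms} is at most $\varepsilon^{2t}\twonorm{\xstar}^2$. Markov's inequality applied to $\twonorm{\xhatstar-\xstar}^2$ then gives the relative error bound $\delta^{-1/2}\varepsilon^t$ with probability at least $1-\delta$.

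None of these steps is hard. The only point needing care is the one flagged above: \eqref{eq:master-ms} is normalized by the random $P_t$, so it does not directly bound the unnormalized mean square error, and the deterministic bound $P_t\le\alpha^{-t}$ (or the direct recursion) is what removes the normalization.
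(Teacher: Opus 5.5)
Your proposal is correct and follows essentially the same route as the paper: the coordinatewise lower bounds $\widetilde p_i^{(j)}\ge\alpha w_i/\dlam$ and $\widetilde p_i^{(j)}\ge(1-\alpha)\widehat p_i^{(j)}$ give the two factor bounds, the first is inserted into \eqref{eq:master-cond-rec-raw} and iterated with the tower property, and Markov's inequality gives the high-probability statement. Your remark that \eqref{eq:master-ms} is normalized by the random $P_t$, so it must be bypassed (or combined with the deterministic bound $P_t\le\alpha^{-t}$), is the same reason the paper iterates the raw one-step recursion instead of citing the theorem's conclusion directly.
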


\begin{proof}
\phantomsection\label{proof:adaptive-mixture}
Since $\widetilde p_i^{(j)}\ge\alpha w_i/\dlam$,
\[
 \sum_i\frac{|\xb_i^{\star(j)}|^2w_i}{\widetilde p_i^{(j)}}
 \le\frac{\dlam}{\alpha}\twonorm{\xb^{\star(j)}}^2.
\]
The second comparison follows from
$\widetilde p_i^{(j)}\ge(1-\alpha)\widehat p_i^{(j)}$.
The first comparison inserted into \eqref{eq:master-cond-rec-raw} and
iterated proves \eqref{eq:mixture-ms}; it also holds after termination.
The remaining statements follow by substitution and Markov's inequality.
\end{proof}

\section{Details of experiments and additional results}
\label{app:experiment-details}

\subsection{Implementation, metrics, and sampling budgets}
\label{app:experiment-implementation}
\label{app:implementation}
The experiments use the residual correction in~\eqref{eq:iteration} for
uniform, leverage, ridge leverage, oracle, adaptive, and mixture
sampling. Sampling is with replacement, with the rescaling defined in
Section~\ref{sec:setup}. The mixture places weight $\alpha=0.1$ on ridge
leverage probabilities. Each trial runs for a fixed number $t$ of updates;
the zero initial iterate is included in error curves. 

A thin SVD supplies the reference
solution and the leverage scores, and is reused when varying $\lambda$.
Each sampled system is solved by a Cholesky factorization. For a reused
sketch, this factorization is cached across iterations. Both adaptive and mixture methods draw one initial ridge leverage pilot, cache its Cholesky factor, and evaluate subsequent pilot estimates by triangular solves followed by a matrix vector product. Explicit matrix inverses are not formed.

Let $\widehat{\xb}^{\star}_j=\sum_{\ell=1}^{j}\widetilde{\xb}^{(\ell)}$
denote the accumulated solution after $j$ updates, with
$\widehat{\xb}^{\star}_0=\zero_d$ and $\widehat{\xb}^{\star}_t=\xhatstar$.
We report relative solution error and normalized objective excess,
\[
 \frac{\|\widehat{\xb}^{\star}_j-\xstar\|_2}{\|\xstar\|_2},
 \qquad
 \frac{f(\widehat{\xb}^{\star}_j)-f(\xstar)}{f(\xstar)}.
\]
The objective $f$ is defined in~\eqref{eq:ridge}. To avoid cancellation near
convergence, for $\eb=\widehat{\xb}^{\star}_j-\xstar$ the implementation
evaluates its numerator as
$\|\Ab\eb\|_2^2+\lambda\|\eb\|_2^2$. This equals the objective
excess at the exact minimizer.

The plotted parameter $s$ is the number of columns sampled in each iteration. For adaptive and mixture methods, the initial pilot has
$s_0=s$ columns and each update has
\[
 s^{(j)}=\left\lfloor\frac{st-s_0}{t}\right\rfloor
        =\left\lfloor\frac{s(t-1)}{t}\right\rfloor,
 \qquad j=1,\ldots,t.
\]
Thus the pilot and updates together use at most $st$ sampled columns.

Curves show medians and interquartile ranges over independent sketch
trials. Iteration curves use thin lines without markers; parameter sweeps use
small filled markers at the evaluated settings. The same six-panel layout is used for all three datasets.

\subsection{ARCENE and synthetic data}
\label{app:experiment-baselines}
\label{app:baseline-settings}
These experiments directly compare with the column sampling framework of
\citet{chowdhury2018iterative}. We reproduce their types of iteration,
sketch size, and regularization sweeps, adding the oracle and adaptive
probabilities. Panels (a--d) use fresh sketches for all six distributions;
panel (e) shows the oracle factor along each of the six methods' residual
trajectories, and panel (f) shows the proposal factor for adaptive sampling
and its mixture. These trajectories are the same as in panels (a,b).
A separate experiment compares the original reused-sketch
baselines with their refreshed versions.

\begin{table}[htbp]
\caption{ARCENE and synthetic configurations. Iteration curves contain the
initial iterate and twenty updates. The parameter $s$ follows the budget
convention in Appendix~\ref{app:experiment-implementation}.\\}
\label{tab:experiment-config}
\centering
\begin{tabular}{lll}
\toprule
Setting & ARCENE & Synthetic\\
\midrule
Design dimensions & $200\times10000$ & $500\times50000$\\
Independent sketch trials & $15$ & $5$\\
Iteration panels: $s$ & $2400$ & $20000$\\
Iteration panels: $\lambda$ & $10$ & $10$\\
Sketch size sweep & $500,750,\ldots,2500$ & $2500,5000,\ldots,20000$\\
Regularization sweep: $\lambda$ & $1,2,5,10,20,50$ & $10,20,50,75,100,150$\\
Updates in parameter sweeps & $10$ & $10$\\
Regularization panels: $s$ & $2400$ & $20000$\\
Master seed & $0$ & $42$\\
\bottomrule
\end{tabular}
\end{table}

\paragraph{ARCENE.}
The ARCENE feature selection data \citep{guyon2004result} combine the
provided training and validation arrays into one $200\times10000$ design,
with responses in $\{-1,1\}$. We apply a single global min--max scaling to
this combined design. This experiment studies numerical solution of that
fixed ridge problem; it does not use the combined data to report held-out
classification performance.
Identically zero columns retain zero leverage mass and zero solution
coefficients; uniform sampling still uses all $10000$ columns.

\paragraph{Synthetic construction.}
Following the noisy low rank construction of \citet{chen2015fast}, the
implementation uses $n=500$, $d=50000$, and rank parameter $r=50$. Draw
independent standard Gaussian matrices
$\Mb\in\R^{n\times r}$, $\Gb\in\R^{d\times r}$,
$\Eb\in\R^{n\times d}$, and independent standard Gaussian vectors
$\xb_{\mathrm{gen}}\in\R^d$ and $\xib\in\R^n$.
Let $\Qb\in\R^{d\times r}$ be the orthonormal factor from a thin QR
factorization of $\Gb$, and let $\Db\in\R^{r\times r}$ be
diagonal with $\Db_{kk}=1-(k-1)/d$ for $k=1,\ldots,r$. The fixed design and
response are
\[
 \Ab=\Mb\Db\Qb^{\ts}+0.05\Eb,
 \qquad
 \bb=\Ab\xb_{\mathrm{gen}}+5\xib.
\]
A NumPy random generator with seed $42$ generates these arrays separately
from the sketching streams. The same design and response are used for all
methods and all regularization values.

\begin{figure}[t]
 \centering
 \includegraphics[width=\linewidth]{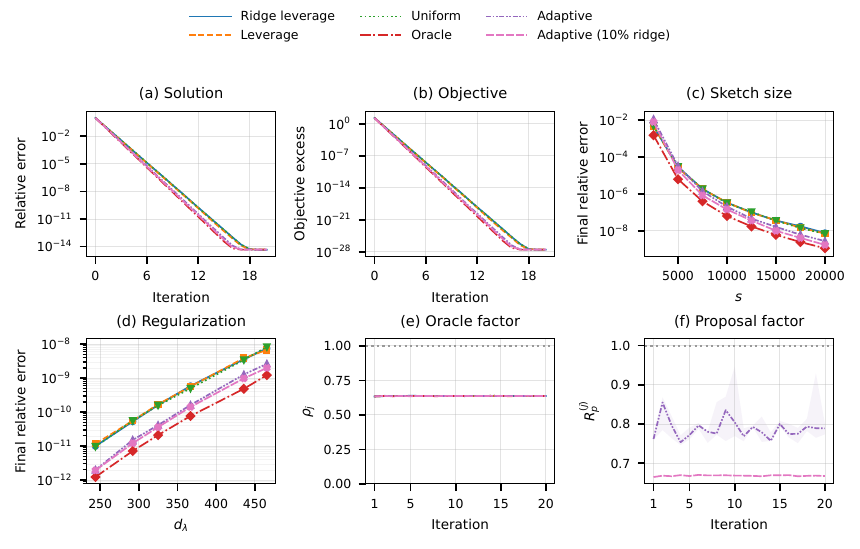}
 \caption{Synthetic comparisons with fresh sketches: solution error and
 normalized objective excess against iteration, final error
 against the column budget and effective degrees of freedom, and oracle
 and proposal factors along the same trajectories. Curves and
 bands are medians and interquartile ranges over 5 trials. Settings are
 given in Table~\ref{tab:experiment-config}. Panel (e) shows all six methods;
 (f) shows adaptive sampling and its mixture. }
 \label{fig:synth}
\end{figure}

\paragraph{Fixed and fresh sketches.}
The separate ARCENE comparison uses $\lambda=10$, 25 updates, and
$s\in\{200,400,800,1600\}$, with 15 trials and master seed $0$.
The reused sketch row contains uniform, leverage, and ridge leverage
sampling. The fresh sketch row contains these three laws and the oracle,
adaptive, and mixture proposals. These rows isolate the effect of refreshing
each baseline distribution, while also showing the added probabilities.

\begin{figure}[t]
 \centering
 \includegraphics[width=\linewidth]{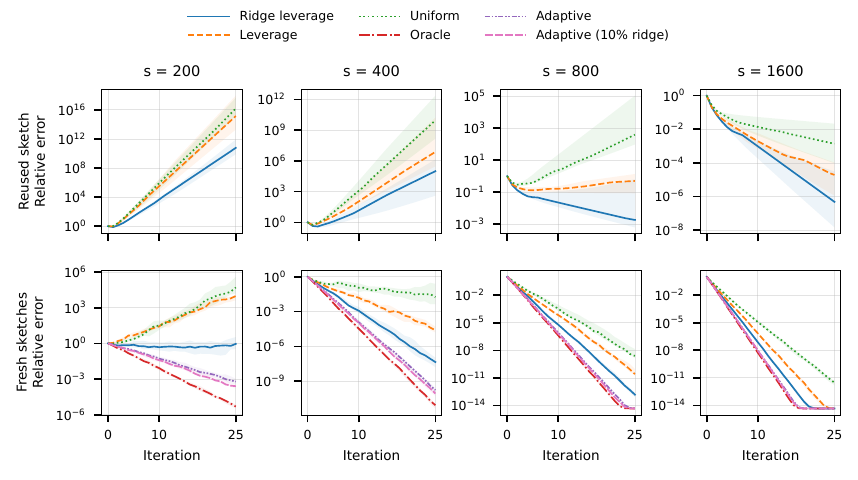}
 \caption{Reused (top) and fresh (bottom) sketches on ARCENE. Columns use
 $s=200,400,800,1600$; all panels use $\lambda=10$ and 25 updates.
 Lines and bands are
 medians and interquartile ranges over 15 trials. Both rows include
 the three baseline laws; the fresh row also includes our oracle,
 adaptive and mixture proposals. Panels use separate vertical scales.}
 \label{fig:fresh}
\end{figure}

\subsection{Ridge probes on Qwen2.5 representations}
\label{app:experiment-qwen}
\label{app:qwen-settings}
We use frozen Qwen2.5-32B-Instruct representations
\citep{qwen2025technical} of $N=100$ STS Benchmark training examples
\citep{cer2017stsb}, with similarity scores from 0 to 5 as labels.
We concatenate representations from 20 layers at the final retained
non-padding token. Each layer contributes $5120$ coordinates, giving
$d=102{,}400$ features per example.
The layer indices, in concatenation order, are
\[
64,32,16,48,8,24,40,56,4,12,20,28,36,44,52,60,6,22,38,54.
\]

\paragraph{Centering and the fitted intercept.}
Let $\Fb\in\R^{N\times d}$ denote the raw training feature matrix,
$\yb\in\R^N$ its labels, and $\one\in\R^N$ the vector of ones.
We convert the stored features and labels from FP32 to FP64 before
computing the training means
$\mub=\Fb^{\ts}\one/N$ and $\bar y=\one^{\ts}\yb/N$.
The centered design and labels are
\[
\Ab=\Fb-\one\mub^{\ts},\qquad
\bb=\yb-\bar y\one.
\]
This accounts for an unpenalized intercept
$\bar y-\mub^{\ts}\xb$.
The centered design has numerical rank $99$; positive regularization
keeps both the exact and sketched ridge systems positive definite.

\paragraph{Regularization selection.}
We select $\lambda$ by five-fold cross-validation on the 100 training
examples, using shuffled folds with seed 2026 and the candidate grid
$\{10^{-4+k/2}:k=0,\ldots,24\}$.
The penalty $\lambda\|\xb\|_2^2$ is added to the sum of squared errors.
Within each fold, features and labels are centered using training-fold
means only.
We choose the candidate minimizing the mean fold MSE, giving
$\lambda=\QwenLambda$, and refit on all 100 training examples.

\paragraph{Sampler comparisons and settings.}
Figure~\ref{fig:qwen} compares all six sampling methods on this one design,
using 30 trials and master seed $20260922$. Panels (a,b,e,f) use
$s=1000$ and 20 updates at the selected regularization value. The
sketch size sweep uses $s\in\{500,1000,2000,4000,8000\}$ and 10 updates;
the regularization sweep uses $\lambda/4,\lambda/2,\lambda,2\lambda,4\lambda$
with $s=1000$ and ten updates. It measures numerical sensitivity around the
selected value without selecting a new regularization parameter.

\paragraph{Data and feature construction.}
We use the official training split of the Semantic Textual Similarity
Benchmark (STS-B)~\citet{cer2017stsb}, accessed through
Hugging Face (\texttt{nyu-mll/glue}, configuration \texttt{stsb}).
We shuffle this split with seed 2026 and retain the first 100 examples.
Each sentence pair is inserted into the following user message:
\begin{quote}
\small\ttfamily
Consider the following two sentences.

Sentence 1: [sentence1]\\
Sentence 2: [sentence2]

Assess their semantic similarity.
\end{quote}
We apply the model's chat template with an assistant generation prefix,
then tokenize without adding further special tokens.
Inputs are right-truncated to 128 tokens and right-padded within batches.
Feature extraction uses 4-bit NF4 weights with FP16 computation;
no response is generated.
No feature variance standardization is applied.
The code repository includes metadata files recording the selected
example IDs and the dataset, model, and tokenizer revision identifiers.

\subsection{Variance factor measurements}
\label{app:experiment-factors}
At iteration $j=1,\ldots,t$, after updating $\bb^{(j)}$ and before drawing
$\Sb^{(j)}$, we evaluate the exact solution
$\xb^{\star(j)}$ to the current residual problem using the
cached SVD. With the ridge leverage weights $w_i$ and effective dimension
$\dlam$ from Section~\ref{sec:setup}, we record
\[
 \rho_j=\frac{\big(\sum_{i=1}^d|\xb_i^{\star(j)}|\sqrt{w_i}\big)^2}
                  {\dlam\|\xb^{\star(j)}\|_2^2},
 \qquad
 R_p^{(j)}=\frac{\sum_{i=1}^d |\xb_i^{\star(j)}|^2w_i/p_i^{(j)}}
                    {\dlam\|\xb^{\star(j)}\|_2^2},
\]
where $p^{(j)}$ is the distribution actually used for that update. The
first quantity is the ideal oracle factor for the current residual; the
second measures the actual proposal relative to ridge proposal. In particular, $R_p^{(j)}=\rho_j$ for
the oracle and $R_p^{(j)}=1$ for ridge leverage sampling. For adaptive
trajectories we measure both using the exact residual solution, rather than
substituting the pilot estimate in the diagnostic. The exact solution is
used only for the oracle sampler and for evaluation; the practical adaptive and mixture
samplers use their pilot estimates. Different methods generally encounter
different residuals after the first update.

\subsection{Gaussian reference for the oracle factor}
\label{app:rho-gaussian}

\begin{proof}[Proof of Proposition~\ref{prop:rho-gaussian}]
Taking expectations over the sampled index gives
\[
\mathbb E_i\zeta^2=1,
\qquad
\mathbb E_i|\zeta|
=
\frac{\sum_i|\xb_i|\sqrt{w_i}}
{\sqrt{\dlam}\,\twonorm{\xb}}
=
\sqrt{\rho(\xb)}.
\]
Since $z\mapsto|z|$ is $1$-Lipschitz and the standard normal
absolute first moment is $\sqrt{2/\pi}$,
\[
\left|\sqrt{\rho(\xb)}-\sqrt{\frac{2}{\pi}}\right|
\le\Delta.
\]
Consequently,
\[
\begin{aligned}
\left|\rho(\xb)-\frac{2}{\pi}\right|
&=
\left|\sqrt{\rho(\xb)}-\sqrt{\frac{2}{\pi}}\right|
\left(\sqrt{\rho(\xb)}+\sqrt{\frac{2}{\pi}}\right)\\
&\le
\Delta\left(2\sqrt{\frac{2}{\pi}}+\Delta\right).
\end{aligned}
\]
\end{proof}

\paragraph{Empirical assessment.}
We compute $\Delta$ for the Qwen2.5 ridge probes to assess departures
of $P_\zeta$ from $\mathcal N(0,1)$.
Figure~\ref{fig:rho-gaussian-w1} shows small distances and an overall
downward trend with increasing model size and feature dimension.

\begin{figure}[t]
\centering
\includegraphics[width=\linewidth]{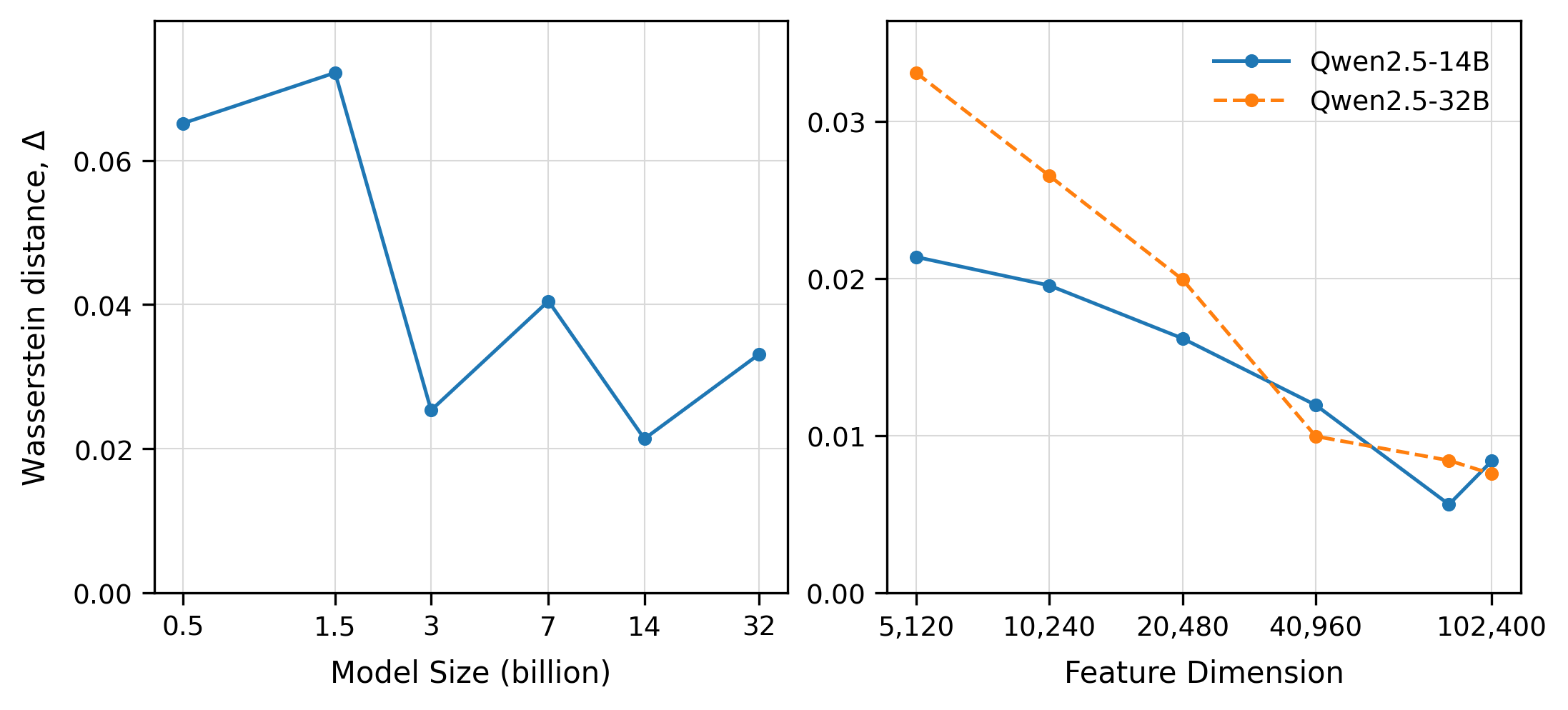}
\caption{Wasserstein distance
$\Delta=W_1(P_\zeta,\mathcal N(0,1))$
for Qwen2.5 ridge probes.
Left: final-layer representations across model sizes.
Right: representations concatenated from 1, 2, 4, 8, 16, or 20 layers
of Qwen2.5-14B and Qwen2.5-32B.
Both horizontal axes are logarithmic.}
\label{fig:rho-gaussian-w1}
\end{figure}
\end{document}